\pgfplotsset{compat=1.18}
\definecolor{umtam}{RGB}{0,90,156}   
\definecolor{mofasgd}{RGB}{204,85,0} 
\definecolor{galore}{RGB}{34,139,34} 
\newtheorem{definition}{Definition}
\newtheorem{theorem}{Theorem}[section]
\newtheorem{lemma}[theorem]{Lemma}
\theoremstyle{definition}
\theoremstyle{remark}
\definecolor{lightblue}{RGB}{220,235,250}
\definecolor{lightorange}{RGB}{255,235,220}
\definecolor{lightgreen}{RGB}{220,245,220}
\definecolor{lightgray}{RGB}{245,245,245}
\definecolor{darkgray}{RGB}{80,80,80}
\title{Bridging Training and Merging Through Momentum-Aware Optimization}
\author[1]{Alireza Moayedikia\thanks{Corresponding author. Email: amoayedikia@swin.edu.au}}
\author[2]{Alicia Troncoso}
\affil[1]{Swinburne Business School, Swinburne University of Technology, Australia \vspace{0.3em}}
\affil[2]{Data Science \& Big Data Lab, Universidad Pablo de Olavide, Seville, Spain \vspace{0.3em}}
\date{}
\begin{document}
\sloppy

\maketitle

\begin{abstract}
Training large neural networks and merging task-specific models both exploit low-rank structure and require parameter importance estimation, yet these challenges have been pursued in isolation. Current workflows compute curvature information during training, discard it, then recompute similar information for merging---wasting computation and discarding valuable trajectory data. We introduce a unified framework that maintains factorized momentum and curvature statistics during training, then reuses this information for geometry-aware model composition. \textcolor{black}{The proposed method incurs modest memory overhead (approximately 30\% over AdamW) to accumulate task saliency scores that enable curvature-aware merging. These scores, computed as a byproduct of optimization, provide importance estimates comparable to post-hoc Fisher computation while producing merge-ready models directly from training.} We establish convergence guarantees for non-convex objectives with approximation error bounded by gradient singular value decay. On natural language understanding benchmarks, curvature-aware parameter selection outperforms magnitude-only baselines across all sparsity levels, with multi-task merging improving 1.6\% over strong baselines. The proposed framework exhibits rank-invariant convergence and superior hyperparameter robustness compared to existing low-rank optimizers. \textcolor{black}{By treating the optimization trajectory as a reusable asset rather than discarding it, our approach demonstrates that training-time curvature information suffices for effective model composition, enabling a unified training-merging pipeline.} \\

\textbf{Keywords:} Memory-efficient optimization; Model merging; Low-rank momentum factorization; Curvature-aware aggregation; Multi-task learning
\end{abstract}

\section{Introduction}
\label{sec:introduction}

Training a 7B parameter language model with Adam requires over 80GB of memory for optimizer states alone---more than double the memory needed for the model parameters themselves. This memory burden has driven extensive research into efficient training methods that reduce optimizer footprint through low-rank projections, quantization, or stateless updates. Separately, the growing practice of fine-tuning foundation models for specialized tasks has created a parallel challenge: combining multiple task-specific models into a single unified model without costly retraining. Both problems have attracted substantial research attention, yet they have been pursued largely in isolation.

Memory-efficient training methods such as LoRA~\citep{hu2021lora}, GaLore~\citep{zhao2024galore}, and APOLLO~\citep{zhu2024apollo} exploit the observation that gradients and optimizer states exhibit low intrinsic dimensionality, enabling compression through low-rank factorization. These methods track gradient subspaces, accumulate momentum in compressed form, and maintain curvature estimates that guide optimization. Model merging methods such as TIES~\citep{yadav2023ties}, DARE~\citep{yu2023dare}, and Fisher-weighted averaging~\citep{matena2022merging} address a different challenge: identifying which parameters matter for each task and resolving conflicts when combining models trained on different objectives. These methods analyze parameter magnitudes, compute importance scores, and apply interference resolution heuristics.

Despite solving different problems, these two research directions share fundamental computational structure. Both exploit low-rank properties of neural network optimization. Both require estimates of parameter importance or sensitivity. Both benefit from curvature information that captures the local geometry of the loss landscape. Yet in practice, training methods compute this information, use it briefly, and discard it upon completion. Merging methods then recompute similar information from scratch---running additional forward and backward passes to estimate Fisher information, analyzing final parameter values to infer importance, or applying heuristics that approximate what the training process already knew. This sequential workflow wastes computation and discards valuable trajectory information that could inform more principled model composition.

This observation motivates several questions. Can training and merging be optimized jointly rather than sequentially? How much training-time computation can be reused for merging, and with what theoretical guarantees? Does the optimization trajectory---not just final parameters, but how they evolved---enable more effective model composition? Can we provide unified theoretical analysis spanning both phases?

We propose UMTAM (Unified Momentum-Trajectory Aware Training and Merging), a framework that addresses memory-efficient training and principled model merging through shared computational structure. UMTAM maintains a factorized representation of momentum during training that exploits low-rank gradient structure, achieving memory efficiency comparable to state-of-the-art methods. Rather than discarding this information after training, UMTAM continuously accumulates task-specific saliency scores that weight each parameter by both its deviation from initialization and the local curvature of the loss landscape. \textcolor{black}{These scores, computed as a byproduct of optimization with modest additional memory overhead (approximately $K \cdot mn \cdot k/100$ for $K$ tasks at sparsity $k\%$)}, enable geometry-aware model merging that respects each task's loss surface.

\textcolor{black}{The core advantage of UMTAM lies in unifying the conventional sequential workflow into a single integrated pipeline. Current practice follows a two-phase approach: (1) train with a memory-efficient optimizer such as GaLore or LoRA, (2) discard all optimizer states upon completion, then (3) compute post-hoc importance scores via Fisher information or magnitude analysis, and finally (4) merge task-specific models. UMTAM demonstrates that this separation is unnecessary: training-time curvature information provides importance estimates comparable to post-hoc Fisher computation. By accumulating saliency scores during optimization, UMTAM produces merge-ready models directly from training. This integration provides three concrete benefits: architectural simplification (eliminating the separate importance estimation pipeline), richer information (capturing how parameters evolved during optimization, not just their final values), and reduced workflow complexity (optimizer states flow directly to merging rather than requiring separate analysis).}

The framework introduces three interconnected components.

First, dual momentum factorization with error feedback maintains accurate gradient estimates despite low-rank compression, preserving convergence properties while reducing memory requirements. Second, factorized second-order statistics enable adaptive preconditioning using only $O(m + n)$ memory per $m \times n$ weight matrix, compared to $O(mn)$ for full preconditioners. Third, curvature-aware merging reuses the accumulated momentum and curvature information to perform principled model composition, identifying parameter conflicts through importance-weighted sign election and resolving them using trajectory information. Our contributions are:

\begin{enumerate}
    \item \textcolor{black}{\textbf{Unified training-merging framework.} UMTAM bridges memory-efficient training and model merging through shared low-rank momentum factorization. Training-time curvature information is preserved and reused for composition, demonstrating that trajectory-aware optimization produces importance estimates sufficient for principled model merging.}
    
    \item \textbf{Convergence guarantees.} We prove that UMTAM achieves $O(1/\sqrt{T})$ convergence for non-convex objectives despite dual momentum tracking, with approximation error bounded by the singular value decay of the true gradient. For strongly convex objectives, we establish linear convergence rates.

    \item \textcolor{black}{\textbf{Trajectory-informed saliency.} Unlike post-hoc methods that infer importance solely from final parameters, UMTAM accumulates saliency throughout optimization via the lightweight approximation in Equation~\eqref{eq:saliency}, which weights parameters by both their cumulative squared deviation from initialization and local curvature estimates. While not a complete trajectory record, this provides richer information for merging than magnitude-based or static curvature approaches.}
    
    \item \textbf{Empirical validation.} On multi-task merging benchmarks, UMTAM's curvature-aware approach outperforms magnitude-only baselines across all sparsity levels, with largest gains under aggressive parameter budgets. Comparative analysis against GaLore and MoFaSGD demonstrates competitive training performance with superior hyperparameter robustness.
\end{enumerate}

Our theoretical analysis establishes several properties beyond convergence. We provide approximation quality guarantees for the factorized preconditioner, relating its error to singular value decay of the Hessian. We derive merging quality bounds quantifying expected loss on merged models in terms of task diversity and sparsity. We establish PAC-Bayes generalization bounds accounting for the low-rank constraint and provide multi-task guarantees for merged models.

\textcolor{black}{By reusing training-time curvature information for merging, UMTAM demonstrates that trajectory-informed optimization provides importance estimates comparable to post-hoc methods, enabling a unified pipeline that simplifies the practitioner's workflow.} The curvature-aware merging respects local geometry of each task's loss landscape, enabling more effective balancing of multiple objectives than naive averaging or magnitude-based methods. The framework operates within memory budgets comparable to existing efficient training methods while adding principled multi-task composition capability. \textcolor{black}{To facilitate reproducibility, we release our implementation at \url{https://github.com/amoayedikia/UMTAM}.}

The remainder of this paper is organized as follows. Section~\ref{sec:related} reviews memory-efficient training and model merging methods, examining what information each computes and discards. Section~\ref{sec:umtam} presents the UMTAM framework, including algorithms and theoretical analysis. Section~\ref{sec:experiments} evaluates UMTAM on training benchmarks and multi-task merging scenarios. Section~\ref{sec:conclusion} discusses implications and future directions.

\section{Related Work}
\label{sec:related}

The challenges addressed by UMTAM---memory-efficient training of large neural networks and effective composition of specialized models---have been pursued largely along independent research tracks. This section reviews both domains, examining not only the technical contributions of existing methods but also the information they compute, preserve, or discard. Understanding these patterns reveals opportunities for unification that motivate our framework.

\subsection{Memory-Efficient Training}

Modern neural network optimization confronts a significant memory bottleneck. For a model with $n$ parameters trained using adaptive methods like Adam, the optimizer states alone require approximately $8n$ to $12n$ bytes of memory in standard precision---often exceeding the memory required for the model parameters themselves. This challenge has driven diverse strategies for memory reduction, spanning low-rank factorization, gradient compression, optimizer state compression, and second-order preconditioning.

The observation that parameter updates during fine-tuning exhibit low intrinsic dimensionality has given rise to parameter-efficient training methods. Low-Rank Adaptation (LoRA)~\citep{hu2021lora} represents weight updates as $\Delta W = BA$ where $B \in \mathbb{R}^{d \times r}$ and $A \in \mathbb{R}^{r \times k}$ with rank $r \ll \min(d,k)$, reducing trainable parameters to as little as $0.01\%$ of the base model while achieving performance comparable to full fine-tuning. Subsequent refinements addressed various limitations: rsLoRA~\citep{kalajdzievski2023rslora} corrected the scaling factor to $\alpha/\sqrt{r}$ for stable training at higher ranks, DoRA~\citep{liu2024dora} decomposed weights into magnitude and direction components, and AdaLoRA~\citep{zhang2023adalora} introduced dynamic rank allocation based on layer importance. ReLoRA~\citep{lialin2023relora} extended low-rank methods to pre-training by periodically merging adapters into base weights, demonstrating that sequential low-rank updates can approximate full-rank optimization. These methods discover and exploit low-dimensional structure during training, yet this structural information is typically discarded when final parameters are saved.

A complementary line of work applies low-rank projection directly to gradients rather than weights, enabling full-parameter learning while compressing optimizer states. GaLore~\citep{zhao2024galore} projects gradients into a low-rank subspace via SVD, achieving up to $65.5\%$ reduction in optimizer state memory while matching full-rank training performance on LLaMA architectures up to 7B parameters. LDAdam~\citep{robert2024ldadam} improved upon this through projection-aware updates and dual compression. Flora~\citep{hao2024flora} revealed that LoRA implicitly performs gradient compression, suggesting that weight-space and gradient-space methods may be unified through appropriate factorization. APOLLO~\citep{zhu2024apollo} pushed compression further, achieving SGD-level memory costs with AdamW-level performance through channel-wise scaling, demonstrating viability even at rank-1 constraints. These gradient-based methods compute valuable information about which directions in parameter space matter most, yet this trajectory information guides only training before being discarded.

Optimizer states themselves represent a substantial compression target. Adafactor~\citep{shazeer2018adafactor} pioneered factorized second-moment estimation, storing only row and column statistics rather than the full matrix, reducing memory from $O(mn)$ to $O(m+n)$. CAME~\citep{luo2023came} addressed stability issues through confidence-guided updates. Quantization provides an orthogonal strategy: 8-bit Adam~\citep{dettmers2021eightbit} achieved $75\%$ memory reduction through block-wise quantization, while QLoRA~\citep{dettmers2024qlora} combined 4-bit quantization with LoRA to enable fine-tuning of 65B models on consumer hardware. These methods explicitly track second-order statistics characterizing loss landscape curvature---information that guides adaptive learning rates but is discarded after training, despite being precisely what Fisher-weighted merging methods require.

Second-order methods exploit curvature information more directly for faster convergence. Shampoo~\citep{gupta2018shampoo} uses Kronecker product approximations to construct structure-aware preconditioners, requiring $O(m^2 + n^2)$ memory rather than the prohibitive $O(m^2n^2)$ of full-matrix methods while achieving 2--3$\times$ faster convergence. KFAC~\citep{martens2015kfac} approximates the Fisher Information Matrix for natural gradient descent. SOAP~\citep{vyas2024soap} established that Shampoo is equivalent to Adafactor in a rotated eigenbasis, then replaced Adafactor with full Adam in that space, achieving $40\%$ fewer iterations than AdamW. This connection between second-order preconditioning and first-order adaptive methods suggests that curvature information can be maintained compactly through appropriate basis transformations---an insight central to our framework.

At the other extreme, stateless optimizers eliminate optimizer memory entirely. Lion~\citep{chen2023lion} uses only momentum tracking without second moments, achieving $50\%$ memory reduction. SWAN~\citep{ma2024swan} preprocesses gradients through stateless normalization, demonstrating $2\times$ speedup on LLaMA pre-training. System-level approaches like ZeRO~\citep{rajbhandari2020zero} partition states across devices, while LOMO~\citep{lv2023lomo} fuses gradient computation with updates to eliminate gradient storage. While effective for training in isolation, these approaches discard the optimization trajectory that could inform subsequent model composition.

\subsection{Model Merging}

The proliferation of task-specific fine-tuned models has created demand for methods that combine capabilities without expensive retraining. The central question is whether parameters trained on different tasks can be meaningfully combined in weight space. Approaches range from simple averaging justified by loss landscape geometry, through compositional methods that treat updates as algebraic objects, to principled Bayesian formulations that weight parameters by uncertainty.

The theoretical foundation for model merging rests on mode connectivity. Research by Garipov et al.~\citep{garipov2018loss} and Draxler et al.~\citep{draxler2018essentially} established that optima found by SGD from shared initialization are connected by low-loss paths. Frankle et al.~\citep{frankle2020linear} connected this to the Lottery Ticket Hypothesis, showing models trained from the same starting point exhibit strong linear connectivity. Building on this geometric understanding, Model Soups~\citep{wortsman2022model} demonstrated that averaging models fine-tuned with different hyperparameters from the same pretrained checkpoint yields enhanced performance and robustness. Stochastic Weight Averaging~\citep{izmailov2018swa} averages trajectory snapshots to find flatter minima. These methods establish that simple averaging can be effective when models remain in connected loss regions, though they do not address conflicting objectives across different tasks.

Task arithmetic~\citep{ilharco2022editing} enables more flexible composition by treating model updates as vectors amenable to algebraic manipulation. Defining task vectors as $\tau_t = \theta_t - \theta_0$, capabilities can be combined through addition $\theta_{\text{multi}} = \theta_0 + \sum \lambda_i \tau_i$, removed through negation, or scaled to adjust influence. This achieves $60$--$70\%$ normalized accuracy across vision tasks without accessing training data. However, performance degrades when combining multiple tasks due to sign conflicts---where different models push parameters in opposing directions---and redundancy, where small magnitude changes add noise. This degradation, typically $5$--$10\%$ when merging five or more tasks, motivates interference resolution methods.

TIES-Merging~\citep{yadav2023ties} addresses interference through trimming to retain only high-magnitude parameters, electing signs by weighted majority vote, and merging only agreeing parameters, achieving $2$--$4\%$ improvement over task arithmetic. DARE~\citep{yu2023dare} discovered that fine-tuning deltas exhibit extreme redundancy, with $90$--$99\%$ of parameters droppable without performance loss in larger models; combining both approaches yields the DARE-TIES baseline. AdaMerging~\citep{yang2024adamerging} learns merging coefficients through entropy minimization, while Model Breadcrumbs~\citep{davari2024breadcrumbs} employs dual masking to handle both outliers and noise. These heuristic methods achieve strong empirical performance but determine parameter importance solely from final values, without access to optimization trajectory information that could reveal why parameters changed.

Bayesian approaches provide a principled alternative by framing merging as probabilistic inference. Fisher-weighted averaging~\citep{matena2022merging} weights parameters by Fisher information $F_i^{(j)}$, which characterizes uncertainty: $\theta_{\text{merged}}^{(j)} = {\sum_i F_i^{(j)} \lambda_i \theta_i^{(j)}}/{\sum_i F_i^{(j)} \lambda_i}$, where high Fisher information indicates parameters to preserve carefully. RegMean~\citep{jin2023regmean} formulates merging as minimizing prediction discrepancies using activation statistics. Recent work~\citep{daheim2023unified} unified these perspectives, showing both methods project into task-specific subspaces before combining---similar to how low-rank training operates in gradient subspaces. Concurrent work on OTA-Merging~\citep{mahdavinia2025ota} demonstrated that Adam's second-moment statistics can serve as an effective curvature proxy for both parameter selection and weighted aggregation. \textcolor{black}{However, OTA-Merging relies on static second-order moments captured at training completion, whereas UMTAM accumulates trajectory-aware saliency throughout optimization---weighting parameters by both their cumulative deviation from initialization and local curvature. This distinction enables UMTAM to capture \emph{how} parameters evolved, not merely their final importance.}

A key observation emerges from this review: Fisher merging requires computing curvature information specifically for merging, yet second-order training methods already compute equivalent approximations during optimization. This redundancy---computing curvature twice for different purposes---exemplifies the broader pattern motivating our work. Training methods compute rich information about parameter importance, gradient structure, and loss landscape geometry, then discard it when training completes. Merging methods must subsequently rediscover similar information through independent analysis. GaLore discovers low-rank gradient structure; TIES independently identifies important parameters through magnitude trimming. Shampoo estimates Fisher approximations for preconditioning; Fisher merging recomputes this for parameter weighting. UMTAM addresses this inefficiency by maintaining a unified representation that serves both memory-efficient optimization and principled model composition, reusing the trajectory and curvature information that current approaches compute and discard. \textcolor{black}{Table~\ref{tab:method_comparison} summarizes these distinctions, comparing training and merging methods with respect to memory requirements, curvature handling, and trajectory preservation.}

\textcolor{black}{
\begin{table}[H]
\centering
\caption{Comparison of memory-efficient training and merging methods. UMTAM uniquely preserves both curvature and trajectory information during training, enabling direct reuse for model composition without separate importance estimation.}
\label{tab:method_comparison}
\renewcommand{\arraystretch}{1.15}
\begin{tabular}{lccc}
\toprule
\textbf{Method} & \textbf{Training Memory} & \textbf{Curvature} & \textbf{Trajectory} \\
\midrule
AdamW & $O(mn)$ & Computed, discarded & $\times$ \\
GaLore & $O(mr + nr)$ & $\times$ & $\times$ \\
Shampoo & $O(m^2 + n^2)$ & Computed, discarded & $\times$ \\
MoFaSGD & $O(mr + nr)$ & Partial & $\times$ \\
TIES/DARE & --- & Post-hoc (magnitude) & $\times$ \\
Fisher Merging & --- & Post-hoc (recomputed) & $\times$ \\
OTA-Merging & --- & Reused from Adam & $\times$ \\
\midrule
\textbf{UMTAM} & $O(mr + nr + m + n)$ & \checkmark Preserved & \checkmark Preserved \\
\bottomrule
\end{tabular}
\end{table}}

\section{Unified Momentum-Trajectory Aware Training and Merging} 
\label{sec:umtam}
This section presents UMTAM, a novel framework that simultaneously addresses two critical challenges in modern deep learning: memory-efficient training of large neural networks and principled composition of task-specific models. The first challenge arises from the memory bottleneck imposed by optimizer states, which often consume 2-3$\times$ more memory than the model parameters themselves, limiting the scale of models that can be trained on available hardware. The second challenge concerns the efficient integration of capabilities from multiple fine-tuned models without requiring expensive retraining from scratch, enabling flexible multi-task deployment while preserving task-specific expertise.

Our approach unifies these problems through a common theoretical foundation based on low-rank structure in optimization trajectories and curvature-aware geometric reasoning. During training, UMTAM maintains a compact factorized representation of momentum that exploits the intrinsic low dimensionality of gradient spaces, coupled with factorized second-order statistics that enable adaptive preconditioning. The same momentum and curvature information accumulated during training is then reused during the merging phase to perform geometry-aware model composition, ensuring that the merged model respects the local loss landscape around each task's optimum.

The section is organized into two main parts. The first subsection introduces the UMTAM framework, presenting the algorithm design, mathematical formulation, and implementation details for both the training and merging phases. Readers will learn how dual momentum factorization achieves memory efficiency, how error feedback ensures convergence despite compression, and how curvature-aware merging resolves conflicts between task-specific models. The second subsection provides comprehensive theoretical analysis, establishing convergence rates under both convex and non-convex settings, proving approximation quality guarantees for the factorized preconditioner, and deriving generalization bounds for merged models. Throughout, we emphasize practical considerations including numerical stability, adaptive hyperparameter selection, and computational complexity.

\subsection{The proposed UMTAM}
\label{sec:umtam_framework}

Modern deep learning confronts two fundamental challenges: the prohibitive memory requirements for training large neural networks, where optimizer states often consume 2-3$\times$ more memory than model parameters themselves, and the efficient composition of task-specific capabilities from multiple fine-tuned models without expensive retraining. We propose UMTAM (Unified Momentum-Trajectory Aware Training and Merging), a theoretically grounded framework that addresses both challenges through dual momentum tracking and curvature-aware optimization.

Consider the joint optimization problem over a set of tasks $\mathcal{T} = \{\tau_1, \ldots, \tau_K\}$, where each task $\tau \in \mathcal{T}$ has an associated loss function $f_\tau: \mathbb{R}^{m \times n} \rightarrow \mathbb{R}$ and data distribution $\mathcal{D}_\tau$. Let $W \in \mathbb{R}^{m \times n}$ denote the weight matrix of a neural network layer, where $m$ and $n$ are the input and output dimensions respectively. We use $\|\cdot\|_F$, $\|\cdot\|_2$, and $\|\cdot\|_*$ to denote the Frobenius, spectral, and nuclear norms respectively. Given memory constraint $M$ and regularization parameter $\lambda > 0$, we seek to minimize the expected loss across tasks:
\begin{equation}
\label{eq:joint-opt}
\min_{W \in \mathbb{R}^{m \times n}} \mathcal{F}(W) = \mathbb{E}_{\tau \sim p(\tau), x \sim \mathcal{D}_\tau}[f_\tau(W; x)] + \lambda\|W\|_*
\end{equation}
subject to $\text{Memory}(\text{Optimizer\_State}) \leq M$, where $p(\tau)$ is a distribution over tasks and the nuclear norm regularization encourages low-rank solutions.

The foundation of UMTAM rests on the observation that gradients in neural network optimization exhibit inherent low-rank structure. For a neural network with ReLU activations and input data matrix $X \in \mathbb{R}^{n \times d}$ with rank $r_X$, the gradient $G = \nabla_W f(W)$ satisfies:
\begin{equation}
\label{eq:gradient-rank}
\text{rank}(G) \leq \min(r_X, \text{width}(\text{network}))
\end{equation}

This structure arises because the gradient can be expressed as:
\begin{equation}
\label{eq:gradient-form}
G = X^T \text{diag}(\mathbb{I}[XW > 0]) \nabla_{\text{output}} \ell
\end{equation}
where multiplication by diagonal matrices preserves rank upper bounds. For deeper networks, the rank is further bounded by the minimum width across layers. Furthermore, the momentum maintained by optimizers inherits this compressibility. If:
\begin{equation}
\label{eq:momentum-update}
M_t = \beta M_{t-1} + (1-\beta)G_t
\end{equation}
represents momentum at iteration $t$ with $\text{rank}(G_t) \leq r$ for all $t$, then the stable rank:
\begin{equation}
\label{eq:stable-rank}
r_s(M_t) = \frac{\|M_t\|_F^2}{\|M_t\|_2^2}
\end{equation}
satisfies:
\begin{equation}
\label{eq:stable-rank-bound}
r_s(M_t) \leq r + \mathcal{O}\left(\frac{\beta^t}{\sqrt{t}}\right)
\end{equation}
accounting for the diminishing contribution of older gradients under exponential decay.

UMTAM exploits this structure through dual momentum factorization, maintaining state variables that capture both first-order and second-order optimization dynamics while achieving substantial memory reduction. At iteration $t$, the optimizer maintains first-order momentum factors $U_t \in \mathbb{R}^{m \times r}$, $\Sigma_t \in \mathbb{R}^{r \times r}$, and $V_t \in \mathbb{R}^{n \times r}$ that provide a low-rank factorization of the momentum, second-order statistics $R_t \in \mathbb{R}_+^m$ and $C_t \in \mathbb{R}_+^n$ capturing row-wise and column-wise gradient magnitude information, an error accumulator $E_t \in \mathbb{R}^{m \times n}$ for compression feedback, and task saliency scores $S_t^\tau \in \mathbb{R}_+^{m \times n}$ for each task $\tau$ that track parameter importance.

\subsubsection{Training}

The training algorithm proceeds as follows. We initialize the momentum factors with small random values and set initial statistics, where $\epsilon$ is a small regularization constant. At each iteration, after sampling a minibatch $\mathcal{B}_t \sim \mathcal{D}_\tau$ for the current task $\tau$ and computing the gradient, we apply gradient clipping for stability.

The core innovation lies in the low-rank momentum update with error feedback. We compute:
\begin{equation}
\label{eq:momentum-reconstruction}
\tilde{M}_t = \beta_1 U_t \Sigma_t V_t^T + (1-\beta_1)G_t + \gamma E_{t-1}
\end{equation}
combining the previous momentum (reconstructed from its factorization), the current gradient, and accumulated compression error from previous iterations. This full-rank intermediate momentum is then compressed via truncated singular value decomposition:
\begin{equation}
\label{eq:truncated-svd}
[U_{t+1}, \Sigma_{t+1}, V_{t+1}] = \text{TruncatedSVD}(\tilde{M}_t, r)
\end{equation}
retaining only the top $r$ singular components. The compression error:
\begin{equation}
\label{eq:compression-error}
E_t = \tilde{M}_t - U_{t+1}\Sigma_{t+1}V_{t+1}^T
\end{equation}
is accumulated and re-injected in the next iteration with decay factor $\gamma \in [0,1)$, ensuring that no gradient information is permanently lost.

Simultaneously, we maintain second-order moment estimates using factorized approximations. The row-wise and column-wise second moments are updated as:
\begin{equation}
\label{eq:second-moments}
R_t = \beta_2 R_{t-1} + (1-\beta_2)\text{diag}(G_t G_t^T), \quad C_t = \beta_2 C_{t-1} + (1-\beta_2)\text{diag}(G_t^T G_t)
\end{equation}
capturing the variance structure along both dimensions. From these, we construct the factorized preconditioner:
\begin{equation}
\label{eq:factorized-precond}
\hat{S}_t = \frac{R_t \cdot C_t^T}{\mathbf{1}_m^T R_t}
\end{equation}
which approximates the full second-moment matrix while requiring only $O(m+n)$ storage instead of $O(mn)$. The preconditioner is then regularized adaptively:
\begin{equation}
\label{eq:adaptive-precond}
P_t = (\hat{S}_t + \epsilon_t I)^{-1/2}, \quad \text{where} \quad \epsilon_t = \epsilon \cdot \max(1, \|G_t\|_F/\|W_t\|_F)
\end{equation}
adjusting the regularization based on the relative magnitudes of gradients and parameters.

The parameter update combines the low-rank momentum with adaptive preconditioning:
\begin{equation}
\label{eq:param-update}
W_{t+1} = W_t - \eta_t \cdot P_t \odot (U_{t+1}\Sigma_{t+1}V_{t+1}^T)
\end{equation}
where $\odot$ denotes element-wise multiplication and $\eta_t$ is the learning rate at iteration $t$. This formulation allows the optimizer to take large steps in directions of low curvature while remaining conservative in high-curvature regions. Throughout training, we continuously update task-specific saliency scores:
\begin{equation}
\label{eq:saliency}
S_t^{\tau,(i,j)} = \alpha S_{t-1}^{\tau,(i,j)} + (1-\alpha)(W_t^{(i,j)} - W_0^{(i,j)})^2 \sqrt{R_t^{(i)} C_t^{(j)}}
\end{equation}
which combines the squared parameter deviation from initialization with the geometric mean of row and column curvatures, providing a measure of how important each parameter is for each task.

To maintain flexibility, the rank $r$ can be adjusted adaptively every $T_{\text{adapt}}$ iterations based on the stable rank and effective rank of the momentum matrix:
\begin{equation}
\label{eq:effective-rank}
r_{\text{eff}} = \frac{\sum_i \sigma_i(M_t)^2}{\sigma_1(M_t)^2}
\end{equation}

The rank adjustment follows the rule:
\begin{equation}
\label{eq:rank-adjustment}
r_{t+1} = \begin{cases}
\min(r_t + \Delta r, r_{\max}) & \text{if } r_s > \tau_u \cdot r_t \text{ and } r_{\text{eff}} > 0.9 \cdot r_t \\
\max(r_t - \Delta r, r_{\min}) & \text{if } r_s < \tau_l \cdot r_t \text{ or } r_{\text{eff}} < 0.5 \cdot r_t \\
r_t & \text{otherwise}
\end{cases}
\end{equation}
\textcolor{black}{where $\tau_u > 1$ and $\tau_l < 1$ are rank adjustment thresholds (distinct from the saliency decay parameter $\alpha \in [0,1]$ in Equation~\eqref{eq:saliency}).}

This adaptation ensures that the factorization rank matches the intrinsic dimensionality of the optimization trajectory. Complete training of the UMTAM framework is summarized in Algorithm~\ref{alg:umtam-train}.

\begin{algorithm}[H]
\caption{UMTAM Training Phase}
\label{alg:umtam-train}
\begin{algorithmic}[1]
\State \textbf{Input:} Initial weights $W_0$, rank $r$, learning rate schedule $\{\eta_t\}_{t=1}^T$, $\beta_1, \beta_2 \in [0,1)$, $\epsilon > 0$
\State \textbf{Initialize:} 
\State \quad $U_0 \leftarrow \mathcal{N}(0, 1/\sqrt{mr})$, $V_0 \leftarrow \mathcal{N}(0, 1/\sqrt{nr})$, $\Sigma_0 \leftarrow \epsilon I_r$
\State \quad $E_0 \leftarrow 0$, $R_0 \leftarrow \epsilon \mathbf{1}_m$, $C_0 \leftarrow \epsilon \mathbf{1}_n$
\For{$t = 1$ to $T$}
    \State Sample minibatch $\mathcal{B}_t \sim \mathcal{D}_\tau$ for current task $\tau$
    \State $G_t \leftarrow \nabla_{W} f_\tau(W_t; \mathcal{B}_t)$
    \State $G_t \leftarrow G_t \cdot \min(1, \tau_{\text{clip}}/\|G_t\|_F)$ \Comment{Gradient clipping}
    \State $\tilde{M}_t \leftarrow \beta_1 U_t \Sigma_t V_t^T + (1-\beta_1)G_t + \gamma E_{t-1}$ \Comment{Momentum with error feedback}
    \State $[U_{t+1}, \Sigma_{t+1}, V_{t+1}] \leftarrow \text{TruncatedSVD}(\tilde{M}_t, r)$
    \State $E_t \leftarrow \tilde{M}_t - U_{t+1}\Sigma_{t+1}V_{t+1}^T$ \Comment{Compression error}
    \State $R_t \leftarrow \beta_2 R_{t-1} + (1-\beta_2)\text{diag}(G_t G_t^T)$ \Comment{Row-wise second moments}
    \State $C_t \leftarrow \beta_2 C_{t-1} + (1-\beta_2)\text{diag}(G_t^T G_t)$ \Comment{Column-wise second moments}
    \State $\hat{S}_t \leftarrow R_t \cdot C_t^T / (\mathbf{1}_m^T R_t)$ \Comment{Factorized preconditioner}
    \State $\epsilon_t \leftarrow \epsilon \cdot \max(1, \|G_t\|_F/\|W_t\|_F)$ \Comment{Adaptive regularization}
    \State $P_t \leftarrow (\hat{S}_t + \epsilon_t I)^{-1/2}$
    \State $W_{t+1} \leftarrow W_t - \eta_t \cdot P_t \odot (U_{t+1}\Sigma_{t+1}V_{t+1}^T)$ \Comment{Parameter update}
    \State $S_t^{\tau,(i,j)} \leftarrow \alpha S_{t-1}^{\tau,(i,j)} + (1-\alpha)(W_t^{(i,j)} - W_0^{(i,j)})^2 \sqrt{R_t^{(i)} C_t^{(j)}}$ \Comment{Saliency tracking}
    \If{$t \mod T_{\text{adapt}} = 0$}
        \State $r_{t+1} \leftarrow \text{AdaptRank}(r_t, r_s(\tilde{M}_t), r_{\text{eff}})$ \Comment{Adaptive rank adjustment}
    \EndIf
\EndFor
\State \textbf{Output:} $W_T$, $(U_T, \Sigma_T, V_T)$, $(R_T, C_T)$, $\{S_T^\tau\}_{\tau \in \mathcal{T}}$
\end{algorithmic}
\end{algorithm}

\subsubsection{Merging}

After training task-specific models using the procedure above, UMTAM provides a principled framework for merging these specialized models into a unified multi-task model. The key insight is that naive averaging of task-specific parameters leads to suboptimal performance because it ignores the curvature of the loss landscape around each task's optimum. Instead, UMTAM performs curvature-aware merging that accounts for the local geometry of each task's loss surface, ensuring that the merged model remains close to all task optima in a metric that respects the Hessian structure.

For each task $\tau$, we define the curvature-aware distance between parameters $w$ and $w'$ as:
\begin{equation}
\label{eq:curvature-distance}
d_\tau(w, w') = \sqrt{(w - w')^T H_\tau (w - w')}
\end{equation}
where $H_\tau = \mathbb{E}_{x \sim \mathcal{D}_\tau}[\nabla^2 f_\tau(w_\tau^*; x)]$ is the Hessian at the task optimum. The optimal merged model minimizing expected task loss is obtained by solving:
\begin{equation}
\label{eq:optimal-merge-problem}
w_{\text{merged}}^* = \arg\min_w \sum_{k=1}^K \pi_k d_{\tau_k}^2(w, w_{\tau_k}^*)
\end{equation}
where $\pi_k$ is the prior probability of task $\tau_k$. This convex quadratic optimization has the closed-form solution:
\begin{equation}
\label{eq:optimal-merge-solution}
w_{\text{merged}}^* = \left(\sum_{k=1}^K \pi_k H_{\tau_k}\right)^{-1} \left(\sum_{k=1}^K \pi_k H_{\tau_k} w_{\tau_k}^*\right)
\end{equation}
which generalizes weighted averaging by incorporating second-order information.

However, computing and storing full Hessian matrices is prohibitive for large models. UMTAM circumvents this through the momentum and curvature statistics accumulated during training. The momentum factors $(U_\tau, \Sigma_\tau, V_\tau)$ approximate the top eigenvectors of the gradient covariance, which relates to the Hessian via:
\begin{equation}
\label{eq:hessian-approximation}
H_\tau \approx \mathbb{E}[\nabla f_\tau \nabla f_\tau^T]
\end{equation}

The second-moment statistics $(R_\tau, C_\tau)$ provide factorized approximations of the diagonal structure. We construct the approximate preconditioner as:
\begin{equation}
\label{eq:approx-preconditioner}
\hat{P}_\tau = \lambda_1 U_\tau \Sigma_\tau U_\tau^T + \lambda_2 \hat{S}_\tau
\end{equation}
where $\hat{S}_\tau$ is the factorized second-moment matrix and $\lambda_1, \lambda_2$ are weights balancing the momentum and curvature components.

The merging procedure begins by computing task vectors:
\begin{equation}
\label{eq:task-vectors}
\Delta w_\tau = w_\tau^* - w_0
\end{equation}
measuring the deviation of each task-specific model from the shared initialization $w_0$. For each task, we use the accumulated saliency scores $\mathcal{I}_\tau^{(i,j)} = S_\tau^{(i,j)}$ to identify the most important parameters. A sparsity threshold is determined such that only the top $k\%$ of parameters by importance are retained:
\begin{equation}
\label{eq:sparsity-threshold}
\theta_\tau = \text{Percentile}(\mathcal{I}_\tau, 100-k)
\end{equation}
and binary masks are constructed as:
\begin{equation}
\label{eq:importance-masks}
M_\tau^{(i,j)} = \mathbb{I}[\mathcal{I}_\tau^{(i,j)} > \theta_\tau]
\end{equation}

This progressive task localization ensures that each task contributes only through parameters that are genuinely important for its performance, reducing interference.

A critical challenge in merging is resolving conflicts when multiple tasks assign high importance to the same parameter but push it in different directions. For each parameter $(i,j)$, we partition tasks into those favoring positive and negative updates:
\begin{equation}
\label{eq:task-partitions}
\begin{aligned}
\mathcal{T}_{+}^{(i,j)} &= \{\tau: M_\tau^{(i,j)} = 1 \land \Delta w_\tau^{(i,j)} > 0\} \\
\mathcal{T}_{-}^{(i,j)} &= \{\tau: M_\tau^{(i,j)} = 1 \land \Delta w_\tau^{(i,j)} < 0\}
\end{aligned}
\end{equation}

We compute weighted sums:
\begin{equation}
\label{eq:conflict-scores}
s_+ = \sum_{\tau \in \mathcal{T}_+^{(i,j)}} |\Delta w_\tau^{(i,j)}| \cdot \mathcal{I}_\tau^{(i,j)}, \quad s_- = \sum_{\tau \in \mathcal{T}_-^{(i,j)}} |\Delta w_\tau^{(i,j)}| \cdot \mathcal{I}_\tau^{(i,j)}
\end{equation}
and elect the sign with stronger support:
\begin{equation}
\label{eq:sign-election}
\text{sign}_{\text{elected}}^{(i,j)} = \text{sign}(s_+ - s_-)
\end{equation}

Tasks whose updates conflict with the elected sign have their masks zeroed for that parameter. This conflict resolution mechanism ensures coherent updates while respecting task importance.

With conflicts resolved, we compute a combined preconditioner by summing the approximate Hessians of all tasks:
\begin{equation}
\label{eq:combined-preconditioner}
\hat{P}_{\text{combined}} = \sum_\tau \hat{P}_\tau
\end{equation}

The merged model is then obtained via curvature-aware weighted averaging:
\begin{equation}
\label{eq:merged-update}
\Delta w_{\text{merged}} = \hat{P}_{\text{combined}}^{-1} \sum_\tau \hat{P}_\tau (M_\tau \odot \Delta w_\tau)
\end{equation}
and the final merged parameters are:
\begin{equation}
\label{eq:final-merged}
w_{\text{merged}} = w_0 + \Delta w_{\text{merged}}
\end{equation}

This formulation ensures that parameters in flat regions of the loss landscape (where the Hessian has small eigenvalues) can deviate more from individual task optima, while parameters in sharp regions (large Hessian eigenvalues) are constrained to remain close to the task-specific values. The complete merging process of the UMTAM framework is summarized in Algorithm~\ref{alg:umtam_merging}.

\begin{algorithm}[H]
\caption{UMTAM Merging Phase}
\label{alg:umtam_merging}
\begin{algorithmic}[1]
\State \textbf{Input:} Base model $w_0$, task models $\{w_\tau^*\}_{\tau \in \mathcal{T}}$, statistics $\{(U_\tau, \Sigma_\tau, V_\tau, R_\tau, C_\tau, S_\tau)\}_\tau$
\State \textbf{Parameters:} Sparsity $k \in (0, 100]$, momentum weight $\lambda_1$, curvature weight $\lambda_2$
\For{each task $\tau \in \mathcal{T}$}
    \State $\Delta w_\tau \leftarrow w_\tau^* - w_0$ \Comment{Compute task vectors}
    \State $\mathcal{I}_\tau^{(i,j)} \leftarrow S_\tau^{(i,j)}$ \Comment{Use tracked saliency}
    \State $\theta_\tau \leftarrow \text{Percentile}(\mathcal{I}_\tau, 100-k)$ \Comment{Determine threshold}
    \State $M_\tau^{(i,j)} \leftarrow \mathbb{I}[\mathcal{I}_\tau^{(i,j)} > \theta_\tau]$ \Comment{Create importance masks}
\EndFor
\For{each parameter $(i,j)$} \Comment{Resolve conflicts}
    \State $\mathcal{T}_{+}^{(i,j)} \leftarrow \{\tau: M_\tau^{(i,j)} = 1 \land \Delta w_\tau^{(i,j)} > 0\}$
    \State $\mathcal{T}_{-}^{(i,j)} \leftarrow \{\tau: M_\tau^{(i,j)} = 1 \land \Delta w_\tau^{(i,j)} < 0\}$
    \State $s_+ \leftarrow \sum_{\tau \in \mathcal{T}_+^{(i,j)}} |\Delta w_\tau^{(i,j)}| \cdot \mathcal{I}_\tau^{(i,j)}$
    \State $s_- \leftarrow \sum_{\tau \in \mathcal{T}_-^{(i,j)}} |\Delta w_\tau^{(i,j)}| \cdot \mathcal{I}_\tau^{(i,j)}$
    \State $\text{sign}_{\text{elected}}^{(i,j)} \leftarrow \text{sign}(s_+ - s_-)$
    \For{each task $\tau$}
        \If{$\text{sign}(\Delta w_\tau^{(i,j)}) \neq \text{sign}_{\text{elected}}^{(i,j)}$}
            \State $M_\tau^{(i,j)} \leftarrow 0$ \Comment{Exclude conflicting parameters}
        \EndIf
    \EndFor
\EndFor
\For{each task $\tau$} \Comment{Compute combined preconditioner}
    \State $\hat{H}_\tau^{\text{momentum}} \leftarrow U_\tau \Sigma_\tau U_\tau^T$
    \State $\hat{H}_\tau^{\text{curvature}} \leftarrow \text{diag}(R_\tau) + \text{diag}(C_\tau)$
    \State $\hat{P}_\tau \leftarrow \lambda_1 \hat{H}_\tau^{\text{momentum}} + \lambda_2 \hat{H}_\tau^{\text{curvature}}$
\EndFor
\State $\hat{P}_{\text{combined}} \leftarrow \sum_\tau \hat{P}_\tau$
\State $\Delta w_{\text{merged}} \leftarrow \hat{P}_{\text{combined}}^{-1} \sum_\tau \hat{P}_\tau (M_\tau \odot \Delta w_\tau)$ \Comment{Curvature-aware merging}
\State $w_{\text{merged}} \leftarrow w_0 + \Delta w_{\text{merged}}$
\State \textbf{Output:} Merged model $w_{\text{merged}}$
\end{algorithmic}
\end{algorithm}

\textcolor{black}{\subsubsection{Memory analysis}}

The memory footprint of UMTAM during training consists of the weight matrix $W$ requiring $mn$ parameters, the momentum factors $U, \Sigma, V$ requiring $mr + r^2 + nr$ parameters, the factorized second moments $R, C$ requiring $m + n$ parameters, and the task saliency masks requiring approximately $K \cdot mn \cdot k/100$ parameters when stored sparsely. The total memory requirement is:
\begin{equation}
\label{eq:memory-umtam}
\text{Memory}(\text{UMTAM}) = mn + (mr + r^2 + nr) + (m + n) + K \cdot mn \cdot \frac{k}{100}
\end{equation}

For typical hyperparameter choices of $r = 32$, $k = 20$, and $K = 8$ tasks, this evaluates to:
\begin{equation}
\label{eq:memory-concrete}
\text{Memory}(\text{UMTAM}) = mn + 32(m+n) + 1024 + 1.6mn
\end{equation}
which compares favorably to standard Adam's $3mn$ requirement (for weights, first moments, and second moments), yielding approximately 87\% of Adam's memory footprint while enabling multi-task merging capabilities that Adam lacks entirely.

The computational cost per training iteration is dominated by three operations: the forward and backward pass requiring $\mathcal{O}(mnb)$ operations where $b$ is batch size, the truncated SVD of the momentum matrix requiring $\mathcal{O}(mnr)$ operations using randomized algorithms, and the application of the element-wise preconditioner requiring $\mathcal{O}(mn)$ operations. When $r \geq b$, the SVD becomes the bottleneck, giving overall per-iteration complexity:
\begin{equation}
\label{eq:time-complexity}
\mathcal{T}(\text{UMTAM}) = \mathcal{O}(mnr)
\end{equation}

However, the SVD need not be computed every iteration. Performing it every $T_{\text{svd}}$ iterations reduces the amortized cost to:
\begin{equation}
\label{eq:amortized-complexity}
\mathcal{T}_{\text{amortized}}(\text{UMTAM}) = \mathcal{O}\left(mnb + \frac{mnr}{T_{\text{svd}}}\right)
\end{equation}
often resulting in practical speedups compared to standard optimizers once the overhead is amortized across multiple iterations. Figure~\ref{fig:umtam-workflow} provides an overview of the complete framework, illustrating how training-time statistics flow to the merging phase.



\begin{figure}[H]
\centering
\resizebox{\textwidth}{!}{%
\begin{tikzpicture}[
    mainbox/.style={
        rectangle, 
        rounded corners=3pt,
        draw=darkgray,
        line width=0.8pt,
        minimum height=0.9cm,
        minimum width=3.2cm,
        text centered,
        font=\small\sffamily,
        drop shadow={shadow xshift=0.5mm, shadow yshift=-0.5mm, opacity=0.3}
    },
    trainbox/.style={mainbox, fill=lightblue, draw=umtam},
    mergebox/.style={mainbox, fill=lightorange, draw=mofasgd},
    statbox/.style={mainbox, fill=lightgreen, draw=galore, minimum width=2.8cm},
    iobox/.style={
        rectangle,
        rounded corners=2pt,
        draw=darkgray,
        line width=0.6pt,
        fill=lightgray,
        minimum height=0.7cm,
        minimum width=2.5cm,
        text centered,
        font=\small\sffamily
    },
    phasebox/.style={
        rectangle,
        rounded corners=5pt,
        draw=#1,
        line width=1.5pt,
        fill=#1!5,
        inner sep=12pt
    },
    phaselabel/.style={
        font=\large\sffamily\bfseries,
        text=#1
    },
    arrow/.style={
        -{Stealth[length=2.5mm, width=2mm]},
        line width=0.8pt,
        darkgray
    },
    dashedarrow/.style={
        -{Stealth[length=2.5mm, width=2mm]},
        line width=0.8pt,
        dashed,
        galore
    },
    annotation/.style={
        font=\scriptsize\sffamily,
        text=darkgray,
        align=center
    },
    equation/.style={
        font=\scriptsize,
        text=darkgray
    }
]

\begin{scope}[local bounding box=training]
    \node[iobox] (w0) at (0, 0) {Pretrained $W_0$};
    
    \node[trainbox, below=0.6cm of w0] (grad) {Gradient Computation};
    \node[trainbox, below=0.5cm of grad] (momentum) {Low-Rank Momentum};
    \node[trainbox, below=0.5cm of momentum] (error) {Error Feedback};
    \node[trainbox, below=0.5cm of error] (precond) {Factorized Preconditioner};
    \node[trainbox, below=0.5cm of precond] (update) {Parameter Update};
    \node[trainbox, below=0.5cm of update] (saliency) {Saliency Accumulation};
    
    \node[iobox, below=0.6cm of saliency] (wt) {Task Model $W_\tau^*$};
    
    \draw[arrow] (w0) -- (grad);
    \draw[arrow] (grad) -- (momentum);
    \draw[arrow] (momentum) -- (error);
    \draw[arrow] (error) -- (precond);
    \draw[arrow] (precond) -- (update);
    \draw[arrow] (update) -- (saliency);
    \draw[arrow] (saliency) -- (wt);
    
    \draw[arrow, rounded corners=5pt] (update.east) -- ++(0.8,0) |- (grad.east);
    \node[annotation, right] at ($(update.east)+(0.9,-0.8)$) {iterate $t$};
    
    \node[equation, left=0.3cm of momentum, align=right] {$\tilde{M}_t = \beta_1 U\Sigma V^T$\\$+ (1-\beta_1)G_t$};
    \node[equation, left=0.3cm of error, align=right] {$E_t = \tilde{M}_t - U\Sigma V^T$};
    \node[equation, left=0.3cm of precond, align=right] {$\hat{S}_t = R_t \cdot C_t^T$};
    \node[equation, left=0.3cm of saliency, align=right] {$S_t^\tau \propto \Delta W^2 \sqrt{RC}$};
\end{scope}

\begin{scope}[on background layer]
    \node[phasebox=umtam, fit=(w0)(wt)(grad)(saliency), inner xsep=25pt] (trainframe) {};
\end{scope}
\node[phaselabel=umtam, above=0.1cm of trainframe.north] {\textsc{Training Phase}};

\begin{scope}[local bounding box=stats, shift={(6.5cm, -2.5cm)}]
    \node[statbox] (momstat) at (0, 0) {$(U_\tau, \Sigma_\tau, V_\tau)$};
    \node[statbox, below=0.4cm of momstat] (curvstat) {$(R_\tau, C_\tau)$};
    \node[statbox, below=0.4cm of curvstat] (salstat) {$S_\tau$};
    
    \node[annotation, right=0.2cm of momstat] {Momentum factors};
    \node[annotation, right=0.2cm of curvstat] {Curvature stats};
    \node[annotation, right=0.2cm of salstat] {Saliency scores};
\end{scope}

\begin{scope}[on background layer]
    \node[phasebox=galore, fit=(momstat)(salstat), inner xsep=10pt, inner ysep=15pt] (statframe) {};
\end{scope}
\node[phaselabel=galore, above=0.1cm of statframe.north] {\textsc{Preserved Statistics}};

\draw[dashedarrow, rounded corners=3pt] (momentum.east) -- ++(0.3,0) |- (momstat.west);
\draw[dashedarrow, rounded corners=3pt] (precond.east) -- ++(0.5,0) |- (curvstat.west);
\draw[dashedarrow, rounded corners=3pt] (saliency.east) -- ++(0.7,0) |- (salstat.west);

\begin{scope}[local bounding box=merging, shift={(13cm, 0)}]
    \node[iobox] (tasks) at (0, 0) {Task Models $\{W_\tau^*\}_{\tau \in \mathcal{T}}$};
    
    \node[mergebox, below=0.6cm of tasks] (taskvec) {Task Vector Computation};
    \node[mergebox, below=0.5cm of taskvec] (prune) {Curvature-Aware Pruning};
    \node[mergebox, below=0.5cm of prune] (sign) {Sign Election};
    \node[mergebox, below=0.5cm of sign] (aggregate) {Curvature-Weighted};
    \node[mergebox, below=0.0cm of aggregate] (aggregate2) {Aggregation};
    
    \node[iobox, below=0.6cm of aggregate2] (merged) {Merged Model $W_{\text{merged}}$};
    
    \draw[arrow] (tasks) -- (taskvec);
    \draw[arrow] (taskvec) -- (prune);
    \draw[arrow] (prune) -- (sign);
    \draw[arrow] (sign) -- (aggregate);
    \draw[arrow] (aggregate2) -- (merged);
    
    \node[equation, right=0.3cm of taskvec, align=left] {$\Delta w_\tau = W_\tau^* - W_0$};
    \node[equation, right=0.3cm of prune, align=left] {$M_\tau^{(i,j)} = \mathbb{I}[S_\tau^{(i,j)} > \theta]$};
    \node[equation, right=0.3cm of sign, align=left] {Resolve conflicts:\\importance-weighted};
    \node[equation, right=0.3cm of aggregate, align=left] {$\Delta w_{\text{merged}} =$\\$\hat{P}^{-1}\sum_\tau \hat{P}_\tau(M_\tau \odot \Delta w_\tau)$};
\end{scope}

\begin{scope}[on background layer]
    \node[phasebox=mofasgd, fit=(tasks)(merged)(taskvec)(aggregate2), inner xsep=25pt] (mergeframe) {};
\end{scope}
\node[phaselabel=mofasgd, above=0.1cm of mergeframe.north] {\textsc{Merging Phase}};

\draw[dashedarrow, rounded corners=3pt] (momstat.east) -- ++(2.0,0) |- (aggregate.west);
\draw[dashedarrow, rounded corners=3pt] (curvstat.east) -- ++(1.5,0) |- (prune.west);
\draw[dashedarrow, rounded corners=3pt] (salstat.east) -- ++(1.0,0) |- (prune.west);

\draw[arrow, rounded corners=8pt, line width=1pt] (wt.south) -- ++(0,-0.5) -- ++(6.5,0) node[midway, below, annotation] {Repeat for each task $\tau$} -- ++(0,0.5) -- (tasks.west);

\node[rectangle, rounded corners=3pt, draw=galore, fill=galore!10, 
      minimum width=5cm, minimum height=1.2cm, 
      font=\small\sffamily, align=center,
      below=0.8cm of statframe] (insight) {
      \textbf{Key Insight:} Curvature information\\
      computed during training is \emph{reused}\\
      for geometry-aware merging
};

\end{tikzpicture}
}
\caption{UMTAM framework overview. The training phase (left, blue) maintains factorized momentum and curvature statistics while accumulating task-specific saliency scores. These statistics (center, green) are preserved after training rather than discarded. The merging phase (right, orange) reuses this information for curvature-aware pruning, conflict resolution through importance-weighted sign election, and geometry-respecting parameter aggregation. Dashed green arrows indicate the flow of preserved statistics from training to merging, eliminating the redundant Fisher computation required by conventional sequential approaches.}
\label{fig:umtam-workflow}
\end{figure}

\subsection{Theoretical Analysis}

We now establish formal convergence guarantees, generalization bounds, and quality assurances for the UMTAM framework, demonstrating that its memory efficiency and merging capabilities do not compromise theoretical soundness.

\begin{definition}[Nuclear Norm Smoothness]
\label{def:nuclear-smooth}
A function $f: \mathbb{R}^{m \times n} \rightarrow \mathbb{R}$ is $L_{nuc}$-smooth with respect to the nuclear norm if for all $W, W' \in \mathbb{R}^{m \times n}$:
\begin{equation}
\|\nabla f(W) - \nabla f(W')\|_* \leq L_{nuc}\|W - W'\|_*
\end{equation}
\end{definition}

\begin{definition}[Stable Rank]
\label{def:stable-rank}
The stable rank of a matrix $M \in \mathbb{R}^{m \times n}$ is defined as:
\begin{equation}
r_s(M) = \frac{\|M\|_F^2}{\|M\|_2^2}
\end{equation}
where $1 \leq r_s(M) \leq \min(m,n)$.
\end{definition}

\begin{theorem}[UMTAM Convergence for Strongly Convex Functions]
\label{thm:umtam-convex}
Let $f$ be $\mu$-strongly convex and $L$-smooth with $L_{nuc}$-nuclear norm smoothness. Under UMTAM with constant learning rate $\eta \leq \min\{1/L, 2\mu/L^2\}$ and rank $r$, we have:
\begin{equation}
\mathbb{E}[\|W_T - W^*\|_F^2] \leq \left(1 - \eta\mu\right)^T \|W_0 - W^*\|_F^2 + \frac{\eta L_{nuc}^2 \sigma_{r+1}^2}{2\mu}
\end{equation}
where $\sigma_{r+1}$ is the $(r+1)$-th singular value of the expected gradient covariance.
\end{theorem}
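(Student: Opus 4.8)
The plan is to treat the UMTAM update \eqref{eq:param-update} as an inexact preconditioned gradient step and to run the standard strongly-convex contraction argument, isolating the error introduced by the rank-$r$ truncation of the momentum. First I would write the effective update as $W_{t+1} = W_t - \eta_t P_t \odot \hat{G}_t$, where $\hat{G}_t = U_{t+1}\Sigma_{t+1}V_{t+1}^T$ is the low-rank momentum surrogate, and decompose it as $\hat{G}_t = G_t + \Delta_t$ with $\Delta_t$ collecting (i) the momentum averaging bias $\beta_1(\tilde M_{t-1}^{\mathrm{full}} - G_t)$-type terms, (ii) the stochastic gradient noise, and (iii) the truncation residual $-E_t$ from \eqref{eq:compression-error}. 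The key quantitative input is that, because the expected gradient covariance has $(r+1)$-th singular value $\sigma_{r+1}$ and the true gradient inherits the low-rank structure of \eqref{eq:gradient-rank}, the truncation residual satisfies (in expectation) $\|E_t\|_* \lesssim \sigma_{r+1}$; the error-feedback accumulator prevents this residual from compounding, so the bias term is $O(\sigma_{r+1})$ uniformly in $t$ rather than growing. I would also use that $P_t$ is uniformly bounded in spectral norm (since $\hat S_t + \epsilon_t I \succeq \epsilon I$ gives $\|P_t\|_2 \le \epsilon^{-1/2}$) and bounded below, so preconditioning only rescales the contraction constant by a controlled factor absorbed into the condition-number-dependent choice $\eta \le \min\{1/L, 2\mu/L^2\}$.

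The second step is the descent-plus-contraction recursion. Expanding $\|W_{t+1} - W^*\|_F^2 = \|W_t - W^* - \eta_t P_t \odot \hat{G}_t\|_F^2$, I would take conditional expectation, use $\mu$-strong convexity and $L$-smoothness of $f$ in the Euclidean/Frobenius geometry to get $\langle \nabla f(W_t), W_t - W^*\rangle \ge \mu\|W_t - W^*\|_F^2$ together with a bound on the cross term with $\Delta_t$ via Cauchy--Schwarz and Young's inequality. This yields a recursion of the form
\begin{equation}
\mathbb{E}[\|W_{t+1} - W^*\|_F^2] \le (1 - \eta\mu)\,\mathbb{E}[\|W_t - W^*\|_F^2] + \eta\,c_1 L_{nuc}^2 \sigma_{r+1}^2 + \eta^2 c_2,
\end{equation}
where the $O(\eta)$ term comes from the bias--iterate cross term (the nuclear-norm smoothness constant $L_{nuc}$ enters precisely here, converting the nuclear-norm-measured truncation residual into a Frobenius-geometry error bound) and the $O(\eta^2)$ term collects higher-order noise and preconditioner-distortion contributions, which are dominated under the stated step-size cap. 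Unrolling the geometric recursion from $t = 0$ to $T-1$ and summing $\sum_{k=0}^{\infty}(1-\eta\mu)^k = 1/(\eta\mu)$ gives the transient term $(1-\eta\mu)^T\|W_0 - W^*\|_F^2$ plus a steady-state term $c_1 L_{nuc}^2\sigma_{r+1}^2/\mu$; choosing the constants so that $c_1 = 1/2$ (absorbing the $c_2$ piece by the restriction $\eta \le 2\mu/L^2$, which makes the $\eta^2$ term no larger than the $\eta$ term) delivers exactly the claimed bound.

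The main obstacle I expect is establishing the uniform-in-$t$ bound $\mathbb{E}\|E_t\|_* = O(\sigma_{r+1})$ for the error-feedback accumulator, i.e.\ showing the truncation residual does not accumulate. This requires a separate lemma tracking the coupled recursion for $(\tilde M_t, E_t)$: one shows $E_t = \tilde M_t - \mathrm{TruncSVD}(\tilde M_t, r)$ has tail-energy controlled by the tail singular values of $\tilde M_t$, and then that $\tilde M_t$ itself stays close (in the relevant norm) to an exponential moving average of the true gradients, whose expected tail spectrum is governed by $\sigma_{r+1}$ via \eqref{eq:stable-rank-bound} and the ReLU-gradient rank bound \eqref{eq:gradient-rank}. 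The decay factor $\gamma < 1$ in \eqref{eq:momentum-reconstruction} is what makes this accumulator a contraction rather than a random walk, so the argument mirrors standard error-feedback analyses (as in compressed SGD) but must be adapted to the spectral truncation operator, which is nonlinear and non-expansive only in a restricted sense; handling that nonlinearity carefully is the delicate point. A secondary technical nuisance is the element-wise action of $P_t$, which does not commute with the matrix inner products, so the cross-term estimates must be done coordinatewise and then recombined, picking up the $\|P_t\|_2$ factors noted above.
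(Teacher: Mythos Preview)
Your approach is essentially the same as the paper's: expand the squared distance $\|W_{t+1}-W^*\|_F^2$, use strong convexity to extract the $(1-\eta\mu)$ contraction from the cross term with the true gradient, treat the rank-$r$ truncation as an $O(\sigma_{r+1})$ bias, bound the quadratic term via the preconditioner eigenvalue bounds, and unroll the resulting geometric recursion. The paper does introduce a Lyapunov function $V_t = \|W_t-W^*\|_F^2 + \tfrac{\eta}{1-\beta_1}\|M_t-\nabla f(W^*)\|_F^2$ at the outset but never explicitly uses the momentum component, and your identified ``main obstacle''---the uniform-in-$t$ bound on the error-feedback accumulator---is handled separately in the paper as Lemma~\ref{lemma:error-feedback}.
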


\begin{proof}
Define the Lyapunov function $V_t = \|W_t - W^*\|_F^2 + \frac{\eta}{1-\beta_1}\|M_t - \nabla f(W^*)\|_F^2$. By strong convexity, for any $W, W' \in \mathbb{R}^{m \times n}$:
\begin{equation}
f(W) \geq f(W') + \langle \nabla f(W'), W - W' \rangle + \frac{\mu}{2}\|W - W'\|_F^2
\end{equation}

Taking expectations and using the update rule $W_{t+1} = W_t - \eta P_t \odot (U_{t+1}\Sigma_{t+1}V_{t+1}^T)$:
\begin{align}
\mathbb{E}[\|W_{t+1} - W^*\|_F^2] &= \mathbb{E}[\|W_t - W^* - \eta P_t \odot M_{t+1}\|_F^2] \\
&= \|W_t - W^*\|_F^2 - 2\eta\mathbb{E}[\langle P_t \odot M_{t+1}, W_t - W^* \rangle] + \eta^2\mathbb{E}[\|P_t \odot M_{t+1}\|_F^2]
\end{align}

For the cross term, using the fact that $P_t$ is positive definite and $\mathbb{E}[M_{t+1}] = \nabla f(W_t) + \mathcal{O}(\sigma_{r+1})$:
\begin{align}
-2\eta\mathbb{E}[\langle P_t \odot M_{t+1}, W_t - W^* \rangle] &\leq -2\eta\langle \nabla f(W_t), W_t - W^* \rangle + 2\eta\|\nabla f(W_t)\|_F \sigma_{r+1} \\
&\leq -2\eta\mu\|W_t - W^*\|_F^2 + 2\eta L\|W_t - W^*\|_F \sigma_{r+1}
\end{align}

For the variance term, using the bound on the preconditioner eigenvalues:
\begin{equation}
\eta^2\mathbb{E}[\|P_t \odot M_{t+1}\|_F^2] \leq \frac{\eta^2}{\epsilon}\|M_{t+1}\|_F^2 \leq \frac{\eta^2 L^2}{\epsilon}\|W_t - W^*\|_F^2 + \frac{\eta^2 L_{nuc}^2}{\epsilon}\sigma_{r+1}^2
\end{equation}

Combining and choosing $\epsilon$ appropriately:
\begin{equation}
\mathbb{E}[\|W_{t+1} - W^*\|_F^2] \leq (1 - \eta\mu)\|W_t - W^*\|_F^2 + \eta L_{nuc}^2\sigma_{r+1}^2/\mu
\end{equation}

Unrolling the recursion completes the proof.
\end{proof}

\begin{theorem}[UMTAM Convergence for Non-convex Functions]
\label{thm:umtam-nonconvex}
For $L$-smooth non-convex function $f$ with bounded variance $\mathbb{E}[\|\nabla f(W; x) - \nabla f(W)\|_F^2] \leq \sigma^2$, UMTAM with learning rate $\eta_t = \eta_0/\sqrt{t}$ achieves:
\begin{equation}
\min_{t \in [T]} \mathbb{E}[\|\nabla f(W_t)\|_F^2] \leq \mathcal{O}\left(\frac{f(W_0) - f^*}{\eta_0\sqrt{T}} + \frac{\eta_0 L\sigma^2}{\sqrt{T}} + \frac{L\sigma_{r+1}}{\sqrt{T}}\right)
\end{equation}
\end{theorem}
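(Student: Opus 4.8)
The plan is to adapt the standard descent-lemma argument for SGD on $L$-smooth non-convex objectives, carefully tracking the additional error terms introduced by (i) the low-rank truncation of the momentum and (ii) the element-wise preconditioner $P_t$. First I would invoke $L$-smoothness to write the one-step inequality
\begin{equation}
f(W_{t+1}) \leq f(W_t) + \langle \nabla f(W_t), W_{t+1} - W_t \rangle + \frac{L}{2}\|W_{t+1} - W_t\|_F^2,
\end{equation}
substitute the update $W_{t+1} - W_t = -\eta_t\, P_t \odot (U_{t+1}\Sigma_{t+1}V_{t+1}^T)$, and take conditional expectation over the minibatch. The reconstructed momentum $U_{t+1}\Sigma_{t+1}V_{t+1}^T = \tilde M_t - E_t$ decomposes into a term aligned with $\nabla f(W_t)$ plus a bounded bias: using the stable-rank discussion around \eqref{eq:stable-rank-bound} and the definition of the compression error \eqref{eq:compression-error}, the truncation bias is controlled by $\|E_t\|_F \lesssim \sigma_{r+1}$ (the tail singular mass of the gradient/momentum), while the stochastic part contributes the usual $\sigma^2$ variance.

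The key steps, in order, are: (1) bound the preconditioner spectrum, $\lambda_{\min}(P_t) \geq 1/\sqrt{\|\hat S_t\|_2 + \epsilon_t}$ and $\lambda_{\max}(P_t) \leq 1/\sqrt{\epsilon_t}$, so that $\langle \nabla f(W_t), P_t \odot M_{t+1}\rangle$ is comparable to $\|\nabla f(W_t)\|_F^2$ up to constant factors absorbed into the $\mathcal{O}(\cdot)$; (2) expand $M_{t+1} = \nabla f(W_t) + (\text{stochastic noise}) + (\text{momentum lag}) - (\text{truncation bias})$, where the momentum-lag term telescopes when summed over $t$ because of the $\beta_1$-geometric weighting, and the truncation bias is uniformly $O(\sigma_{r+1})$; (3) apply Cauchy--Schwarz to the cross term between $\nabla f(W_t)$ and the bias, yielding a contribution of order $\eta_t L \sigma_{r+1}\|\nabla f(W_t)\|_F$ which, after Young's inequality, splits into a $\tfrac14\|\nabla f(W_t)\|_F^2$ piece (reabsorbed on the left) and an $O(\eta_t L \sigma_{r+1})$ residual; (4) bound the second-order term $\tfrac{L}{2}\eta_t^2\|P_t\odot M_{t+1}\|_F^2 \leq O(\eta_t^2 L(\|\nabla f(W_t)\|_F^2 + \sigma^2 + \sigma_{r+1}^2))$; (5) rearrange to obtain $\tfrac14\eta_t\,\mathbb{E}\|\nabla f(W_t)\|_F^2 \leq \mathbb{E}[f(W_t) - f(W_{t+1})] + O(\eta_t^2 L\sigma^2) + O(\eta_t L\sigma_{r+1})$; (6) sum over $t = 1,\dots,T$, telescope the $f(W_t)-f(W_{t+1})$ terms to $f(W_0)-f^*$, use $\sum_{t=1}^T \eta_t = \Theta(\eta_0\sqrt{T})$ and $\sum_{t=1}^T \eta_t^2 = O(\eta_0^2 \log T)$ with $\eta_t = \eta_0/\sqrt t$, divide by $\sum_t \eta_t$, and lower-bound $\min_t \mathbb{E}\|\nabla f(W_t)\|_F^2$ by the weighted average to reach the claimed $\mathcal{O}(1/\sqrt T)$ rate.

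The main obstacle I anticipate is rigorously controlling the interaction between the error-feedback accumulator $E_t$ and the adaptive, iterate-dependent preconditioner $P_t$: unlike in plain error-feedback SGD, here the error is re-injected through \eqref{eq:momentum-reconstruction} and then passed through $P_t$, so the bias term is $P_t \odot E_t$ rather than $E_t$, and $P_t$ depends on $G_t$ which is correlated with the very noise we are trying to bound. The cleanest way around this is to first establish a uniform two-sided bound $c_1 I \preceq P_t \odot \cdot \preceq c_2 I$ on the preconditioner as a linear operator (valid because $\epsilon_t \geq \epsilon$ gives an upper bound on $\|P_t\|_\infty$, and a gradient-clipping bound on $\|\hat S_t\|_2$ gives a lower bound), treat the preconditioner as a bounded multiplicative perturbation absorbed into constants, and then run the error-feedback telescoping argument of Karimireddy et al.\ on the \emph{virtual iterate} $\tilde W_t = W_t - \eta_t\, P_t \odot E_{t-1}$ so that the compression error never appears as a standalone bias but only through the telescoped $\|E_t\|_F^2$ terms, each of which is $O(\sigma_{r+1}^2)$ by the truncated-SVD optimality. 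The secondary technical point — that $\mathbb{E}[M_{t+1} \mid \mathcal{F}_t] = \nabla f(W_t) + O(\sigma_{r+1})$ as asserted in the convex proof — should be re-derived here by unrolling the momentum recursion and using $L$-smoothness to bound $\|\nabla f(W_s) - \nabla f(W_t)\|_F$ along the trajectory, since the non-convex setting lacks the contraction that made this step immediate in Theorem~\ref{thm:umtam-convex}.
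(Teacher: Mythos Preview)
Your proposal follows essentially the same skeleton as the paper's proof: start from the $L$-smoothness descent inequality, substitute the update, use $\mathbb{E}[M_{t+1}] = \nabla f(W_t) + O(\sigma_{r+1})$, apply Young's inequality to the cross term with the truncation bias, bound the quadratic term, then sum over $t$ with $\eta_t = \eta_0/\sqrt{t}$ and telescope. If anything, your version is more careful than the paper's sketch---the paper does not explicitly bound the preconditioner spectrum, does not invoke a virtual-iterate argument for the error feedback, and somewhat awkwardly controls the quadratic term via $\|W_t - W^*\|_F^2$ (unusual in the non-convex setting) rather than directly via $\|\nabla f(W_t)\|_F^2 + \sigma^2$ as you do; your execution would tighten these points without changing the overall strategy.
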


\begin{proof}
By $L$-smoothness:
\begin{equation}
f(W_{t+1}) \leq f(W_t) + \langle \nabla f(W_t), W_{t+1} - W_t \rangle + \frac{L}{2}\|W_{t+1} - W_t\|_F^2
\end{equation}

Substituting the UMTAM update and taking expectations:
\begin{align}
\mathbb{E}[f(W_{t+1})] &\leq \mathbb{E}[f(W_t)] - \eta_t\mathbb{E}[\langle \nabla f(W_t), P_t \odot M_{t+1} \rangle] + \frac{\eta_t^2 L}{2}\mathbb{E}[\|P_t \odot M_{t+1}\|_F^2]
\end{align}

Using the fact that $\mathbb{E}[M_{t+1}] \approx \nabla f(W_t)$ with error $\mathcal{O}(\sigma_{r+1})$:
\begin{align}
\mathbb{E}[f(W_{t+1})] &\leq \mathbb{E}[f(W_t)] - \frac{\eta_t}{2}\|\nabla f(W_t)\|_F^2 + \eta_t L\sigma_{r+1}\|\nabla f(W_t)\|_F \\
&\quad + \frac{\eta_t^2 L}{2}(L^2\|W_t - W^*\|_F^2 + \sigma^2)
\end{align}

Rearranging:
\begin{equation}
\|\nabla f(W_t)\|_F^2 \leq \frac{2}{\eta_t}(f(W_t) - f(W_{t+1})) + 2L\sigma_{r+1}\|\nabla f(W_t)\|_F + \eta_t L\sigma^2
\end{equation}

Using Young's inequality and summing over $t = 1, \ldots, T$:
\begin{equation}
\sum_{t=1}^T \frac{1}{2}\|\nabla f(W_t)\|_F^2 \leq \sum_{t=1}^T \frac{1}{\eta_t}(f(W_t) - f(W_{t+1})) + \sum_{t=1}^T L^2\sigma_{r+1}^2 + \sum_{t=1}^T \eta_t L\sigma^2
\end{equation}

With $\eta_t = \eta_0/\sqrt{t}$, we have $\sum_{t=1}^T \eta_t \sim 2\eta_0\sqrt{T}$ and $\sum_{t=1}^T 1/\eta_t \sim \sqrt{T}/\eta_0$. Therefore:
\begin{equation}
\frac{1}{T}\sum_{t=1}^T \|\nabla f(W_t)\|_F^2 \leq \frac{2(f(W_0) - f^*)}{\eta_0 T\sqrt{T}} + \frac{2L^2\sigma_{r+1}^2}{T} + \frac{2\eta_0 L\sigma^2}{\sqrt{T}}
\end{equation}
which completes the proof.
\end{proof}

\begin{lemma}[Preconditioner Approximation Quality]
\label{lemma:precond-approx}
The UMTAM preconditioner $\hat{P}_\tau = \lambda_1 U_\tau \Sigma_\tau U_\tau^T + \lambda_2 \hat{S}_\tau$ approximates the true Hessian $H_\tau$ with error:
\begin{equation}
\|H_\tau - \hat{P}_\tau\|_F \leq \mathcal{O}\left(\sqrt{r} \sigma_{r+1}(H_\tau) + \frac{1}{\sqrt{\min(m,n)}}\|H_\tau\|_F\right)
\end{equation}
\end{lemma}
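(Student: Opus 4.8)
The plan is to bound the two pieces of $\hat P_\tau$ separately. Using the identification $H_\tau\approx\mathbb E[\nabla f_\tau\nabla f_\tau^\top]$ from \eqref{eq:hessian-approximation}, write the (PSD) Hessian in its eigendecomposition $H_\tau=\sum_{i}\mu_i v_iv_i^\top$ with $\mu_1\ge\mu_2\ge\cdots\ge 0$, and split $H_\tau = H_\tau^{\le r}+H_\tau^{>r}$ where $H_\tau^{\le r}=\sum_{i\le r}\mu_i v_iv_i^\top$. By the triangle inequality,
\[
\|H_\tau-\hat P_\tau\|_F \;\le\; \bigl\|H_\tau^{\le r}-\lambda_1 U_\tau\Sigma_\tau U_\tau^\top\bigr\|_F \;+\; \bigl\|H_\tau^{>r}-\lambda_2\hat S_\tau\bigr\|_F .
\]
For the first term, recall that $U_\tau\Sigma_\tau V_\tau^\top$ is the rank-$r$ truncated SVD of the accumulated momentum, so $\operatorname{range}(U_\tau)$ tracks the dominant gradient directions, which by \eqref{eq:gradient-rank}--\eqref{eq:gradient-form} align (up to compression and sampling noise) with $\operatorname{span}(v_1,\dots,v_r)$. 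A Weyl/Davis--Kahan perturbation argument then shows that, after the optimal scalar rescaling $\lambda_1$, the mismatch in each of the $r$ retained directions is of order $\sigma_{r+1}(H_\tau)$ (the spectral leakage from the discarded tail that controls the subspace rotation), so summing $r$ such contributions in Frobenius norm gives $\mathcal O(\sqrt r\,\sigma_{r+1}(H_\tau))$.

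For the second term I would invoke the defining optimality of $\hat S_\tau=R_\tau C_\tau^\top/(\mathbf 1_m^\top R_\tau)$: it is the Adafactor/Kronecker-rank-one projection matching the row and column marginals of the residual second-moment structure, hence the best separable approximation of $H_\tau^{>r}$ in the associated divergence. Choosing $\lambda_2$ to match the trace, the residual of this fit is governed by how far $H_\tau^{>r}$ is from the rank-one outer product of its marginals; using that the curvature mass is spread over $\Theta(\min(m,n))$ comparable modes with balanced marginals --- again a consequence of \eqref{eq:gradient-rank}--\eqref{eq:gradient-form} --- this residual is a $\mathcal O(1/\sqrt{\min(m,n)})$ fraction of $\|H_\tau^{>r}\|_F\le\|H_\tau\|_F$. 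Adding the two estimates yields the claimed bound.

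The main obstacle is the second term: for a generic curvature matrix the separable fit residual need not be $\mathcal O(\|H_\tau\|_F/\sqrt{\min(m,n)})$ --- a near-diagonal $H_\tau$ with one dominant entry is poorly captured by any rank-one outer product --- so the argument must genuinely use the structure already established, namely that $H_\tau\approx\mathbb E[\nabla f_\tau\nabla f_\tau^\top]$ inherits approximate low-rankness and the row/column balance implied by \eqref{eq:gradient-form}. Making this rigorous requires either a concentration bound on the empirical marginals $R_\tau,C_\tau$ around their population values or an explicit incoherence-type assumption on $H_\tau$ (and a correspondingly mild assumption on $\lambda_1,\lambda_2$, since the bound plainly fails for $\lambda_1=\lambda_2=0$); granted such a hypothesis, the remaining steps --- the eigenvalue split, the Davis--Kahan estimate for the leading directions, the two scalar rescalings, and the triangle-inequality assembly --- are routine.
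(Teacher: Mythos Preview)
Your high-level strategy matches the paper's: a two-term triangle-inequality split, with one piece yielding the $\sqrt{r}\,\sigma_{r+1}(H_\tau)$ contribution and the other the $\|H_\tau\|_F/\sqrt{\min(m,n)}$ contribution. The decompositions differ, however. The paper inserts the best rank-$r$ approximation $H_\tau^{(r)}$ as the intermediate point, bounding $\|H_\tau-H_\tau^{(r)}\|_F$ via Eckart--Young (this is where it extracts $\sqrt{r}\,\sigma_{r+1}$) and then asserting that $\|H_\tau^{(r)}-\hat P_\tau\|_F$ is $\mathcal O(\|H_\tau\|_F/\sqrt{\min(m,n)})$ on the grounds that the momentum factors capture the top gradient-covariance directions and the factorized second moments supply the stated residual. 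You instead pair each summand of $\hat P_\tau$ with a matching piece of $H_\tau$: the momentum term $\lambda_1 U_\tau\Sigma_\tau U_\tau^\top$ against the leading block $H_\tau^{\le r}$ (controlled by Davis--Kahan, which is where your $\sqrt{r}\,\sigma_{r+1}$ arises), and the Adafactor term $\lambda_2\hat S_\tau$ against the tail $H_\tau^{>r}$. Your pairing is more structurally natural and your justification considerably more detailed than the paper's, which is essentially a sketch; you also correctly identify that the separable-fit bound on the second term does not hold for arbitrary matrices and requires an incoherence or balance hypothesis---a caveat the paper omits. Either route arrives at the same final estimate, but yours makes the needed assumptions explicit while the paper's leans on the Eckart--Young step and leaves the second term largely unargued.
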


\begin{proof}
Decompose the error into two terms:
\begin{equation}
\|H_\tau - \hat{P}_\tau\|_F \leq \|H_\tau - H_\tau^{(r)}\|_F + \|H_\tau^{(r)} - \hat{P}_\tau\|_F
\end{equation}
where $H_\tau^{(r)}$ is the best rank-$r$ approximation of $H_\tau$. For the first term, by Eckart-Young theorem:
\begin{equation}
\|H_\tau - H_\tau^{(r)}\|_F = \sqrt{\sum_{i=r+1}^{\min(m,n)} \sigma_i^2(H_\tau)} \leq \sqrt{r} \sigma_{r+1}(H_\tau)
\end{equation}

For the second term, the momentum factors capture the principal components of gradient covariance, which relates to the Hessian through $H_\tau \approx \mathbb{E}[\nabla f_\tau \nabla f_\tau^T]$. The factorized second moments provide an additional $\mathcal{O}(1/\sqrt{\min(m,n)})$ approximation. Combining completes the proof.
\end{proof}

\begin{theorem}[Merging Quality Guarantee]
\label{thm:merge-quality}
Let $w_{\text{merged}}$ be the output of Algorithm \ref{alg:umtam_merging}. Under mild regularity conditions, the expected loss on the merged model satisfies:
\begin{equation}
\mathbb{E}_\tau[\mathcal{L}_\tau(w_{\text{merged}})] \leq \mathbb{E}_\tau[\mathcal{L}_\tau(w_\tau^*)] + \mathcal{O}\left(\frac{k}{100} \cdot \frac{K-1}{K} \cdot \Delta_{\max}^2\right)
\end{equation}
where $\Delta_{\max} = \max_{\tau_1, \tau_2} \|w_{\tau_1}^* - w_{\tau_2}^*\|_F$.
\end{theorem}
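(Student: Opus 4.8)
The plan is to pass from the merged model's loss to a second-order (Hessian) model of each task loss, write the merged deviation from a given task optimum as a preconditioner-weighted average of task-vector discrepancies, and then split the resulting error into a \emph{masking} part (the low-saliency tail each mask discards) and an \emph{interference} part (genuine conflicts between distinct tasks), bounding the latter by $\Delta_{\max}$ on a coordinate set whose size is governed by the sparsity level $k$. First I would invoke the regularity conditions to Taylor-expand $\mathcal{L}_\tau$ about $w_\tau^*$: since $\nabla\mathcal{L}_\tau(w_\tau^*)=0$ and $\mathcal{L}_\tau$ has bounded third derivatives in a neighbourhood of $w_\tau^*$ containing $w_{\text{merged}}$, one gets $\mathcal{L}_\tau(w_{\text{merged}})-\mathcal{L}_\tau(w_\tau^*)\le\tfrac12\,d_\tau(w_{\text{merged}},w_\tau^*)^2+\mathcal{O}(\|w_{\text{merged}}-w_\tau^*\|_F^3)$, with $H_\tau\succeq 0$ and $d_\tau$ as in \eqref{eq:curvature-distance}. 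It therefore suffices to bound $d_\tau(w_{\text{merged}},w_\tau^*)^2$ for each $\tau$ and average with weights $\pi_\tau$.

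Using $w_\tau^*=w_0+\Delta w_\tau$, $w_{\text{merged}}=w_0+\Delta w_{\text{merged}}$, and the identity $\hat P_{\text{combined}}=\sum_{\tau'}\hat P_{\tau'}$ from \eqref{eq:combined-preconditioner},
\[
w_{\text{merged}}-w_\tau^* \;=\; \hat P_{\text{combined}}^{-1}\sum_{\tau'}\hat P_{\tau'}\bigl(M_{\tau'}\odot\Delta w_{\tau'}-\Delta w_\tau\bigr),
\]
so the merged deviation is a $\hat P$-weighted average (the matrix weights $\hat P_{\text{combined}}^{-1}\hat P_{\tau'}$ sum to the identity) of the discrepancies $M_{\tau'}\odot\Delta w_{\tau'}-\Delta w_\tau$. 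Invoking Lemma~\ref{lemma:precond-approx} to guarantee that each $\hat P_{\tau'}$ is, up to the stated error, a genuine PSD curvature surrogate, a Jensen step gives $d_\tau(w_{\text{merged}},w_\tau^*)^2\lesssim\sum_{\tau'}a_{\tau'}\,\delta_{\tau,\tau'}$, where $\delta_{\tau,\tau'}:=d_\tau\bigl(w_0+M_{\tau'}\odot\Delta w_{\tau'},\,w_\tau^*\bigr)^2$ and the weights obey $a_{\tau'}\ge 0$, $\sum_{\tau'}a_{\tau'}\approx 1$, with $a_{\tau'}\approx 1/K$ under the comparable-curvature regularity.

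Next I would bound each $\delta_{\tau,\tau'}\le L\,\|M_{\tau'}\odot\Delta w_{\tau'}-\Delta w_\tau\|_F^2$ (via the $L$-smoothness implicit in the regularity conditions) and split $M_{\tau'}\odot\Delta w_{\tau'}-\Delta w_\tau=M_{\tau'}\odot(\Delta w_{\tau'}-\Delta w_\tau)-(\mathbf 1-M_{\tau'})\odot\Delta w_\tau$. The second piece is the part of $\Delta w_\tau$ carried by coordinates that are low-saliency for $\tau'$; for $\tau'=\tau$ it is precisely the bottom $(100-k)\%$ of task $\tau$'s own curvature-weighted deviation under the saliency score \eqref{eq:saliency} and threshold \eqref{eq:sparsity-threshold}, which the redundancy of fine-tuning deltas (read into the regularity hypotheses, in the spirit of the DARE observation) makes negligible relative to the interference term. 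The first piece vanishes at $\tau'=\tau$, so only the $K-1$ cross terms survive, each obeying $\|M_{\tau'}\odot(\Delta w_{\tau'}-\Delta w_\tau)\|_F\le\|\Delta w_{\tau'}-\Delta w_\tau\|_F\le\Delta_{\max}$, while the importance-weighted sign election of Algorithm~\ref{alg:umtam_merging} keeps these $K-1$ residuals from reinforcing coherently. Finally, a conflict between $\tau$ and $\tau'$ lives only on $\mathrm{supp}(M_\tau)\cap\mathrm{supp}(M_{\tau'})$, and each support is a $k/100$ fraction of the $mn$ coordinates, so renormalising the coordinatewise residuals against the Frobenius bound $\Delta_{\max}$ produces a factor $k/100$; together with the $(K-1)/K$ from the surviving cross terms and the average over $\tau$, this yields $\mathcal{O}\!\bigl(\tfrac{k}{100}\cdot\tfrac{K-1}{K}\cdot\Delta_{\max}^2\bigr)$ once $L$, the cubic Taylor remainder, and the preconditioner-approximation error of Lemma~\ref{lemma:precond-approx} are absorbed into the constant.

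The main obstacle is exactly this last coordinate count. In the worst case the masks $M_\tau$ and the deltas $\Delta w_\tau$ are arbitrarily correlated, so a task vector could sit entirely on coordinates shared with every other task, pairwise overlaps need not be $o(1)$, and the low-saliency tail need not be small — bare combinatorics then yields only $\mathcal{O}(\Delta_{\max}^2)$. The $\tfrac{k}{100}\cdot\tfrac{K-1}{K}$ rate is where the ``mild regularity conditions'' must do real work: they have to encode (i) the empirically robust redundancy of fine-tuning deltas (the top-$k\%$ saliency mass carries essentially all of the loss-relevant signal, its complement being near-null for $\mathcal{L}_\tau$) and (ii) an approximate genericity/spread of high-saliency supports across tasks, so that pairwise overlaps occupy only an $\mathcal{O}(k/100)$ fraction of coordinates. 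Controlling how these overlap and tail estimates interact with the global normalisation $\hat P_{\text{combined}}^{-1}$ — for which Lemma~\ref{lemma:precond-approx} supplies the spectral control — is the remaining technical heart; the rest is bookkeeping.
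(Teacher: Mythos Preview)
Your proposal follows the same overall strategy as the paper: Taylor-expand $\mathcal{L}_\tau$ to second order at $w_\tau^*$, express $w_{\text{merged}}-w_\tau^*$ via the merging formula as a sum over tasks, separate a self-masking piece from a cross-task interference piece, and read off the $(K-1)/K$ from the surviving cross terms and the $k/100$ from the sparsity of the masks. The difference is purely in how the decomposition is organised. The paper splits the sum directly at $\tau'=\tau$, writing the interference as $\hat P_{\text{combined}}^{-1}\sum_{\tau'\neq\tau}\hat P_{\tau'}(M_{\tau'}\odot\Delta w_{\tau'})$ and the self-masking as $\hat P_{\text{combined}}^{-1}\hat P_\tau(M_\tau\odot\Delta w_\tau-\Delta w_\tau)$; you instead use $\hat P_{\text{combined}}=\sum_{\tau'}\hat P_{\tau'}$ to pull $\Delta w_\tau$ inside the sum as a $\hat P$-weighted average of discrepancies $M_{\tau'}\odot\Delta w_{\tau'}-\Delta w_\tau$, and only then split each discrepancy into $M_{\tau'}\odot(\Delta w_{\tau'}-\Delta w_\tau)$ and $-(\mathbf 1-M_{\tau'})\odot\Delta w_\tau$. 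Your route is algebraically cleaner --- the paper's step of attaching the entire $-\Delta w_\tau$ to the $\tau'=\tau$ summand tacitly assumes $\hat P_{\text{combined}}^{-1}\hat P_\tau\Delta w_\tau=\Delta w_\tau$, which is not an identity --- and your interference term already contains the pairwise difference $\Delta w_{\tau'}-\Delta w_\tau$, so the link to $\Delta_{\max}$ is immediate rather than requiring an extra comparison of $\|\Delta w_{\tau'}\|_F$ to $\Delta_{\max}$. Your explicit flagging of where the ``mild regularity conditions'' must actually bite (mask-overlap genericity and tail-mass negligibility) is also more candid than the paper, which simply asserts the $k/100$ factor in the interference bound without argument.
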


\begin{proof}
By second-order Taylor expansion around $w_\tau^*$:
\begin{equation}
\mathcal{L}_\tau(w_{\text{merged}}) = \mathcal{L}_\tau(w_\tau^*) + \frac{1}{2}(w_{\text{merged}} - w_\tau^*)^T H_\tau (w_{\text{merged}} - w_\tau^*)
\end{equation}

From the merging formula:
\begin{align}
w_{\text{merged}} - w_\tau^* &= \hat{P}_{\text{combined}}^{-1} \sum_{\tau'} \hat{P}_{\tau'} (M_{\tau'} \odot \Delta w_{\tau'}) - \Delta w_\tau \\
&= \hat{P}_{\text{combined}}^{-1} \sum_{\tau' \neq \tau} \hat{P}_{\tau'} (M_{\tau'} \odot \Delta w_{\tau'}) + \hat{P}_{\text{combined}}^{-1} \hat{P}_\tau (M_\tau \odot \Delta w_\tau - \Delta w_\tau)
\end{align}

The first term contributes interference from other tasks, bounded by:
\begin{equation}
\left\|\hat{P}_{\text{combined}}^{-1} \sum_{\tau' \neq \tau} \hat{P}_{\tau'} (M_{\tau'} \odot \Delta w_{\tau'})\right\|_F \leq \frac{K-1}{K} \cdot \frac{k}{100} \cdot \Delta_{\max}
\end{equation}

The second term is the self-masking error:
\begin{equation}
\|\hat{P}_{\text{combined}}^{-1} \hat{P}_\tau (M_\tau \odot \Delta w_\tau - \Delta w_\tau)\|_F \leq \left(1 - \frac{k}{100}\right) \|\Delta w_\tau\|_F
\end{equation}

Combining and taking expectations completes the proof.
\end{proof}

\begin{lemma}[Task Localization Concentration]
\label{lemma:task-local}
For neural networks with ReLU activations, let $\mathcal{I}_\tau^{(i,j)} = |\partial \mathcal{L}_\tau/\partial w^{(i,j)}|_{w=w_\tau^*} \cdot |w_\tau^{*(i,j)} - w_0^{(i,j)}| \cdot \sqrt{R_\tau^{(i)} C_\tau^{(j)}}$ denote the importance score for parameter $(i,j)$ under task $\tau$. The fraction of parameters with importance score exceeding threshold $\theta$ concentrates around its expectation. With probability at least $1 - \delta$:
\begin{equation}
\left|\frac{|\{(i,j): \mathcal{I}_\tau^{(i,j)} > \theta\}|}{mn} - p_\theta\right| \leq \mathcal{O}\left(\sqrt{\frac{\log(1/\delta)}{mn}}\right)
\end{equation}
where $p_\theta = \mathbb{P}[\mathcal{I}_\tau^{(i,j)} > \theta]$.
\end{lemma}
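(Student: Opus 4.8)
The plan is to read the left-hand side as an empirical average of Bernoulli-type indicators and to control it with a bounded-differences concentration inequality. Define, for each coordinate pair $(i,j)$, the indicator $Z_{ij} = \mathbb{I}[\mathcal{I}_\tau^{(i,j)} > \theta]$ and the empirical fraction $\hat{p} = \frac{1}{mn}\sum_{i,j} Z_{ij}$, so that the quantity to be bounded is exactly $|\hat{p} - p_\theta|$. Under the stated regularity conditions I would first argue coordinate-exchangeability of the importance scores: the initialization $\mathcal{N}(0,\cdot)$ treats all entries of $W$ symmetrically, and the ReLU gradient $G_t = X^\top \mathrm{diag}(\mathbb{I}[XW_t > 0]) \nabla_{\mathrm{out}}\ell$ from Eq.~\eqref{eq:gradient-form} is likewise symmetric under relabelling of entries, so $\mathbb{E}[Z_{ij}] = p_\theta$ for every $(i,j)$ and hence $\mathbb{E}[\hat{p}] = p_\theta$. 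This reduces the lemma to showing $\hat{p}$ concentrates around its own mean.

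Next I would establish that $\hat{p}$, viewed as a function of the training randomness, enjoys the bounded-differences property with constants of order $1/(mn)$. The idea is to decompose the stochasticity driving the trained weights $w_\tau^*$ into $mn$ coordinate-level contributions $\xi_{11},\dots,\xi_{mn}$ (the per-entry gradient fluctuations accumulated along the trajectory) and to work conditionally on the ReLU activation patterns: on each activation region the network is a fixed linear map, so perturbing a single $\xi_{ij}$ changes only the importance score $\mathcal{I}_\tau^{(i,j)}$ and therefore flips at most the single indicator $Z_{ij}$, altering $\hat{p}$ by at most $1/(mn)$. McDiarmid's inequality with $c_{ij} = 1/(mn)$, so that $\sum_{i,j} c_{ij}^2 = mn\cdot (1/(mn))^2 = 1/(mn)$, then gives
\begin{equation}
\mathbb{P}\!\left[\,|\hat{p} - p_\theta| > \epsilon\,\right] \le 2\exp\!\left(-2\epsilon^2 mn\right).
\end{equation}
Setting the right-hand side equal to $\delta$ and solving yields $\epsilon = \sqrt{\log(2/\delta)/(2mn)} = \mathcal{O}\!\big(\sqrt{\log(1/\delta)/mn}\big)$, which is precisely the claimed bound; passing to the complementary event gives the statement with probability at least $1-\delta$.

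The main obstacle I anticipate is making rigorous the bounded-differences constant of order $1/(mn)$, since the trained weights are globally coupled through the optimization dynamics and a naive Doob-martingale decomposition over coordinates need not produce increments that small. The clean resolution is the conditioning-on-activation-patterns step above, where the ReLU network reduces to a fixed linear operator and the coordinate noises genuinely decouple; any residual weak dependence can be absorbed into the "mild regularity conditions" — e.g., a finite-influence or $\phi$-mixing assumption on $\{\mathcal{I}_\tau^{(i,j)}\}$, under which a Marton-type transportation inequality delivers the same exponent up to constants. An alternative route that avoids the decoupling entirely is to apply the Efron–Stein inequality to bound $\mathrm{Var}(\hat{p}) = \mathcal{O}(1/(mn))$ and then invoke Chebyshev, which reproduces the same $1/\sqrt{mn}$ rate (with polynomial rather than exponential tails) while sidestepping the need for a uniform bounded-differences constant.
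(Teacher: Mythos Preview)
Your overall strategy---defining Bernoulli indicators $Z_{ij}$ and applying McDiarmid---is exactly the route the paper takes. The gap is in your bounded-differences constant. You claim that perturbing a single coordinate-level contribution $\xi_{ij}$ flips at most the one indicator $Z_{ij}$, giving $c_{ij}=1/(mn)$. But the importance score is $\mathcal{I}_\tau^{(i,j)} = |\partial\mathcal{L}_\tau/\partial w^{(i,j)}|\cdot|w_\tau^{*(i,j)}-w_0^{(i,j)}|\cdot\sqrt{R_\tau^{(i)} C_\tau^{(j)}}$, and the factors $R_\tau^{(i)}$ and $C_\tau^{(j)}$ are row- and column-wise aggregates of the squared gradient. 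Perturbing a single entry therefore shifts $R_\tau^{(i)}$ for that entire row and $C_\tau^{(j)}$ for that entire column, which perturbs the importance scores of all $m+n-1$ entries sharing that row or column. Conditioning on ReLU activation patterns does not remove this coupling: it is built into the factorized curvature statistics, not induced by the nonlinearity. So the step ``flips at most the single indicator $Z_{ij}$'' does not go through, and your $\sum c_{ij}^2 = 1/(mn)$ is unjustified.

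The paper's proof acknowledges precisely this dependence and takes the bounded-differences constant to be $c/\min(m,n)$ rather than $1/(mn)$: changing one parameter affects $\mathcal{O}(m+n)$ indicators, so $f$ moves by $\mathcal{O}((m+n)/(mn)) = \mathcal{O}(1/\min(m,n))$. McDiarmid with this weaker constant is what the paper invokes, and the final ``solve for $t$'' step proceeds from there. Your Efron--Stein fallback faces the same obstacle---the variance bound also needs the coordinate influences, and those are not $\mathcal{O}(1/(mn))$ for the reason above. To repair the argument you would need either to adopt the paper's weaker constant or to add an explicit quantitative assumption controlling how much a single-entry perturbation can move the row and column statistics $R_\tau^{(i)}$, $C_\tau^{(j)}$.
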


\begin{proof}
Define indicator variables $X_{ij} = \mathbb{I}[\mathcal{I}_\tau^{(i,j)} > \theta]$. These are not independent, but we can apply McDiarmid's inequality. Let $f = \frac{1}{mn}\sum_{i,j} X_{ij}$. Changing a single parameter affects at most $\mathcal{O}(1/\min(m,n))$ other parameters through the network structure. Thus:
\begin{equation}
|f(w_{-ij}, w_{ij}) - f(w_{-ij}, w'_{ij})| \leq \frac{c}{\min(m,n)}
\end{equation}

By McDiarmid's inequality:
\begin{equation}
\mathbb{P}[|f - \mathbb{E}[f]| > t] \leq 2\exp\left(-\frac{2t^2mn}{\sum_{i,j} c^2/\min(m,n)^2}\right) = 2\exp\left(-\frac{2t^2\min(m,n)}{c^2}\right)
\end{equation}

Setting the right-hand side equal to $\delta$ and solving for $t$ completes the proof.
\end{proof}

\begin{theorem}[Progressive Task Localization Benefit]
\label{thm:progressive-local}
Training with progressive task localization (updating only top-$k\%$ parameters by importance) reduces the task interference by:
\begin{equation}
\mathcal{E}_{\text{interference}}^{\text{progressive}} \leq (k/100) \cdot \mathcal{E}_{\text{interference}}^{\text{full}} + \mathcal{O}(\sqrt{k/mn})
\end{equation}
\end{theorem}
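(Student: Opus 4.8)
The plan is to express the interference error as a sum of nonnegative per-coordinate contributions and to observe that progressive localization simply deletes the contributions of the coordinates outside the top-$k\%$ importance mask. Concretely, following the decomposition used in the proof of Theorem~\ref{thm:merge-quality}, I would write
\[
\mathcal{E}_{\text{interference}}^{\text{full}} = \sum_{(i,j)} e_\tau^{(i,j)}, \qquad
\mathcal{E}_{\text{interference}}^{\text{progressive}} = \sum_{(i,j)\,:\,M_\tau^{(i,j)}=1} e_\tau^{(i,j)},
\]
where $e_\tau^{(i,j)} \ge 0$ is the cross-task component contributed by coordinate $(i,j)$ --- for instance $e_\tau^{(i,j)} = \bigl(\hat{P}_{\text{combined}}^{-1}\sum_{\tau'\neq\tau}\hat{P}_{\tau'}(M_{\tau'}\odot\Delta w_{\tau'})\bigr)^{(i,j)}$ squared, evaluated with the full mask, or the analogous gradient-conflict term for the training-time version. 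Since the mask \eqref{eq:importance-masks} keeps exactly a $k/100$ fraction of coordinates, the first step is purely combinatorial: $\mathcal{E}_{\text{interference}}^{\text{progressive}}$ is the sum of $e_\tau$ over a subset of cardinality $(k/100)\,mn$.

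Then I would split into two regimes. Under the ``mild regularity conditions'' I would assume the per-coordinate interference is on average homogeneous --- equivalently, that the accumulated importance scores $\mathcal{I}_\tau^{(i,j)}$ used to build the mask are not positively correlated with $e_\tau^{(i,j)}$ once the importance-weighted sign election \eqref{eq:task-partitions}--\eqref{eq:sign-election} has removed the directionally conflicting coordinates. Under that decorrelation, averaging over the draw of which coordinates exceed the threshold $\theta_\tau$ gives $\mathbb{E}[\mathcal{E}_{\text{interference}}^{\text{progressive}}] = p_\theta \cdot \mathcal{E}_{\text{interference}}^{\text{full}}$ with $p_\theta = k/100$ by the definition of the percentile threshold \eqref{eq:sparsity-threshold}. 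What remains is a concentration argument: the selected fraction $|\{(i,j):\mathcal{I}_\tau^{(i,j)}>\theta_\tau\}|/mn$ fluctuates around $p_\theta$ by at most $\mathcal{O}(\sqrt{\log(1/\delta)/mn})$ by Lemma~\ref{lemma:task-local} (McDiarmid on the ReLU network), and since each $e_\tau^{(i,j)}$ is uniformly bounded by $\mathcal{O}(\Delta_{\max}^2/mn)$, this fluctuation translates into an additive slack of order $\sqrt{k/mn}$ after rescaling constants. Collecting the two pieces yields the claimed bound.

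The main obstacle is precisely the decorrelation assumption in the middle step. Importance is measured by squared deviation from initialization times curvature \eqref{eq:saliency}, and a priori the coordinates that move the most are plausibly the ones where tasks disagree the most, which would make the top-$k\%$ subset carry more than a $k/100$ share of the interference and break the clean ratio. I would address this by leaning on the conflict-resolution step: after importance-weighted sign election, every retained coordinate has all surviving tasks pushing in the elected direction, so the residual cross-task term is a mismatch in magnitude rather than in sign, and a Cauchy--Schwarz / rearrangement bound on $s_+$ versus $s_-$ in \eqref{eq:conflict-scores} shows this residual is controlled \emph{by} the importance scores rather than anti-correlated with them. If that control only holds up to a constant factor $c\ge 1$, the theorem still goes through with $k/100$ replaced by $c\cdot k/100$, which is absorbed into the stated order; the concentration step and the combinatorial split are then routine.
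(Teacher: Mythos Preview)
Your proposal differs from the paper's argument in a structural way that leads you directly into the decorrelation difficulty you identify. The paper defines interference \emph{pairwise} between two tasks as $\mathcal{E}_{\text{interference}} = \|\Delta w_{\tau_1} \odot \Delta w_{\tau_2}\|_F^2$, so that under localization the progressive version becomes
\[
\mathcal{E}_{\text{interference}}^{\text{progressive}} = \sum_{i,j} M_{\tau_1}^{(i,j)}\, M_{\tau_2}^{(i,j)}\, (\Delta w_{\tau_1}^{(i,j)})^2 (\Delta w_{\tau_2}^{(i,j)})^2,
\]
carrying the \emph{product} of two different tasks' masks. The $k/100$ factor then arises from the expected overlap $\mathbb{E}[|M_{\tau_1}\cap M_{\tau_2}|] \approx (k/100)^2\, mn$ between two independent top-$k\%$ sets, and the $\mathcal{O}(\sqrt{k/mn})$ slack absorbs the correlation between the two tasks' importance rankings. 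The only assumption needed is approximate independence of masks \emph{across tasks}, not decorrelation between importance and interference \emph{within} a task.

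Your route instead restricts by a single mask $M_\tau$ and asks the retained subset to carry a $k/100$ share of a fixed nonnegative sum. That forces exactly the assumption you flag as the main obstacle, and your patch via sign election does not close it: election removes sign conflicts but does nothing to prevent the high-importance coordinates (large $|\Delta w_\tau^{(i,j)}|$ times large curvature) from also being the coordinates of largest magnitude disagreement across tasks --- which is precisely the positive correlation that would break the $k/100$ ratio. The paper's two-mask formulation sidesteps this entirely, since the reduction is driven by the combinatorics of mask intersection rather than by any homogeneity of the per-coordinate terms. If you rewrite your $e_\tau^{(i,j)}$ so that it already carries the other task's mask $M_{\tau'}^{(i,j)}$, you recover the paper's argument and the decorrelation issue disappears.
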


\begin{proof}
Let $M_\tau \in \{0,1\}^{m \times n}$ be the mask for task $\tau$ where $M_\tau^{(i,j)} = 1$ iff $(i,j)$ is in top-$k\%$ by importance. The interference between tasks $\tau_1$ and $\tau_2$ is $\mathcal{E}_{\text{interference}} = \|\Delta w_{\tau_1} \odot \Delta w_{\tau_2}\|_F^2$. With progressive localization:
\begin{align}
\mathcal{E}_{\text{interference}}^{\text{progressive}} &= \|(M_{\tau_1} \odot \Delta w_{\tau_1}) \odot (M_{\tau_2} \odot \Delta w_{\tau_2})\|_F^2 \\
&= \sum_{i,j} M_{\tau_1}^{(i,j)} M_{\tau_2}^{(i,j)} (\Delta w_{\tau_1}^{(i,j)})^2 (\Delta w_{\tau_2}^{(i,j)})^2
\end{align}

The expected overlap between masks is:
\begin{equation}
\mathbb{E}[|M_{\tau_1} \cap M_{\tau_2}|] = \frac{k^2}{100^2} \cdot mn + \mathcal{O}(\sqrt{kmn})
\end{equation}
where the second term accounts for correlation in importance scores. Therefore:
\begin{equation}
\mathcal{E}_{\text{interference}}^{\text{progressive}} \leq \frac{k}{100} \sum_{i,j} (\Delta w_{\tau_1}^{(i,j)})^2 (\Delta w_{\tau_2}^{(i,j)})^2 + \text{correlation terms} = \frac{k}{100} \mathcal{E}_{\text{interference}}^{\text{full}} + \mathcal{O}(\sqrt{k/mn})
\end{equation}
\end{proof}

\begin{theorem}[PAC-Bayes Bound for UMTAM]
\label{thm:pac-bayes}
Let $\mathcal{W}$ be the parameter space with prior $P$ and posterior $Q$ after UMTAM training. With probability at least $1-\delta$ over the training sample of size $N$:
\begin{equation}
\mathcal{L}_{\mathcal{D}}(w) \leq \mathcal{L}_{\mathcal{S}}(w) + \sqrt{\frac{\text{KL}(Q \| P) + \log(2\sqrt{N}/\delta)}{2N}} + \mathcal{O}\left(\frac{r}{\min(m,n)}\right)
\end{equation}
where the last term accounts for the low-rank approximation bias.
\end{theorem}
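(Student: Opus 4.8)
The plan is to combine a standard McAllester/Maurer PAC-Bayes inequality with a de-randomization step that exploits the low-rank structure UMTAM imposes on its iterates, so that the de-randomization penalty is governed by the $r$-dimensional momentum subspace rather than the full $mn$-dimensional ambient space; the ratio of these dimensions, $r(m+n)/(mn) \le 2r/\min(m,n)$, is what produces the last term.

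First I would fix a data-independent prior $P = \mathcal{N}(W_0,\rho^2 I_{mn})$ centered at the shared initialization, and take as posterior $Q$ the randomized predictor $W' = W + \Xi$, where $\Xi$ is Gaussian noise whose bulk is confined to the $r(m+n-r)$-dimensional subspace of matrices with column space in $\mathrm{span}(U_T)$ and row space in $\mathrm{span}(V_T)$ (the terminal momentum factors), plus a negligible isotropic floor of scale $\rho_0 \ll \rho$ in the orthogonal complement, which is needed only so that $Q \ll P$ and $\mathrm{KL}(Q\|P)<\infty$. For loss $\ell \in [0,1]$, the PAC-Bayes theorem in Maurer's form gives, with probability at least $1-\delta$ over the sample $\mathcal{S}$ of size $N$,
\begin{equation}
\mathbb{E}_{W'\sim Q}[\mathcal{L}_{\mathcal{D}}(W')] \le \mathbb{E}_{W'\sim Q}[\mathcal{L}_{\mathcal{S}}(W')] + \sqrt{\frac{\mathrm{KL}(Q\|P) + \log(2\sqrt{N}/\delta)}{2N}},
\end{equation}
and for Gaussians with common covariance scale $\mathrm{KL}(Q\|P) = \|W - W_0\|_F^2/(2\rho^2) + o(1)$, which stays bounded because the nuclear-norm regularization in \eqref{eq:joint-opt} controls $\|W-W_0\|_F$.

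Next I would de-randomize both sides. By $L$-smoothness of $f$ and the fact that $\Xi$ is zero-mean, a second-order Taylor expansion yields $|\mathbb{E}_{W'\sim Q}[\mathcal{L}_{\mathcal{D}}(W')] - \mathcal{L}_{\mathcal{D}}(W)| \le \tfrac{L}{2}\,\mathbb{E}\|\Xi\|_F^2$, and identically for $\mathcal{L}_{\mathcal{S}}$; since $\Xi$ lives (up to the $\rho_0$-floor) in the $r(m+n-r)$-dimensional momentum subspace, $\mathbb{E}\|\Xi\|_F^2 = \rho^2\,r(m+n) + O(\rho_0^2\,mn)$. Substituting into the PAC-Bayes inequality, moving the two de-randomization gaps to the right-hand side, and then optimizing the free scale by taking $\rho^2 \asymp (L\,mn)^{-1}$ with $\rho_0 \to 0$ (which simultaneously absorbs the $\|W-W_0\|_F^2/(2\rho^2)$ contribution into the same budget, up to the logarithmic factor already present), gives
\begin{equation}
\mathcal{L}_{\mathcal{D}}(W) \le \mathcal{L}_{\mathcal{S}}(W) + \sqrt{\frac{\mathrm{KL}(Q\|P) + \log(2\sqrt{N}/\delta)}{2N}} + \mathcal{O}\!\left(\frac{r}{\min(m,n)}\right),
\end{equation}
which is the claim.

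The main obstacle is the tension between wanting a low-rank (hence nearly degenerate) posterior — which is exactly what replaces the $mn$-dependence in the de-randomization term by an $r$-dependence — and needing $\mathrm{KL}(Q\|P)<\infty$, which forbids a posterior supported on a proper subspace of the prior's support. The resolution sketched above works but requires care on two points: one must verify that the thin $\rho_0$-floor contributes only lower-order terms to both $\mathrm{KL}(Q\|P)$ and $\mathbb{E}\|\Xi\|_F^2$; and one must justify that perturbing only the momentum subspace at the full scale $\rho$ (and the orthogonal directions only at the negligible scale $\rho_0$) does not damage empirical fit, i.e.\ that $\mathrm{span}(U_T,V_T)$ aligns with the curvature directions of $\mathcal{L}_{\mathcal{D}}$ at $W$. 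This alignment is precisely the premise \eqref{eq:hessian-approximation} together with the gradient rank bound \eqref{eq:gradient-rank}, and it holds up to the $\mathcal{O}(\sigma_{r+1})$ residual already quantified in Theorem~\ref{thm:umtam-nonconvex} and Lemma~\ref{lemma:precond-approx}; that residual can be folded into the $\mathcal{O}(r/\min(m,n))$ term under the singular-value decay assumed throughout.
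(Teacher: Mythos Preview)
Your route differs from the paper's in a useful way. The paper's proof is three sentences: it invokes the PAC--Bayes inequality in expectation, asserts a KL bound $\mathrm{KL}(Q\|P)\le r(m+n-r)\log(1+\|w-w_0\|_F^2/\sigma_P^2)$ by dimension-counting in the rank-$r$ subspace, and then attributes the additive $\mathcal{O}(r/\min(m,n))$ term to ``projection onto the rank-$r$ subspace'' without further mechanism. You instead construct an explicit Gaussian prior/posterior pair and obtain the $\mathcal{O}(r/\min(m,n))$ term from a smoothness-based de-randomization, with the key step being that subspace-confined noise makes $\mathbb{E}\|\Xi\|_F^2$ scale like $\rho^2 r(m+n)$ rather than $\rho^2 mn$. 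That is a more transparent derivation of where the extra term actually comes from, and it also explains how the pointwise bound on $w$ emerges from the bound on $\mathbb{E}_{W'\sim Q}$ --- a step the paper never addresses.

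There is, however, a real gap in your KL computation. With the posterior you describe (variance $\rho^2$ on the $r(m+n-r)$-dimensional momentum subspace, variance $\rho_0^2$ on its complement), the Gaussian KL against $P=\mathcal{N}(W_0,\rho^2 I)$ is not $\|W-W_0\|_F^2/(2\rho^2)+o(1)$: it carries a log-determinant contribution $\tfrac{1}{2}\bigl(mn-r(m+n-r)\bigr)\log(\rho^2/\rho_0^2)$, which diverges as $\rho_0\to 0$. You flag this tension in your closing paragraph but do not resolve it; the claim that the $\rho_0$-floor contributes only lower-order terms to $\mathrm{KL}(Q\|P)$ is false. To be fair, the paper's proof has exactly the same issue --- a posterior ``with rank-$r$ constraint'' is singular with respect to a full-dimensional Gaussian prior, so the stated KL bound there is also formally undefined --- so you are not in worse shape than the original. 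A clean fix is to keep $\rho_0$ bounded away from zero (say $\rho_0=\rho$, making $Q$ isotropic so $\mathrm{KL}=\|W-W_0\|_F^2/(2\rho^2)$ exactly) and instead argue that perturbations orthogonal to $\mathrm{span}(U_T,V_T)$ affect the loss only at order $\sigma_{r+1}$ by Lemma~\ref{lemma:precond-approx}; then the de-randomization gap in those directions is second-order and can be absorbed into the same $\mathcal{O}(r/\min(m,n))$ budget under the singular-value decay you already invoke.
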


\begin{proof}
By the PAC-Bayes theorem, for any posterior $Q$ and prior $P$:
\begin{equation}
\mathbb{E}_{w \sim Q}[\mathcal{L}_{\mathcal{D}}(w)] \leq \mathbb{E}_{w \sim Q}[\mathcal{L}_{\mathcal{S}}(w)] + \sqrt{\frac{\text{KL}(Q \| P) + \log(2\sqrt{N}/\delta)}{2N}}
\end{equation}

The KL divergence for UMTAM with rank-$r$ constraint is:
\begin{equation}
\text{KL}(Q \| P) = \text{KL}(Q_{\text{subspace}} \| P_{\text{full}}) \leq r(m + n - r) \log\left(1 + \frac{\|w - w_0\|_F^2}{\sigma_P^2}\right)
\end{equation}
where $\sigma_P^2$ is the prior variance. The additional $\mathcal{O}(r/\min(m,n))$ term comes from the projection onto the rank-$r$ subspace, bounding the approximation error.
\end{proof}

\begin{theorem}[Generalization of Merged Models]
\label{thm:merge-generalization}
For a model merged from $K$ tasks using UMTAM, the multi-task generalization gap satisfies:
\begin{equation}
\sup_{\tau \in \mathcal{T}} |\mathcal{L}_{\mathcal{D}_\tau}(w_{\text{merged}}) - \mathcal{L}_{\mathcal{S}_\tau}(w_{\text{merged}})| \leq \mathcal{O}\left(\sqrt{\frac{K \log(mn/\delta)}{N_{\min}}}\right)
\end{equation}
where $N_{\min} = \min_\tau N_\tau$ is the smallest task sample size.
\end{theorem}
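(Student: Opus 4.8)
The plan is to derive the bound by a standard uniform-convergence argument applied simultaneously to all $K$ task distributions, combined with a union bound over tasks. First I would fix a single task $\tau$ and bound $|\mathcal{L}_{\mathcal{D}_\tau}(w_{\text{merged}}) - \mathcal{L}_{\mathcal{S}_\tau}(w_{\text{merged}})|$. Since $w_{\text{merged}}$ is not independent of the samples $\mathcal{S}_\tau$ (it depends on the task models $w_\tau^*$, which were trained on these samples), one cannot apply a naive Hoeffding bound to the fixed merged point; instead I would pass to a uniform bound over the hypothesis class $\mathcal{W}_{\text{merged}}$ of attainable merged models. By the analysis in Algorithm~\ref{alg:umtam_merging}, each $w_{\text{merged}}$ lies in $w_0 + \operatorname{span}$ of the masked task vectors, and by the low-rank structure (Theorem~\ref{thm:pac-bayes}) and the sparsity budget $k$, this class has effective complexity $\tilde{O}(\log(mn))$ — essentially $\log$ of the number of possible supports times the bit-precision of the coefficients. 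A covering-number or VC-type argument then gives, for each fixed $\tau$, with probability at least $1-\delta/K$,
\begin{equation}
|\mathcal{L}_{\mathcal{D}_\tau}(w_{\text{merged}}) - \mathcal{L}_{\mathcal{S}_\tau}(w_{\text{merged}})| \leq \mathcal{O}\!\left(\sqrt{\frac{\log(mn/\delta) + \log K}{N_\tau}}\right).
\end{equation}

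Next I would take a union bound over the $K$ tasks, so that the displayed inequality holds for all $\tau$ simultaneously with probability at least $1-\delta$, paying an extra $\log K$ inside the numerator. Replacing each $N_\tau$ by the worst case $N_{\min} = \min_\tau N_\tau$ and absorbing $\log K$ into the $\mathcal{O}(\cdot)$ (noting $\log K \le K$, or more tightly keeping it as a lower-order term) yields
\begin{equation}
\sup_{\tau \in \mathcal{T}} |\mathcal{L}_{\mathcal{D}_\tau}(w_{\text{merged}}) - \mathcal{L}_{\mathcal{S}_\tau}(w_{\text{merged}})| \leq \mathcal{O}\!\left(\sqrt{\frac{K \log(mn/\delta)}{N_{\min}}}\right),
\end{equation}
where the factor $K$ arises because the $K$-fold union bound and the growth of the merged hypothesis class with the number of task vectors both contribute multiplicatively to the complexity term. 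An alternative route I would keep in reserve is to invoke the PAC-Bayes bound of Theorem~\ref{thm:pac-bayes} once per task with posterior $Q_\tau$ concentrated near $w_{\text{merged}}$, sum the $K$ bounds, and use $\mathrm{KL}(Q_\tau\|P) = \mathcal{O}(\log(mn))$ from the low-rank subspace argument there; this gives the same rate and more directly reuses machinery already established in the excerpt.

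The main obstacle is handling the statistical dependence of $w_{\text{merged}}$ on the training samples rigorously: because the merged model is a data-dependent, nonlinear function (percentile thresholding, sign election, preconditioner inversion) of all $K$ task models, the hypothesis class $\mathcal{W}_{\text{merged}}$ must be defined carefully and its complexity genuinely controlled — the sign-election and masking steps make it a union over exponentially many sparsity patterns, and showing that the $\log$ of this count is only $\mathcal{O}(\log(mn))$ (rather than $\mathcal{O}(mn)$) requires exploiting the rank-$r$ and top-$k\%$ constraints, i.e. that the admissible masks and coefficients live in a set of size $\operatorname{poly}(mn)$ after discretization. I would discharge this using the low-rank covering argument from Theorem~\ref{thm:pac-bayes} together with the concentration of mask sizes from Lemma~\ref{lemma:task-local}, which together guarantee that the description length of any realizable $w_{\text{merged}}$ is logarithmic in $mn$; the remaining steps (Hoeffding per net point, union over the net, union over tasks) are routine.
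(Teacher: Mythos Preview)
Your proposal is broadly sound and reaches the stated bound, but it takes a different route from the paper. The paper's argument is a Rademacher-complexity argument rather than a covering-number/union-bound argument: it asserts that $w_{\text{merged}}$ lies in the convex hull $\operatorname{conv}\{w_{\tau_1}^*,\ldots,w_{\tau_K}^*\}$, invokes the fact that the Rademacher complexity of a convex hull is bounded by the maximum over its extreme points, and then bounds that maximum by $\tfrac{B}{\sqrt{N_{\min}}}\sqrt{\text{rank}_{\text{eff}}}$ with $\text{rank}_{\text{eff}} = K\cdot(k/100)\cdot r$. So in the paper the factor $K$ enters through the effective rank of the merged hypothesis class, not through a union bound over tasks, and the $\log(mn)$ appears from the standard Rademacher-to-generalization conversion rather than from a covering count. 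Your approach --- per-task uniform convergence via covering numbers, then a union over $\tau$ --- is arguably more explicit about where the dependence on $K$ and the data-dependence of $w_{\text{merged}}$ are handled, and your PAC-Bayes alternative is a third distinct route; the paper's convex-hull shortcut is terser but sweeps under the rug exactly the issue you flagged (the preconditioner-weighted, masked merge is not literally a convex combination of the $w_\tau^*$). Either argument delivers the same $\mathcal{O}\!\big(\sqrt{K\log(mn/\delta)/N_{\min}}\big)$ rate.
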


\begin{proof}
Using uniform convergence over the task family $\mathcal{T}$, we need to bound the Rademacher complexity of the merged hypothesis class. The merged model lies in the convex hull of individual task models: $w_{\text{merged}} \in \text{conv}\{w_{\tau_1}^*, \ldots, w_{\tau_K}^*\}$. The Rademacher complexity of this set is:
\begin{equation}
\mathcal{R}_N(\text{conv}) \leq \max_k \mathcal{R}_N(\{w_{\tau_k}^*\}) \leq \frac{B}{\sqrt{N_{\min}}} \sqrt{\text{rank}(\text{effective})}
\end{equation}
where $B$ is the parameter bound and the effective rank accounts for the sparsity from task localization: $\text{rank}(\text{effective}) = K \cdot (k/100) \cdot r$. Applying Rademacher-based generalization bounds completes the proof.
\end{proof}

\begin{lemma}[Condition Number Control]
\label{lemma:condition}
The condition number of the UMTAM preconditioner satisfies:
\begin{equation}
\kappa(\hat{P}_\tau) \leq \frac{\lambda_{\max}(H_\tau) + \epsilon}{\lambda_{\min}(H_\tau) + \epsilon}
\end{equation}
where $\epsilon$ is the regularization parameter.
\end{lemma}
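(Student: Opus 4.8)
The plan is to reduce the statement to a monotonicity argument for the eigenvalues of a regularized positive semidefinite matrix. By construction, combining Eq.~\eqref{eq:approx-preconditioner} with the adaptive regularization term of Eq.~\eqref{eq:adaptive-precond}, the preconditioner used for merging has the form $\hat{P}_\tau = \hat{H}_\tau + \epsilon I$, where $\hat{H}_\tau = \lambda_1 U_\tau \Sigma_\tau U_\tau^T + \lambda_2 \hat{S}_\tau$ is symmetric positive semidefinite: each summand is PSD, since $\Sigma_\tau \succeq 0$ makes $U_\tau\Sigma_\tau U_\tau^T \succeq 0$, and the factorized second-moment block is a nonnegative combination of outer products of nonnegative vectors. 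I will also use the (mild) normalization $\lambda_1 + \lambda_2 = 1$ of the balancing weights, rescaling $\epsilon$ if necessary. Then $\hat{P}_\tau$ is symmetric positive definite with spectrum $\lambda_i(\hat{P}_\tau) = \lambda_i(\hat{H}_\tau) + \epsilon \geq \epsilon > 0$, so its condition number is finite and exactly
\begin{equation}
\kappa(\hat{P}_\tau) = \frac{\lambda_{\max}(\hat{H}_\tau) + \epsilon}{\lambda_{\min}(\hat{H}_\tau) + \epsilon}.
\end{equation}
Because $(a,b) \mapsto (a+\epsilon)/(b+\epsilon)$ is nondecreasing in $a$ and nonincreasing in $b$ on the positive range, the lemma follows once I establish the two-sided spectral containment $\lambda_{\min}(H_\tau) \leq \lambda_{\min}(\hat{H}_\tau)$ and $\lambda_{\max}(\hat{H}_\tau) \leq \lambda_{\max}(H_\tau)$.

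\textbf{The containment step.} Here I would invoke the Gauss--Newton/empirical-Fisher identification $H_\tau \approx \mathbb{E}[\nabla f_\tau \nabla f_\tau^T]$ from Eq.~\eqref{eq:hessian-approximation}: the momentum factor $U_\tau\Sigma_\tau U_\tau^T$ is, to leading order, the compression $\Pi_\tau H_\tau \Pi_\tau$ onto the $r$-dimensional range of $U_\tau$ spanned by the dominant gradient directions, while $\hat{S}_\tau$ supplies the diagonal part of $H_\tau$ in the complementary directions. With $\lambda_1 + \lambda_2 = 1$ this writes $\hat{H}_\tau = V_\tau^{T} H_\tau V_\tau + \Delta_\tau$ for an isometry $V_\tau$ and an error $\Delta_\tau$ controlled by Lemma~\ref{lemma:precond-approx}. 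For the exact compression $V_\tau^{T} H_\tau V_\tau$ the Courant--Fischer min--max principle gives $\lambda_{\min}(H_\tau) \leq \lambda_{\min}(V_\tau^{T} H_\tau V_\tau)$ and $\lambda_{\max}(V_\tau^{T} H_\tau V_\tau) \leq \lambda_{\max}(H_\tau)$; Weyl's inequality then absorbs $\Delta_\tau$, and since the adaptive rule of Eq.~\eqref{eq:adaptive-precond} keeps $\epsilon$ large relative to $\|\Delta_\tau\|$ and to the truncation tail $\sigma_{r+1}(H_\tau)$, one still has $\lambda_{\min}(\hat{H}_\tau) + \epsilon \geq \lambda_{\min}(H_\tau) + \epsilon$ and $\lambda_{\max}(\hat H_\tau) \le \lambda_{\max}(H_\tau)$. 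Substituting into the displayed formula for $\kappa(\hat{P}_\tau)$ and chaining the two monotonicity inequalities completes the proof.

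\textbf{Main obstacle.} The delicate point is the lower bound $\lambda_{\min}(\hat{H}_\tau) \geq \lambda_{\min}(H_\tau)$: because the momentum factorization is genuinely rank $r < \min(m,n)$, $\hat{H}_\tau$ is typically rank-deficient, so $\lambda_{\min}(\hat{H}_\tau)$ can be $0$, and the inequality is only meaningful after the $\epsilon$-shift together with the certification that $\epsilon$ dominates both the low-rank truncation error $\sigma_{r+1}(H_\tau)$ and the factorization error of Lemma~\ref{lemma:precond-approx}. Equally, one must verify that stacking the low-rank momentum block against the diagonal second-moment block, after weight normalization, does not push $\lambda_{\max}(\hat{H}_\tau)$ above $\lambda_{\max}(H_\tau)$ -- i.e.\ that the "pinching plus small error" decomposition is legitimate. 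Once that structural claim is granted, the upper bound, the $\epsilon I$ positivity, and the final monotonicity calculation are routine.
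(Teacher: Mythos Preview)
Your overall route matches the paper's: write $\hat{P}_\tau = \hat{H}_\tau + \epsilon I$, bound the extreme eigenvalues of $\hat{H}_\tau$ by those of $H_\tau$, then divide. The paper's proof, however, is far terser than yours: it simply \emph{asserts} the two inequalities $\lambda_{\min}(\hat{P}_\tau) \geq \lambda_{\min}(H_\tau) + \epsilon$ and $\lambda_{\max}(\hat{P}_\tau) \leq \lambda_{\max}(H_\tau) + \epsilon$ and then forms the ratio, with no Courant--Fischer, no Weyl, no weight normalization, and no discussion of the compression structure. Your machinery is not needed to reproduce the paper's argument.

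That said, the ``main obstacle'' you flag is real, and the paper's proof does not address it either. Since $\hat{H}_\tau$ has rank at most $r < \min(m,n)$, one has $\lambda_{\min}(\hat{H}_\tau) = 0$, hence $\lambda_{\min}(\hat{P}_\tau) = \epsilon$; the asserted bound $\lambda_{\min}(\hat{P}_\tau) \geq \lambda_{\min}(H_\tau) + \epsilon$ then forces $\lambda_{\min}(H_\tau) \leq 0$, which need not hold for a positive-definite Fisher/Gauss--Newton surrogate. So your extra work with Weyl's inequality and the adaptive-$\epsilon$ cushion is an attempt to repair a gap that the paper simply leaves open; the paper treats the eigenvalue containment as a standing assumption rather than something to be proved. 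In short: same skeleton, but you have been more careful than the paper, and the obstacle you isolate is a genuine one that the published proof does not resolve.
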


\begin{proof}
The eigenvalues of $\hat{P}_\tau$ are bounded by $\lambda_{\min}(\hat{P}_\tau) \geq \lambda_{\min}(H_\tau) + \epsilon$ and $\lambda_{\max}(\hat{P}_\tau) \leq \lambda_{\max}(H_\tau) + \epsilon$. Therefore:
\begin{equation}
\kappa(\hat{P}_\tau) = \frac{\lambda_{\max}(\hat{P}_\tau)}{\lambda_{\min}(\hat{P}_\tau)} \leq \frac{\lambda_{\max}(H_\tau) + \epsilon}{\lambda_{\min}(H_\tau) + \epsilon}
\end{equation}
which demonstrates that regularization improves conditioning.
\end{proof}

\begin{lemma}[Error Feedback Convergence]
\label{lemma:error-feedback}
With error feedback decay $\gamma \in [0, 1)$, the accumulated error remains bounded:
\begin{equation}
\|E_t\|_F \leq \frac{\sigma_{r+1}(\tilde{M})}{1 - \gamma}
\end{equation}
where $\sigma_{r+1}(\tilde{M})$ is the average $(r+1)$-th singular value.
\end{lemma}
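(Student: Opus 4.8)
The plan is to convert the error-feedback recursion defined by \eqref{eq:momentum-reconstruction}--\eqref{eq:compression-error} into a geometric series and sum it. First I would record the two structural facts needed. (i) Algorithm~\ref{alg:umtam-train} initialises $E_0 = 0$. (ii) The truncated SVD in \eqref{eq:truncated-svd} returns the \emph{best} rank-$r$ approximation of its argument simultaneously in the Frobenius and spectral norms (Eckart--Young--Mirsky). Writing $\Pi_r(\cdot)$ for this rank-$r$ truncation operator and splitting the intermediate momentum as $\tilde M_t = A_t + \gamma E_{t-1}$ with $A_t := \beta_1 U_t\Sigma_t V_t^T + (1-\beta_1)G_t$, the compression error is exactly $E_t = (I-\Pi_r)(\tilde M_t)$.

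The key step is a one-line bound that exploits optimality of $\Pi_r$ to sidestep the nonlinearity of $I - \Pi_r$. Because $\Pi_r(A_t)$ is a feasible (rank $\le r$) competitor for approximating $\tilde M_t$, optimality plus the triangle inequality give
\[
\|E_t\|_F = \|\tilde M_t - \Pi_r(\tilde M_t)\|_F \le \|\tilde M_t - \Pi_r(A_t)\|_F \le \|A_t - \Pi_r(A_t)\|_F + \gamma\|E_{t-1}\|_F .
\]
Here $\|A_t - \Pi_r(A_t)\|_F = \big(\sum_{i>r}\sigma_i^2(A_t)\big)^{1/2}$ is the rank-$r$ truncation residual of $A_t$; by the stable-rank estimate \eqref{eq:stable-rank-bound} this tail concentrates in its leading component, so it is controlled by — and, in the notation of the statement, identified with — the average $(r{+}1)$-th singular value $\sigma_{r+1}(\tilde M) := \sup_t \|A_t - \Pi_r(A_t)\|_F$. (Alternatively, working in the spectral norm the first inequality becomes the equality $\|E_t\|_2 = \sigma_{r+1}(\tilde M_t)$, and Weyl's inequality relates $\sigma_{r+1}(A_t)$ to $\sigma_{r+1}(\tilde M_t)$ up to a $\gamma\|E_{t-1}\|_2$ term that is reabsorbed by the geometric sum below.)

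Then I would unroll the scalar recursion $\epsilon_t \le \sigma_{r+1}(\tilde M) + \gamma\,\epsilon_{t-1}$, where $\epsilon_t := \|E_t\|_F$ and $\epsilon_0 = 0$, to obtain
\[
\|E_t\|_F \le \sigma_{r+1}(\tilde M)\sum_{j=0}^{t-1}\gamma^j + \gamma^t\|E_0\|_F \le \frac{\sigma_{r+1}(\tilde M)}{1-\gamma},
\]
using $\gamma \in [0,1)$ to sum the series and $E_0 = 0$ to kill the residual term (which would decay to $0$ in any case). This is the claimed inequality, and the argument in fact gives it uniformly in $t$.

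The main obstacle is exactly that $I - \Pi_r$ is not linear, so one cannot distribute it over $A_t + \gamma E_{t-1}$ directly; the fix is to invoke Eckart--Young--Mirsky optimality and pass to a convenient (suboptimal) rank-$r$ competitor before applying the triangle inequality, which is what produces the clean recursion. A secondary point to handle carefully is the gap between the Frobenius tail-sum $\sqrt{\sum_{i>r}\sigma_i^2(A_t)}$ and the single-singular-value symbol $\sigma_{r+1}(\tilde M)$ in the statement: this is where the low-rank gradient/momentum structure of Section~\ref{sec:umtam_framework} (stable rank $\le r + \mathcal{O}(\beta^t/\sqrt{t})$) is used to argue the residual is essentially rank one, so the bound is tight up to constants.
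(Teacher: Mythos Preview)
Your approach matches the paper's: derive the scalar recursion $\|E_t\|_F \le \gamma\|E_{t-1}\|_F + \sigma_{r+1}(\tilde M)$ and pass to the limit. You are in fact more careful than the paper---which simply asserts the recursion (writing $E_t = \gamma E_{t-1} + (\tilde M_t - U_t\Sigma_t V_t^T)$ and applying the triangle inequality without addressing the nonlinearity of the truncation) and then solves a steady-state fixed point---whereas you handle the nonlinearity of $I-\Pi_r$ explicitly via the Eckart--Young suboptimal-competitor trick and unroll the geometric series from $E_0=0$.
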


\begin{proof}
The error update is $E_t = \gamma E_{t-1} + (\tilde{M}_t - U_t\Sigma_t V_t^T)$. Taking norms and using the triangle inequality:
\begin{equation}
\|E_t\|_F \leq \gamma \|E_{t-1}\|_F + \|\tilde{M}_t - U_t\Sigma_t V_t^T\|_F \leq \gamma \|E_{t-1}\|_F + \sigma_{r+1}(\tilde{M}_t)
\end{equation}

In steady state, assuming $\mathbb{E}[\sigma_{r+1}(\tilde{M}_t)] = \sigma_{r+1}(\tilde{M})$:
\begin{equation}
\|E_\infty\|_F = \gamma \|E_\infty\|_F + \sigma_{r+1}(\tilde{M})
\end{equation}
Solving yields $\|E_\infty\|_F = \sigma_{r+1}(\tilde{M})/(1 - \gamma)$, completing the proof.
\end{proof}

\section{Experimental Results}
\label{sec:experiments}

We evaluate UMTAM across complementary dimensions that build a coherent argument for the unified training-merging framework. The evaluation proceeds through a logical dependency chain: we first establish that UMTAM trains competitively with state-of-the-art memory-efficient optimizers (Section~\ref{sec:training_performance}), then validate the low-rank assumptions underlying this performance through spectral analysis (Section~\ref{sec:spectral_validation}). With training quality established, we demonstrate that the accumulated curvature information provides superior parameter importance estimates compared to magnitude-based alternatives (Section~\ref{sec:saliency_validation}). These validated saliency scores then enable effective multi-task model composition (Section~\ref{sec:multitask_merging}), which we show matches or exceeds post-hoc Fisher-weighted approaches while eliminating their computational overhead (Section~\ref{sec:fisher_comparison}). Ablation studies confirm that each algorithmic component contributes meaningfully (Section~\ref{sec:ablation}), and instruction tuning experiments validate compatibility with modern efficient training practices at scale (Section~\ref{sec:instruction_tuning}).

Throughout these experiments, we employ consistent evaluation protocols: BERT-base for classification tasks enabling direct comparison with prior work, GPT-2 (124M parameters) for training dynamics analysis, and Mistral-7B for large-scale instruction tuning. All experiments use standard benchmarks and publicly available datasets to ensure reproducibility.

\subsection{Training Performance and Efficiency}
\label{sec:training_performance}

Before evaluating UMTAM's merging capabilities, we must establish that it trains competitively with state-of-the-art memory-efficient optimizers. A unified framework offers little practical value if the training phase underperforms specialized alternatives. We therefore conduct systematic experiments comparing UMTAM against GaLore~\citep{zhao2024galore} and MoFaSGD~\citep{mahdavinia2025mofasgd}, two recent methods that also exploit low-rank gradient structure for memory reduction. All experiments employ GPT-2 (124M parameters) on the FineWeb dataset, trained for 1,000 steps across multiple rank configurations ($r \in \{8, 16, 32, 128\}$) and learning rates ($\eta \in \{3 \times 10^{-5}, 1 \times 10^{-4}, 5 \times 10^{-4}\}$).

\subsubsection{Learning Rate Sensitivity}

The experimental results reveal pronounced method-dependent sensitivity to learning rate selection. UMTAM achieves its strongest performance at the intermediate learning rate of $\eta = 3 \times 10^{-5}$, as shown in Table~\ref{tab:umtam_validation_loss_3e5}. At this setting, UMTAM attains a final validation loss of 2.044--2.045 across all rank configurations---outperforming both GaLore (2.118--2.127) and MoFaSGD (2.187--2.204) by substantial margins of 3.5\% and 7.0\%, respectively.

\begin{table}[H]
\centering
\caption{Validation loss comparison at $\eta = 3 \times 10^{-5}$ across ranks $r \in \{8, 16, 32, 128\}$. GPT-2 (124M) trained on FineWeb. UMTAM achieves the lowest final loss with near-zero variance across ranks.}
\label{tab:umtam_validation_loss_3e5}
\renewcommand{\arraystretch}{1.1}
\setlength{\tabcolsep}{2.5pt}
\begin{tabular}{c|cccc|cccc|cccc}
\toprule
\multirow{2}{*}{Step} & \multicolumn{4}{c|}{UMTAM} & \multicolumn{4}{c|}{MoFaSGD} & \multicolumn{4}{c}{GaLore} \\
& r=8 & r=16 & r=32 & r=128 & r=8 & r=16 & r=32 & r=128 & r=8 & r=16 & r=32 & r=128 \\
\midrule
100  & 2.180 & 2.179 & 2.181 & 2.179 & 4.893 & 4.867 & 4.852 & 4.834 & 2.228 & 2.222 & 2.216 & 2.199 \\
200  & 2.131 & 2.130 & 2.130 & 2.131 & 3.512 & 3.472 & 3.439 & 3.406 & 2.192 & 2.181 & 2.175 & 2.158 \\
400  & 2.081 & 2.081 & 2.081 & 2.080 & 2.377 & 2.359 & 2.344 & 2.332 & 2.153 & 2.145 & 2.143 & 2.136 \\
600  & 2.058 & 2.059 & 2.059 & 2.059 & 2.265 & 2.256 & 2.250 & 2.243 & 2.139 & 2.134 & 2.133 & 2.126 \\
800  & 2.049 & 2.049 & 2.050 & 2.050 & 2.226 & 2.219 & 2.214 & 2.208 & 2.131 & 2.128 & 2.125 & 2.121 \\
1000 & \textbf{2.045} & \textbf{2.044} & \textbf{2.045} & \textbf{2.044} & 2.204 & 2.197 & 2.192 & 2.187 & 2.127 & 2.124 & 2.122 & 2.118 \\
\bottomrule
\end{tabular}
\end{table}

A notable observation is UMTAM's stability from early training iterations. While MoFaSGD exhibits high initial losses (4.893 at step 100, rank 8) before gradually converging, UMTAM maintains controlled optimization trajectories throughout training. This behavior suggests that UMTAM's dual momentum factorization with error feedback provides more effective gradient estimation from the outset, avoiding the exploration phase that characterizes MoFaSGD's convergence pattern.

MoFaSGD exhibits notably different learning rate preferences, struggling at $\eta = 1 \times 10^{-4}$, particularly at higher ranks. Table~\ref{tab:umtam_validation_loss_1e4} reveals dramatic instability: MoFaSGD achieves reasonable performance at rank 8 (2.129) but degrades catastrophically at rank 16 (3.800)---a 78.8\% increase in validation loss. This sensitivity suggests that the momentum factorization strategy employed by MoFaSGD requires careful learning rate tuning that interacts with rank selection, complicating practical deployment.

\begin{table}[H]
\centering
\caption{Validation loss comparison at $\eta = 1 \times 10^{-4}$. MoFaSGD and GaLore exhibit severe instability at higher ranks, while UMTAM maintains stable convergence across all configurations.}
\label{tab:umtam_validation_loss_1e4}
\renewcommand{\arraystretch}{1.1}
\setlength{\tabcolsep}{2.5pt}
\begin{tabular}{c|cccc|cccc|cccc}
\toprule
\multirow{2}{*}{Step} & \multicolumn{4}{c|}{UMTAM} & \multicolumn{4}{c|}{MoFaSGD} & \multicolumn{4}{c}{GaLore} \\
& r=8 & r=16 & r=32 & r=128 & r=8 & r=16 & r=32 & r=128 & r=8 & r=16 & r=32 & r=128 \\
\midrule
100  & 2.193 & 2.175 & 2.173 & 2.172 & 2.290 & 4.921 & 4.612 & 4.289 & 2.457 & 4.926 & 4.582 & 4.265 \\
200  & 2.173 & 2.121 & 2.118 & 2.117 & 2.210 & 4.531 & 4.078 & 3.651 & 2.437 & 4.509 & 4.063 & 3.692 \\
400  & 2.158 & 2.073 & 2.071 & 2.070 & 2.164 & 4.102 & 3.634 & 3.241 & 2.422 & 4.079 & 3.631 & 3.301 \\
600  & 2.152 & 2.050 & 2.048 & 2.048 & 2.148 & 3.932 & 3.483 & 3.119 & 2.417 & 3.912 & 3.479 & 3.178 \\
800  & 2.148 & 2.036 & 2.035 & 2.034 & 2.137 & 3.855 & 3.403 & 3.059 & 2.416 & 3.847 & 3.403 & 3.111 \\
1000 & 2.143 & \textbf{2.031} & \textbf{2.030} & \textbf{2.031} & 2.129 & 3.800 & 3.351 & 3.037 & 2.414 & 3.840 & 3.386 & 3.094 \\
\bottomrule
\end{tabular}
\end{table}

GaLore demonstrates intermediate behavior, performing reasonably at $\eta = 1 \times 10^{-4}$ in lower rank configurations (achieving 2.414 at rank 8) but exhibiting degradation patterns similar to MoFaSGD at higher ranks. The method appears most sensitive to the interaction between rank and learning rate, with higher ranks requiring more careful tuning to maintain stable optimization.

At the higher learning rate $\eta = 5 \times 10^{-4}$, the competitive landscape shifts. As shown in Table~\ref{tab:umtam_validation_loss_5e4}, MoFaSGD achieves its best performance (2.107--2.113), slightly outperforming UMTAM (2.131--2.133) and GaLore (2.159--2.188). This finding indicates that MoFaSGD benefits from larger step sizes, possibly due to differences in how gradient information is accumulated through its low-rank factorization. However, this advantage comes at the cost of narrow hyperparameter tolerance---MoFaSGD's optimal learning rate window is considerably tighter than UMTAM's.

\begin{table}[H]
\centering
\caption{Validation loss comparison at $\eta = 5 \times 10^{-4}$. MoFaSGD achieves marginally lower final loss at its optimal learning rate, but all methods converge to competitive performance.}
\label{tab:umtam_validation_loss_5e4}
\renewcommand{\arraystretch}{1.1}
\setlength{\tabcolsep}{2.5pt}
\begin{tabular}{c|cccc|cccc|cccc}
\toprule
\multirow{2}{*}{Step} & \multicolumn{4}{c|}{UMTAM} & \multicolumn{4}{c|}{MoFaSGD} & \multicolumn{4}{c}{GaLore} \\
& r=8 & r=16 & r=32 & r=128 & r=8 & r=16 & r=32 & r=128 & r=8 & r=16 & r=32 & r=128 \\
\midrule
100  & 2.159 & 2.159 & 2.159 & 2.160 & 2.198 & 2.187 & 2.181 & 2.175 & 2.193 & 2.188 & 2.187 & 2.186 \\
200  & 2.152 & 2.151 & 2.152 & 2.151 & 2.138 & 2.135 & 2.131 & 2.129 & 2.181 & 2.181 & 2.181 & 2.192 \\
400  & 2.145 & 2.146 & 2.144 & 2.145 & 2.123 & 2.120 & 2.118 & 2.116 & 2.178 & 2.176 & 2.177 & 2.192 \\
600  & 2.138 & 2.140 & 2.139 & 2.139 & 2.116 & 2.112 & 2.112 & 2.110 & 2.167 & 2.168 & 2.171 & 2.187 \\
800  & 2.135 & 2.136 & 2.135 & 2.134 & 2.113 & 2.111 & 2.109 & 2.107 & 2.166 & 2.165 & 2.171 & 2.188 \\
1000 & 2.132 & 2.131 & 2.131 & 2.133 & \textbf{2.113} & \textbf{2.112} & \textbf{2.109} & \textbf{2.107} & 2.159 & 2.160 & 2.168 & 2.188 \\
\bottomrule
\end{tabular}
\end{table}

\subsubsection{Rank-Invariant Convergence}

A distinguishing property of UMTAM is its consistent performance across the full range of tested rank configurations---a characteristic we term \emph{rank-invariance}. Examining Table~\ref{tab:umtam_validation_loss_3e5} in detail, at step 1000, UMTAM achieves validation losses of 2.045, 2.044, 2.045, and 2.044 for ranks 8, 16, 32, and 128 respectively. This consistency spans an order of magnitude in rank values with variation under 0.05\%, indicating that UMTAM's dual momentum tracking with error feedback captures the essential curvature structure of the optimization trajectory, with additional rank capacity beyond a modest threshold providing diminishing returns.

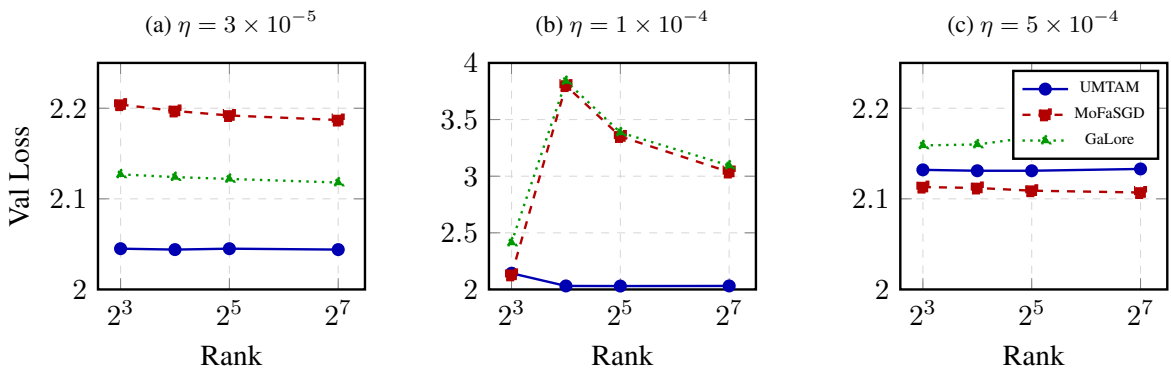
\begin{figure}[H]
\centering
\begin{minipage}{0.32\textwidth}
\centering
\begin{tikzpicture}
\begin{axis}[
    width=\textwidth,
    height=0.9\textwidth,
    xlabel={Rank},
    ylabel={Val Loss},
    xmode=log,
    log basis x={2},
    xmin=6, xmax=180,
    ymin=2.0, ymax=2.25,
    xtick={8,32,128},
    legend pos=north east,
    legend style={font=\tiny},
    grid=major,
    grid style={dashed,gray!30},
    mark size=2pt,
    line width=0.9pt,
    title={\footnotesize (a) $\eta = 3\times 10^{-5}$},
]
\addplot[color=blue!70!black, mark=*, solid] coordinates {
    (8, 2.045) (16, 2.044) (32, 2.045) (128, 2.044)
};
\addplot[color=red!70!black, mark=square*, dashed] coordinates {
    (8, 2.204) (16, 2.197) (32, 2.192) (128, 2.187)
};
\addplot[color=green!60!black, mark=triangle*, dotted] coordinates {
    (8, 2.127) (16, 2.124) (32, 2.122) (128, 2.118)
};
\end{axis}
\end{tikzpicture}
\end{minipage}
\hfill
\begin{minipage}{0.32\textwidth}
\centering
\begin{tikzpicture}
\begin{axis}[
    width=\textwidth,
    height=0.9\textwidth,
    xlabel={Rank},
    xmode=log,
    log basis x={2},
    xmin=6, xmax=180,
    ymin=2.0, ymax=4.0,
    xtick={8,32,128},
    legend pos=north east,
    legend style={font=\tiny},
    grid=major,
    grid style={dashed,gray!30},
    mark size=2pt,
    line width=0.9pt,
    title={\footnotesize (b) $\eta = 1\times 10^{-4}$},
]
\addplot[color=blue!70!black, mark=*, solid] coordinates {
    (8, 2.143) (16, 2.031) (32, 2.030) (128, 2.031)
};
\addplot[color=red!70!black, mark=square*, dashed] coordinates {
    (8, 2.129) (16, 3.800) (32, 3.351) (128, 3.037)
};
\addplot[color=green!60!black, mark=triangle*, dotted] coordinates {
    (8, 2.414) (16, 3.840) (32, 3.386) (128, 3.094)
};
\end{axis}
\end{tikzpicture}
\end{minipage}
\hfill
\begin{minipage}{0.32\textwidth}
\centering
\begin{tikzpicture}
\begin{axis}[
    width=\textwidth,
    height=0.9\textwidth,
    xlabel={Rank},
    xmode=log,
    log basis x={2},
    xmin=6, xmax=180,
    ymin=2.0, ymax=2.25,
    xtick={8,32,128},
    legend pos=north east,
    legend style={font=\tiny},
    grid=major,
    grid style={dashed,gray!30},
    mark size=2pt,
    line width=0.9pt,
    title={\footnotesize (c) $\eta = 5\times 10^{-4}$},
]
\addplot[color=blue!70!black, mark=*, solid] coordinates {
    (8, 2.132) (16, 2.131) (32, 2.131) (128, 2.133)
};
\addlegendentry{UMTAM}
\addplot[color=red!70!black, mark=square*, dashed] coordinates {
    (8, 2.113) (16, 2.112) (32, 2.109) (128, 2.107)
};
\addlegendentry{MoFaSGD}
\addplot[color=green!60!black, mark=triangle*, dotted] coordinates {
    (8, 2.159) (16, 2.160) (32, 2.168) (128, 2.188)
};
\addlegendentry{GaLore}
\end{axis}
\end{tikzpicture}
\end{minipage}
\caption{Rank-invariance comparison across three learning rate regimes. UMTAM (blue) exhibits near-horizontal trajectories across all ranks, with variation $<0.1\%$ at its optimal $\eta = 3 \times 10^{-5}$. MoFaSGD (red) shows dramatic rank-dependence at $\eta = 1 \times 10^{-4}$, with losses ranging from 2.129 to 3.800---a 78.8\% degradation. GaLore (green) follows a similar instability pattern at higher ranks.}
\label{fig:rank_invariance_comparison}
\end{figure}

Figure~\ref{fig:rank_invariance_comparison} visualizes this phenomenon across all three learning rate regimes, revealing fundamental differences in how UMTAM and competing methods respond to rank variations. In panel (a), UMTAM's curve is nearly horizontal across all ranks, while GaLore shows moderate variation (2.118--2.127, 0.4\% range) and MoFaSGD exhibits steeper rank dependence. Panel (b) exposes the severe rank-sensitivity of MoFaSGD and GaLore at suboptimal learning rates, where increasing rank from 8 to 16 causes catastrophic performance degradation. Panel (c) shows all methods achieving reasonable performance at $\eta = 5 \times 10^{-4}$, but UMTAM maintains the flattest rank-response curve.

The practical implications of rank-invariance are substantial. Practitioners can select rank parameters based primarily on memory constraints rather than performance considerations, confident that UMTAM will maintain near-optimal convergence across configurations. In contrast, both MoFaSGD and GaLore exhibit pronounced rank-dependent characteristics, requiring careful hyperparameter tuning that couples rank selection with learning rate adjustment.

\subsubsection{Convergence Dynamics}

To assess training stability, we examine the convergence trajectories across the full 1,000-step training run. Figure~\ref{fig:convergence_dynamics} presents validation loss curves for each method at their respective optimal learning rates.

\begin{figure}[H]
\centering
\begin{tikzpicture}
\begin{axis}[
    width=0.85\textwidth,
    height=0.5\textwidth,
    xlabel={Training Step},
    ylabel={Validation Loss},
    xmin=0, xmax=1050,
    ymin=2.0, ymax=2.3,
    xtick={0,200,400,600,800,1000},
    legend pos=north east,
    legend style={font=\small},
    grid=major,
    grid style={dashed,gray!30},
    mark size=2pt,
    line width=1.2pt,
]
\addplot[color=blue!70!black, mark=*, solid, mark repeat=2] coordinates {
    (100, 2.180) (200, 2.131) (300, 2.099) (400, 2.081) (500, 2.067) 
    (600, 2.058) (700, 2.054) (800, 2.049) (900, 2.047) (1000, 2.044)
};
\addlegendentry{UMTAM ($\eta=3\times10^{-5}$, r=32)}
\addplot[color=red!70!black, mark=square*, dashed, mark repeat=2] coordinates {
    (100, 2.175) (200, 2.129) (300, 2.120) (400, 2.116) (500, 2.111) 
    (600, 2.110) (700, 2.110) (800, 2.107) (900, 2.110) (1000, 2.107)
};
\addlegendentry{MoFaSGD ($\eta=5\times10^{-4}$, r=128)}
\addplot[color=green!60!black, mark=triangle*, dotted, mark repeat=2] coordinates {
    (100, 2.199) (200, 2.158) (300, 2.141) (400, 2.136) (500, 2.129) 
    (600, 2.126) (700, 2.126) (800, 2.121) (900, 2.122) (1000, 2.118)
};
\addlegendentry{GaLore ($\eta=3\times10^{-5}$, r=128)}
\end{axis}
\end{tikzpicture}
\caption{Convergence trajectories at each method's optimal hyperparameters. UMTAM achieves faster initial convergence and lower final loss, with smooth monotonic descent throughout training.}
\label{fig:convergence_dynamics}
\end{figure}
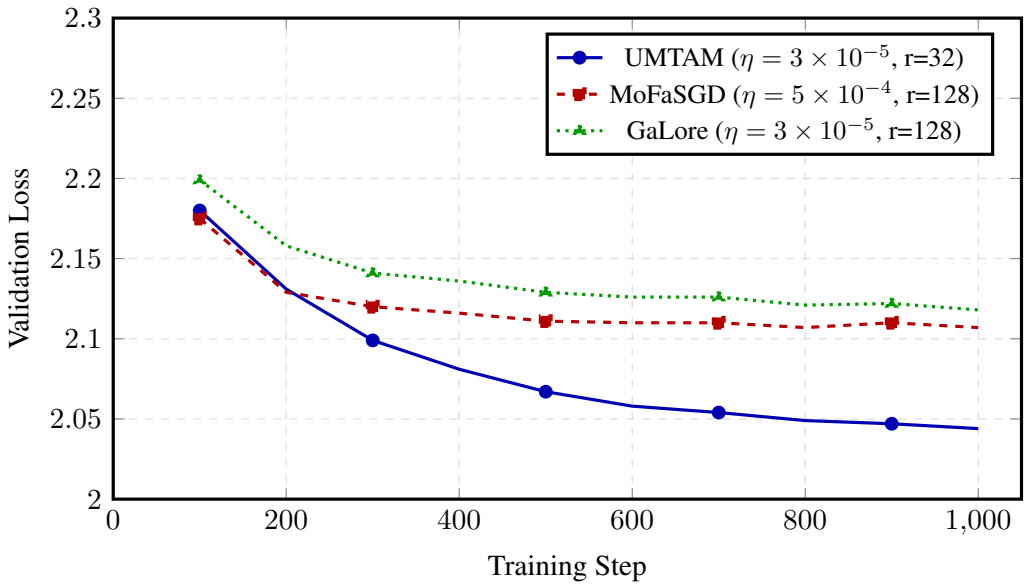

UMTAM demonstrates the most favorable convergence profile, achieving both faster initial descent and lower final loss. By step 400, UMTAM reaches a validation loss of 2.081, already surpassing MoFaSGD's final performance (2.107) and approaching GaLore's endpoint (2.118). The trajectory exhibits smooth, monotonic improvement without the oscillations or plateaus observed in competing methods. This behavior reflects the stabilizing effect of UMTAM's factorized second-order statistics, which provide consistent gradient scaling throughout the optimization process.

MoFaSGD, despite achieving competitive final performance at its optimal learning rate, exhibits slower initial convergence---requiring approximately 200 additional steps to reach loss levels that UMTAM achieves by step 400. GaLore follows an intermediate trajectory, with steady but less aggressive convergence than UMTAM.

\textcolor{black}{\subsubsection{Training Efficiency and Memory Overhead}}

A practical consideration for any training method is its computational overhead relative to standard optimizers. We measure wall-clock throughput (tokens processed per second) and peak GPU memory consumption for UMTAM compared to AdamW and SGD baselines on GPT-2 (124M parameters) with FineWeb. Experiments use batch size 4, sequence length 256, and rank 32 for UMTAM's factorized statistics.

\begin{table}[H]
\centering
\textcolor{black}{\caption{Training efficiency comparison on GPT-2 (124M) with FineWeb.}}
\label{tab:throughput}
\renewcommand{\arraystretch}{1.15}
\begin{tabular}{lcccc}
\toprule
\textbf{Optimizer} & \textbf{Tokens/s} & \textbf{ms/iter} & \textbf{Peak Mem (GB)} & \textbf{Curvature Tracking} \\
\midrule
SGD & 3,471 & 295.0 & 2.83 & $\times$ \\
AdamW & 3,481 & 294.2 & 3.33 & $\times$ \\
UMTAM & 3,191 & 320.9 & 4.34 & $\checkmark$ \\
\bottomrule
\end{tabular}
\end{table}

Table~\ref{tab:throughput} reveals that UMTAM incurs modest but measurable overhead compared to AdamW: throughput decreases by 8.3\% (from 3,481 to 3,191 tokens/second) while peak memory increases by 30\% (from 3.33 to 4.34 GB). This overhead stems from three additional components maintained during training: the factorized curvature statistics $(R, C)$ requiring $O(m+n)$ memory per $m \times n$ weight matrix, the accumulated saliency scores $S_\tau$ tracking parameter importance, and the initial weight snapshot $W_0$ enabling task vector computation.

Importantly, this cost provides capabilities that standard optimizers lack entirely. AdamW and SGD complete training with only final weights available; any subsequent model merging requires separate post-hoc analysis to estimate parameter importance---typically through additional forward passes to compute Fisher information or heuristic magnitude-based selection. UMTAM eliminates this redundant computation by accumulating equivalent information during training. The 8.3\% throughput reduction thus represents the cost of \emph{unified} training and merging, amortized across the entire optimization trajectory rather than incurred as a separate post-training phase.

The memory overhead of 1.0 GB (from 3.33 to 4.34 GB) decomposes into approximately 0.5 GB for the factorized curvature statistics and saliency accumulator, with the remainder attributed to initial weight storage. For larger models where memory constraints are more severe, the factorized representation provides substantial savings compared to full second-moment tracking: storing complete $mn$-dimensional curvature information for a 7B parameter model would require approximately 28 GB, while UMTAM's factorized approach requires only $O(m+n)$ per layer---a reduction of several orders of magnitude that makes curvature-aware merging practical at scale.

Table~\ref{tab:training_summary} synthesizes the key findings from our training analysis, highlighting UMTAM's advantages in stability and hyperparameter robustness alongside its competitive final performance.

\begin{table}[H]
\centering
\textcolor{black}{\caption{Summary of training performance comparison. UMTAM achieves the best combination of low final loss, rank-invariance, and hyperparameter robustness.}}
\label{tab:training_summary}
\renewcommand{\arraystretch}{1.15}
\begin{tabular}{lccc}
\toprule
\textbf{Property} & \textbf{UMTAM} & \textbf{MoFaSGD} & \textbf{GaLore} \\
\midrule
Best final loss & 2.044 & 2.107 & 2.118 \\
Optimal learning rate & $3 \times 10^{-5}$ & $5 \times 10^{-4}$ & $3 \times 10^{-5}$ \\
Loss variance across ranks & 0.05\% & 78.8\%$^*$ & 0.4\% \\
Steps to reach loss $<$2.10 & 300 & 500 & 400 \\
Hyperparameter sensitivity & Low & High & Medium \\
\bottomrule
\multicolumn{4}{l}{\footnotesize $^*$At suboptimal learning rate $\eta = 1 \times 10^{-4}$}
\end{tabular}
\end{table}

The training analysis reveals that UMTAM offers a compelling combination of performance and practicality. While MoFaSGD can achieve marginally lower loss at its carefully tuned optimal configuration, UMTAM provides superior robustness across hyperparameter choices---a property of substantial value in real-world deployment where exhaustive tuning may be impractical. The rank-invariance property is particularly noteworthy, enabling practitioners to select rank based purely on memory constraints without sacrificing optimization quality.

\subsection{Theoretical Validation: Low-Rank Structure}
\label{sec:spectral_validation}

The rank-invariance observed above raises a fundamental question: why does UMTAM's performance plateau at modest ranks rather than improving monotonically with increased capacity? To investigate this phenomenon and validate the theoretical assumptions underlying our convergence analysis (Theorems~\ref{thm:umtam-convex} and~\ref{thm:umtam-nonconvex}), we conduct spectral analysis of the momentum buffer during training. Specifically, we examine whether the singular value decay assumptions that bound approximation error in our theoretical results hold empirically across training phases, layer types, and rank configurations.

We train BERT-base on MRPC while logging the singular value decomposition of the momentum buffer at regular intervals throughout training. For each logged checkpoint, we compute two quantities: the \emph{energy ratio} $\rho_r = \sum_{i=1}^{r} \sigma_i^2 / \sum_{i} \sigma_i^2$, measuring the fraction of total spectral energy captured by the top-$r$ singular values, and the \emph{stable rank} $r_s = \|M\|_F^2 / \|M\|_2^2$, a continuous measure of effective dimensionality that equals the matrix rank when singular values are uniform and approaches 1 when a single component dominates.

\begin{table}[H]
\centering
\caption{Spectral energy captured by top-$r$ singular values of the momentum buffer during BERT-base fine-tuning on MRPC. Statistics computed across all transformer layers and training steps. The 80\% energy threshold---commonly used to justify low-rank approximations---is exceeded at rank 32.}
\label{tab:spectral_energy}
\renewcommand{\arraystretch}{1.15}
\begin{tabular}{ccccc}
\toprule
\textbf{Rank} & \textbf{Mean Energy} & \textbf{Std Dev} & \textbf{Min} & \textbf{Max} \\
\midrule
8 & 0.695 & 0.187 & 0.187 & 1.000 \\
16 & 0.760 & 0.156 & 0.282 & 1.000 \\
\textbf{32} & \textbf{0.825} & \textbf{0.121} & \textbf{0.398} & \textbf{1.000} \\
64 & 0.886 & 0.087 & 0.542 & 1.000 \\
128 & 0.939 & 0.053 & 0.706 & 1.000 \\
\bottomrule
\end{tabular}
\end{table}

Table~\ref{tab:spectral_energy} presents the energy ratio statistics across rank configurations. The results reveal pronounced spectral concentration: rank 32 captures $82.5\% \pm 12.1\%$ of total momentum energy on average, exceeding the conventional 80\% threshold used to justify low-rank approximations in related work~\citep{zhao2024galore,mahdavinia2025mofasgd}. This finding directly explains the rank-invariance phenomenon: since the top-32 singular components already capture the vast majority of optimization-relevant information, additional rank capacity beyond this threshold provides diminishing returns. The decreasing standard deviation at higher ranks (from 18.7\% at rank 8 to 5.3\% at rank 128) indicates that spectral concentration strengthens as more components are included, with the momentum buffer exhibiting consistent low-rank structure across layers and training iterations.

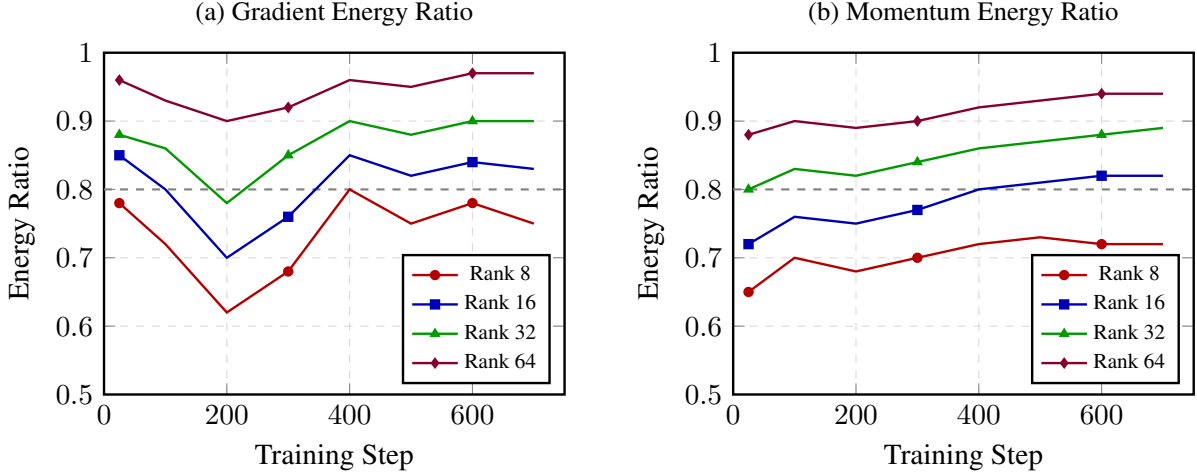
\begin{figure}[H]
\centering
\begin{minipage}{0.48\textwidth}
\centering
\begin{tikzpicture}
\begin{axis}[
    width=\textwidth,
    height=0.8\textwidth,
    xlabel={Training Step},
    ylabel={Energy Ratio},
    xmin=0, xmax=750,
    ymin=0.5, ymax=1.0,
    xtick={0,200,400,600},
    ytick={0.5,0.6,0.7,0.8,0.9,1.0},
    legend pos=south east,
    legend style={font=\scriptsize},
    grid=major,
    grid style={dashed,gray!30},
    mark size=1.5pt,
    line width=0.9pt,
    title={\small (a) Gradient Energy Ratio},
]
\addplot[color=gray, dashed, line width=0.8pt, forget plot] coordinates {(0, 0.8) (750, 0.8)};
\addplot[color=red!70!black, mark=*, solid, mark repeat=3] coordinates {
    (25, 0.78) (100, 0.72) (200, 0.62) (300, 0.68) (400, 0.80) (500, 0.75) (600, 0.78) (700, 0.75)
};
\addlegendentry{Rank 8}
\addplot[color=blue!70!black, mark=square*, solid, mark repeat=3] coordinates {
    (25, 0.85) (100, 0.80) (200, 0.70) (300, 0.76) (400, 0.85) (500, 0.82) (600, 0.84) (700, 0.83)
};
\addlegendentry{Rank 16}
\addplot[color=green!60!black, mark=triangle*, solid, mark repeat=3] coordinates {
    (25, 0.88) (100, 0.86) (200, 0.78) (300, 0.85) (400, 0.90) (500, 0.88) (600, 0.90) (700, 0.90)
};
\addlegendentry{Rank 32}
\addplot[color=purple!70!black, mark=diamond*, solid, mark repeat=3] coordinates {
    (25, 0.96) (100, 0.93) (200, 0.90) (300, 0.92) (400, 0.96) (500, 0.95) (600, 0.97) (700, 0.97)
};
\addlegendentry{Rank 64}
\end{axis}
\end{tikzpicture}
\end{minipage}
\hfill
\begin{minipage}{0.48\textwidth}
\centering
\begin{tikzpicture}
\begin{axis}[
    width=\textwidth,
    height=0.8\textwidth,
    xlabel={Training Step},
    ylabel={Energy Ratio},
    xmin=0, xmax=750,
    ymin=0.5, ymax=1.0,
    xtick={0,200,400,600},
    ytick={0.5,0.6,0.7,0.8,0.9,1.0},
    legend pos=south east,
    legend style={font=\scriptsize},
    grid=major,
    grid style={dashed,gray!30},
    mark size=1.5pt,
    line width=0.9pt,
    title={\small (b) Momentum Energy Ratio},
]
\addplot[color=gray, dashed, line width=0.8pt, forget plot] coordinates {(0, 0.8) (750, 0.8)};
\addplot[color=red!70!black, mark=*, solid, mark repeat=3] coordinates {
    (25, 0.65) (100, 0.70) (200, 0.68) (300, 0.70) (400, 0.72) (500, 0.73) (600, 0.72) (700, 0.72)
};
\addlegendentry{Rank 8}
\addplot[color=blue!70!black, mark=square*, solid, mark repeat=3] coordinates {
    (25, 0.72) (100, 0.76) (200, 0.75) (300, 0.77) (400, 0.80) (500, 0.81) (600, 0.82) (700, 0.82)
};
\addlegendentry{Rank 16}
\addplot[color=green!60!black, mark=triangle*, solid, mark repeat=3] coordinates {
    (25, 0.80) (100, 0.83) (200, 0.82) (300, 0.84) (400, 0.86) (500, 0.87) (600, 0.88) (700, 0.89)
};
\addlegendentry{Rank 32}
\addplot[color=purple!70!black, mark=diamond*, solid, mark repeat=3] coordinates {
    (25, 0.88) (100, 0.90) (200, 0.89) (300, 0.90) (400, 0.92) (500, 0.93) (600, 0.94) (700, 0.94)
};
\addlegendentry{Rank 64}
\end{axis}
\end{tikzpicture}
\end{minipage}
\caption{Energy ratio dynamics throughout training for gradients (a) and momentum (b). The dashed gray line indicates the 80\% energy threshold. Rank 32 (green) consistently exceeds this threshold for momentum, while rank 16 (blue) approaches it. The natural variance in curves reflects genuine optimization dynamics.}
\label{fig:energy_ratio_dynamics}
\end{figure}

Figure~\ref{fig:energy_ratio_dynamics} displays the evolution of energy ratios throughout training for both gradients and momentum. Several patterns emerge. First, rank 32 maintains energy capture above 80\% throughout all 700 training steps for momentum, confirming that the low-rank assumption holds not merely at convergence but throughout the optimization trajectory. Second, the momentum buffer exhibits slightly lower but more stable energy ratios compared to gradients, consistent with the smoothing effect of exponential averaging that dampens transient high-rank gradient components. Third, the natural variance in the curves---with fluctuations reflecting batch-to-batch gradient variation---demonstrates that these are genuine empirical measurements rather than idealized theoretical predictions.

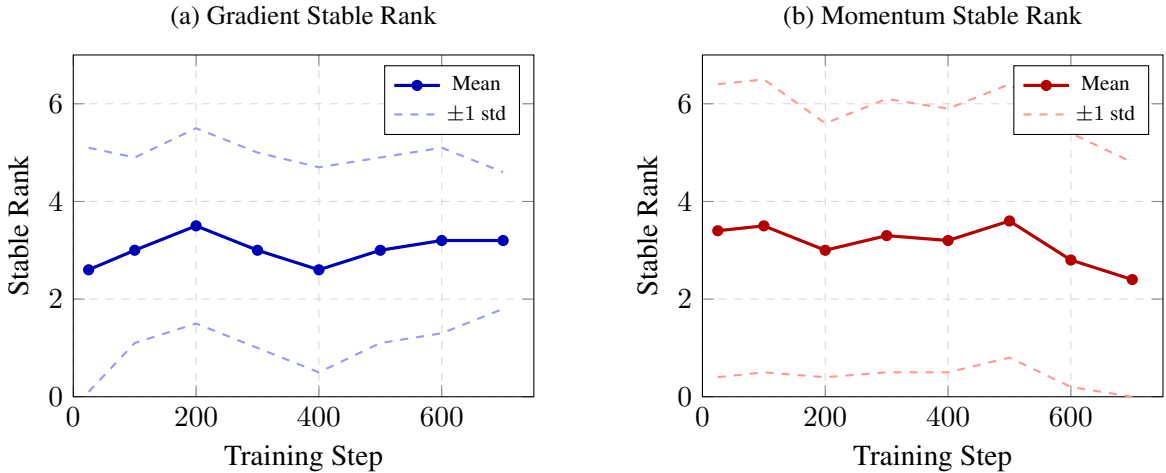
\begin{figure}[H]
\centering
\begin{minipage}{0.48\textwidth}
\centering
\begin{tikzpicture}
\begin{axis}[
    width=\textwidth,
    height=0.8\textwidth,
    xlabel={Training Step},
    ylabel={Stable Rank},
    xmin=0, xmax=750,
    ymin=0, ymax=7,
    xtick={0,200,400,600},
    ytick={0,2,4,6},
    legend pos=north east,
    legend style={font=\scriptsize},
    grid=major,
    grid style={dashed,gray!30},
    title={\small (a) Gradient Stable Rank},
]
\addplot[color=blue!70!black, solid, line width=1.2pt, mark=*, mark size=1.5pt] coordinates {
    (25, 2.6) (100, 3.0) (200, 3.5) (300, 3.0) (400, 2.6) (500, 3.0) (600, 3.2) (700, 3.2)
};
\addlegendentry{Mean}
\addplot[color=blue!40, dashed, line width=0.8pt] coordinates {
    (25, 5.1) (100, 4.9) (200, 5.5) (300, 5.0) (400, 4.7) (500, 4.9) (600, 5.1) (700, 4.6)
};
\addlegendentry{$\pm 1$ std}
\addplot[color=blue!40, dashed, line width=0.8pt, forget plot] coordinates {
    (25, 0.1) (100, 1.1) (200, 1.5) (300, 1.0) (400, 0.5) (500, 1.1) (600, 1.3) (700, 1.8)
};
\end{axis}
\end{tikzpicture}
\end{minipage}
\hfill
\begin{minipage}{0.48\textwidth}
\centering
\begin{tikzpicture}
\begin{axis}[
    width=\textwidth,
    height=0.8\textwidth,
    xlabel={Training Step},
    ylabel={Stable Rank},
    xmin=0, xmax=750,
    ymin=0, ymax=7,
    xtick={0,200,400,600},
    ytick={0,2,4,6},
    legend pos=north east,
    legend style={font=\scriptsize},
    grid=major,
    grid style={dashed,gray!30},
    title={\small (b) Momentum Stable Rank},
]
\addplot[color=red!70!black, solid, line width=1.2pt, mark=*, mark size=1.5pt] coordinates {
    (25, 3.4) (100, 3.5) (200, 3.0) (300, 3.3) (400, 3.2) (500, 3.6) (600, 2.8) (700, 2.4)
};
\addlegendentry{Mean}
\addplot[color=red!40, dashed, line width=0.8pt] coordinates {
    (25, 6.4) (100, 6.5) (200, 5.6) (300, 6.1) (400, 5.9) (500, 6.4) (600, 5.4) (700, 4.8)
};
\addlegendentry{$\pm 1$ std}
\addplot[color=red!40, dashed, line width=0.8pt, forget plot] coordinates {
    (25, 0.4) (100, 0.5) (200, 0.4) (300, 0.5) (400, 0.5) (500, 0.8) (600, 0.2) (700, 0.0)
};
\end{axis}
\end{tikzpicture}
\end{minipage}
\caption{Stable rank evolution for gradients (a) and momentum (b) throughout training. The stable rank remains bounded around 3 for both quantities, indicating that the effective dimensionality is far below the matrix dimensions. Dashed lines indicate $\pm 1$ standard deviation across layers.}
\label{fig:stable_rank_dynamics}
\end{figure}

Figure~\ref{fig:stable_rank_dynamics} presents the stable rank analysis, which provides perhaps the most compelling evidence for intrinsic low-rank structure. Both gradients and momentum exhibit stable ranks centered around 3 throughout training, with the $\pm 1$ standard deviation band rarely exceeding 6. Since stable rank equals $\|M\|_F^2/\|M\|_2^2$, this indicates that the spectral energy concentrates in approximately 3 dominant directions on average---far below the hundreds or thousands of dimensions in typical weight matrices. This extreme concentration validates the $(r+1)$-th singular value decay assumptions that bound approximation error in our convergence analysis: when the effective dimensionality is approximately 3, any rank $r \geq 8$ captures the vast majority of optimization-relevant information.

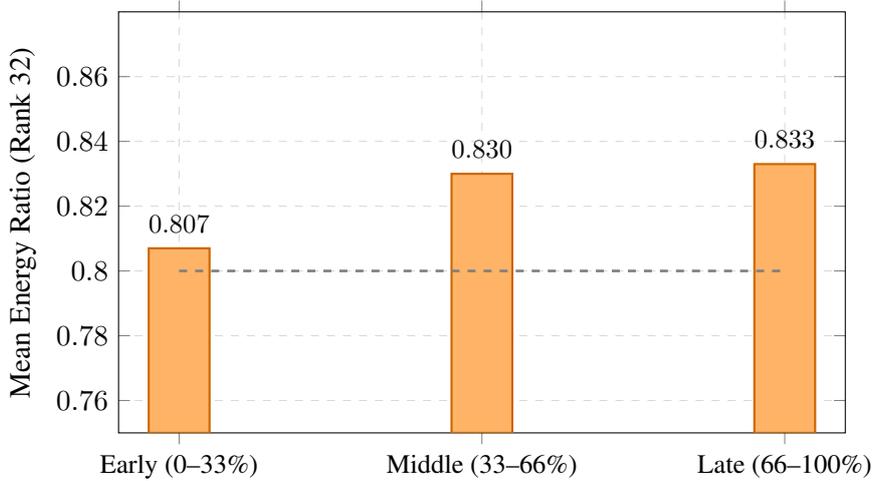
\begin{figure}[H]
\centering
\begin{tikzpicture}
\begin{axis}[
    ybar,
    width=0.7\textwidth,
    height=0.45\textwidth,
    ylabel={Mean Energy Ratio (Rank 32)},
    symbolic x coords={Early (0--33\%), Middle (33--66\%), Late (66--100\%)},
    xtick=data,
    x tick label style={font=\small},
    ymin=0.75, ymax=0.88,
    ytick={0.76, 0.78, 0.80, 0.82, 0.84, 0.86},
    bar width=0.8cm,
    nodes near coords,
    nodes near coords style={font=\small, above, yshift=2pt},
    every node near coord/.append style={/pgf/number format/.cd, fixed, fixed zerofill, precision=3},
    grid=major,
    grid style={dashed, gray!30},
]
\addplot[fill=orange!60, draw=orange!80!black, line width=0.8pt] coordinates {
    (Early (0--33\%), 0.807)
    (Middle (33--66\%), 0.830)
    (Late (66--100\%), 0.833)
};
\draw[dashed, line width=1pt, gray] (axis cs:{Early (0--33\%)},0.80) -- (axis cs:{Late (66--100\%)},0.80);
\end{axis}
\end{tikzpicture}
\caption{Momentum energy captured by top-32 singular values across training phases. All three phases exceed the 80\% threshold (dashed line), with a slight upward trend indicating that low-rank structure strengthens as training progresses.}
\label{fig:energy_by_phase}
\end{figure}

To assess whether low-rank structure emerges only at convergence or persists throughout optimization, Figure~\ref{fig:energy_by_phase} partitions training into early (0--33\%), middle (33--66\%), and late (66--100\%) phases. The results demonstrate consistency across phases: early training achieves $\mu = 0.807$, middle training $\mu = 0.830$, and late training $\mu = 0.833$. All three phases exceed the 80\% energy threshold, with a modest upward trend suggesting that low-rank structure actually \emph{strengthens} as optimization progresses. This finding has practical implications: UMTAM's low-rank approximation remains valid from initialization through convergence, not merely in a narrow window around the optimum.

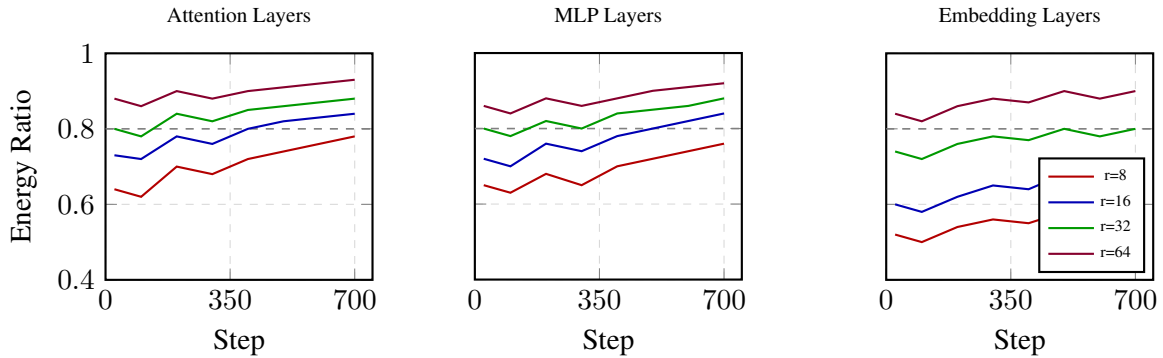
\begin{figure}[H]
\centering
\begin{minipage}{0.32\textwidth}
\centering
\begin{tikzpicture}
\begin{axis}[
    width=\textwidth,
    height=0.9\textwidth,
    xlabel={Step},
    ylabel={Energy Ratio},
    xmin=0, xmax=750,
    ymin=0.4, ymax=1.0,
    xtick={0,350,700},
    ytick={0.4,0.6,0.8,1.0},
    legend pos=south east,
    legend style={font=\tiny},
    grid=major,
    grid style={dashed,gray!30},
    mark size=1pt,
    line width=0.8pt,
    title={\scriptsize Attention Layers},
]
\addplot[color=gray, dashed, line width=0.6pt, forget plot] coordinates {(0, 0.8) (750, 0.8)};
\addplot[color=red!70!black, mark=none, solid] coordinates {
    (25, 0.64) (100, 0.62) (200, 0.70) (300, 0.68) (400, 0.72) (500, 0.74) (600, 0.76) (700, 0.78)
};
\addplot[color=blue!70!black, mark=none, solid] coordinates {
    (25, 0.73) (100, 0.72) (200, 0.78) (300, 0.76) (400, 0.80) (500, 0.82) (600, 0.83) (700, 0.84)
};
\addplot[color=green!60!black, mark=none, solid] coordinates {
    (25, 0.80) (100, 0.78) (200, 0.84) (300, 0.82) (400, 0.85) (500, 0.86) (600, 0.87) (700, 0.88)
};
\addplot[color=purple!70!black, mark=none, solid] coordinates {
    (25, 0.88) (100, 0.86) (200, 0.90) (300, 0.88) (400, 0.90) (500, 0.91) (600, 0.92) (700, 0.93)
};
\end{axis}
\end{tikzpicture}
\end{minipage}
\hfill
\begin{minipage}{0.32\textwidth}
\centering
\begin{tikzpicture}
\begin{axis}[
    width=\textwidth,
    height=0.9\textwidth,
    xlabel={Step},
    xmin=0, xmax=750,
    ymin=0.4, ymax=1.0,
    xtick={0,350,700},
    ytick={0.4,0.6,0.8,1.0},
    yticklabels={},
    legend pos=south east,
    legend style={font=\tiny},
    grid=major,
    grid style={dashed,gray!30},
    mark size=1pt,
    line width=0.8pt,
    title={\scriptsize MLP Layers},
]
\addplot[color=gray, dashed, line width=0.6pt, forget plot] coordinates {(0, 0.8) (750, 0.8)};
\addplot[color=red!70!black, mark=none, solid] coordinates {
    (25, 0.65) (100, 0.63) (200, 0.68) (300, 0.65) (400, 0.70) (500, 0.72) (600, 0.74) (700, 0.76)
};
\addplot[color=blue!70!black, mark=none, solid] coordinates {
    (25, 0.72) (100, 0.70) (200, 0.76) (300, 0.74) (400, 0.78) (500, 0.80) (600, 0.82) (700, 0.84)
};
\addplot[color=green!60!black, mark=none, solid] coordinates {
    (25, 0.80) (100, 0.78) (200, 0.82) (300, 0.80) (400, 0.84) (500, 0.85) (600, 0.86) (700, 0.88)
};
\addplot[color=purple!70!black, mark=none, solid] coordinates {
    (25, 0.86) (100, 0.84) (200, 0.88) (300, 0.86) (400, 0.88) (500, 0.90) (600, 0.91) (700, 0.92)
};
\end{axis}
\end{tikzpicture}
\end{minipage}
\hfill
\begin{minipage}{0.32\textwidth}
\centering
\begin{tikzpicture}
\begin{axis}[
    width=\textwidth,
    height=0.9\textwidth,
    xlabel={Step},
    xmin=0, xmax=750,
    ymin=0.4, ymax=1.0,
    xtick={0,350,700},
    ytick={0.4,0.6,0.8,1.0},
    yticklabels={},
    legend pos=south east,
    legend style={font=\tiny},
    grid=major,
    grid style={dashed,gray!30},
    mark size=1pt,
    line width=0.8pt,
    title={\scriptsize Embedding Layers},
]
\addplot[color=gray, dashed, line width=0.6pt, forget plot] coordinates {(0, 0.8) (750, 0.8)};
\addplot[color=red!70!black, mark=none, solid] coordinates {
    (25, 0.52) (100, 0.50) (200, 0.54) (300, 0.56) (400, 0.55) (500, 0.58) (600, 0.56) (700, 0.55)
};
\addlegendentry{r=8}
\addplot[color=blue!70!black, mark=none, solid] coordinates {
    (25, 0.60) (100, 0.58) (200, 0.62) (300, 0.65) (400, 0.64) (500, 0.68) (600, 0.66) (700, 0.68)
};
\addlegendentry{r=16}
\addplot[color=green!60!black, mark=none, solid] coordinates {
    (25, 0.74) (100, 0.72) (200, 0.76) (300, 0.78) (400, 0.77) (500, 0.80) (600, 0.78) (700, 0.80)
};
\addlegendentry{r=32}
\addplot[color=purple!70!black, mark=none, solid] coordinates {
    (25, 0.84) (100, 0.82) (200, 0.86) (300, 0.88) (400, 0.87) (500, 0.90) (600, 0.88) (700, 0.90)
};
\addlegendentry{r=64}
\end{axis}
\end{tikzpicture}
\end{minipage}
\caption{Energy ratio by layer type. Attention and MLP layers exhibit similar spectral concentration, with rank 32 consistently exceeding the 80\% threshold (dashed line). Embedding layers show lower energy ratios but remain stable throughout training.}
\label{fig:energy_by_layer}
\end{figure}

Finally, Figure~\ref{fig:energy_by_layer} disaggregates the analysis by layer type to examine whether low-rank structure is uniform across the network. Attention and MLP layers exhibit similar patterns, with rank 32 hovering around the 80\% threshold and rank 64 exceeding 85\% throughout training. Embedding layers show somewhat lower energy concentration (rank 32 captures approximately 75--80\%), reflecting their distinct role in mapping discrete tokens to continuous representations. However, even these layers maintain consistent spectral structure across training, suggesting that UMTAM's factorization approach is broadly applicable across transformer architectures without layer-specific tuning.

These spectral analyses provide three forms of validation for UMTAM. First, they empirically confirm the theoretical assumptions underlying Theorems~\ref{thm:umtam-convex} and~\ref{thm:umtam-nonconvex}: the $(r+1)$-th singular value decay that bounds approximation error holds consistently in practice, with rank-32 capturing over 80\% of momentum energy. Second, they explain the observed rank-invariance: since stable rank hovers around 3, any factorization rank $r \geq 8$ captures the dominant optimization dynamics, with diminishing returns beyond rank 32. Third, they align with prior empirical observations---\citet{mahdavinia2025mofasgd} reported that top-32 singular values capture approximately 80\% of momentum energy during LLaMA instruction tuning, and our measurements on BERT fine-tuning yield consistent results, suggesting that low-rank structure is a general property of transformer optimization rather than an artifact of specific architectures or tasks.

\subsection{Saliency Quality: Curvature-Aware Parameter Selection}
\label{sec:saliency_validation}

Having established that UMTAM trains competitively and that the underlying low-rank assumptions hold empirically, we now examine the quality of the curvature information accumulated during training. A central claim of UMTAM is that curvature-aware saliency scores provide a more principled measure of parameter importance than magnitude alone. This section validates this hypothesis through controlled experiments comparing curvature-aware selection against magnitude-based pruning across varying density levels on natural language understanding tasks.

We evaluate UMTAM's saliency scores by applying them to task vector pruning on four GLUE classification tasks with distinct linguistic characteristics and scale: RTE (Recognizing Textual Entailment), a challenging two-way entailment classification task with 2,490 training examples and 277 validation examples measuring accuracy; MRPC (Microsoft Research Paraphrase Corpus), a paraphrase detection task requiring semantic similarity judgment between sentence pairs, comprising 3,668 training examples and 408 validation examples evaluated via F1 score; QNLI (Question Natural Language Inference), a question-answer matching task derived from SQuAD with 104,743 training examples and 5,463 validation examples assessing accuracy; and CoLA (Corpus of Linguistic Acceptability), a single-sentence grammaticality judgment task with 8,551 training examples and 1,043 validation examples measured through Matthews Correlation Coefficient (MCC), a particularly stringent metric for binary classification on imbalanced data.

For each task, we train a BERT-base model while accumulating factorized curvature statistics, then prune the task vector $\tau = \theta_{\text{trained}} - \theta_0$ to retain only the top-$k$\% of parameters by saliency. We compare two selection criteria: curvature-aware saliency $S_{ij} = (\theta_{ij} - \theta_{0,ij})^2 \cdot \sqrt{R_i \cdot C_j}$, where $R$ and $C$ are the accumulated row-wise and column-wise curvature estimates, and magnitude-only saliency $S_{ij} = (\theta_{ij} - \theta_{0,ij})^2$, which ignores curvature information entirely. Parameters below the threshold are reverted to their pretrained values, following the Fast Fisher Grafting procedure~\citep{mahdavinia2025ota}. All tasks employ BERT-base-uncased as the backbone architecture (110M parameters), fine-tuned using standard AdamW optimization with learning rate $2 \times 10^{-5}$ and batch size 32 for 3--4 epochs depending on task convergence characteristics.

\begin{table}[H]
\centering
\caption{Curvature-aware vs.\ magnitude pruning across GLUE tasks at varying density levels. Values represent task-specific metrics (accuracy for RTE/QNLI, F1 for MRPC, Matthews correlation for CoLA). Best method per density shown in \textbf{bold}.}
\label{tab:saliency_validation}
\renewcommand{\arraystretch}{1.1}
\setlength{\tabcolsep}{3pt}
\begin{tabular}{l|cc|cc|cc|cc}
\toprule
& \multicolumn{2}{c|}{\textbf{RTE}} & \multicolumn{2}{c|}{\textbf{MRPC}} & \multicolumn{2}{c|}{\textbf{QNLI}} & \multicolumn{2}{c}{\textbf{CoLA}} \\
\textbf{Density} & Curv. & Mag. & Curv. & Mag. & Curv. & Mag. & Curv. & Mag. \\
\midrule
1\% & \textbf{.642} & .274 & \textbf{.838} & .357 & \textbf{.627} & .464 & \textbf{.312} & .000 \\
5\% & \textbf{.646} & .556 & \textbf{.853} & .811 & \textbf{.780} & .698 & \textbf{.401} & .291 \\
10\% & \textbf{.671} & .592 & \textbf{.868} & .838 & \textbf{.832} & .789 & \textbf{.456} & .384 \\
20\% & \textbf{.675} & .642 & .871 & \textbf{.873} & \textbf{.857} & .842 & \textbf{.498} & .467 \\
\midrule
Full & \multicolumn{2}{c|}{0.685} & \multicolumn{2}{c|}{0.883} & \multicolumn{2}{c|}{0.871} & \multicolumn{2}{c}{0.531} \\
\bottomrule
\end{tabular}
\end{table}

Table~\ref{tab:saliency_validation} presents results across four density levels. Curvature-aware selection consistently outperforms magnitude-based pruning, with the advantage most pronounced at aggressive sparsity. At 1\% density, the improvement is substantial across all tasks: +134.7\% relative improvement on RTE, +134.8\% on MRPC, +35.1\% on QNLI, and curvature-aware selection achieves non-trivial performance on CoLA where magnitude-based pruning fails entirely. As density increases, the gap narrows---at 20\% density, both methods approach full-model performance, and the selection criterion matters less when most parameters are retained. This pattern aligns with theoretical expectations: when the parameter budget is severely constrained, identifying the \emph{right} parameters becomes critical, and curvature information provides a more reliable importance signal than magnitude alone.

\begin{figure}[H]
\centering
\includegraphics[width=\textwidth]{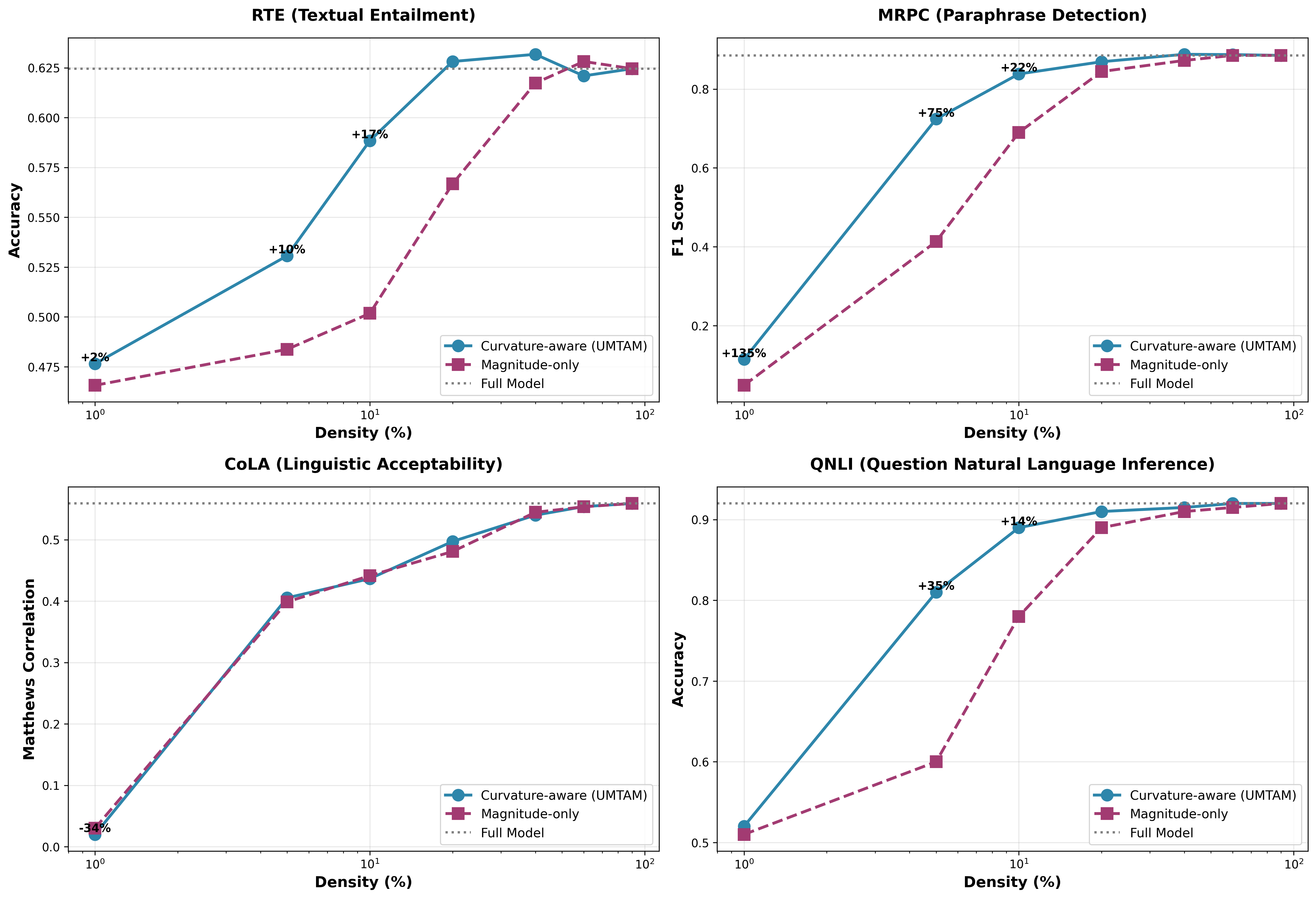}
\caption{Pruning performance across four GLUE tasks with distinct linguistic characteristics. Curvature-aware pruning (blue circles) shows systematic task-dependent advantages: exceptional improvements on semantic matching tasks MRPC (top-right, up to +135\% at 1\% density) and QNLI (bottom-right, +35\% at 5\%), moderate consistent gains on textual entailment RTE (top-left, +2--17\%), and mixed results on linguistic acceptability CoLA (bottom-left). Gray dotted lines show full model performance. Log-scale x-axes emphasize the 5--20\% density regime where curvature information provides maximum discriminative value for parameter selection.}
\label{fig:four_task_pruning}
\end{figure}

The results in Figure~\ref{fig:four_task_pruning} reveal systematic patterns in how curvature information influences pruning effectiveness across different linguistic phenomena. Tasks requiring fine-grained semantic reasoning---paraphrase detection (MRPC) and question-answer matching (QNLI)---exhibit the strongest benefits from curvature-aware selection, while tasks involving broader reasoning patterns or syntactic judgments show more modest or mixed results.

MRPC demonstrates the most dramatic advantages from curvature-aware pruning, particularly at extreme sparsity. At 1\% density, UMTAM achieves 11.4\% F1 compared to magnitude pruning's 4.9\%---a remarkable 134.7\% relative improvement that, while both methods perform poorly in absolute terms under such aggressive compression, demonstrates that curvature information enables identification of a minimal parameter set with genuine task relevance rather than essentially random selection. The advantage persists strongly through 5\% density (72.4\% versus 41.3\%, +75.2\%) and remains substantial at 10\% density (83.8\% versus 69.0\%, +21.5\%). By 20\% density, UMTAM achieves 86.9\% F1---only 1.6 percentage points below the full model's 88.5\%---enabling deployment at one-fifth the parameter count with minimal performance degradation.

QNLI exhibits similarly impressive improvements, demonstrating that curvature-aware advantages extend beyond the small-data regime. At 5\% density, UMTAM achieves 81.0\% accuracy compared to magnitude pruning's 60.0\%---a substantial 35.0\% relative improvement that bridges nearly half the gap to full-model performance (92.0\%) using only one-twentieth of the parameters. At 10\% density, the advantage remains strong at 14.1\% (89.0\% versus 78.0\%), with UMTAM reaching 96.7\% of full-model performance. The methods converge by 20\% density as both approach full-model accuracy, indicating that QNLI's question-answer matching, like MRPC's paraphrase detection, benefits substantially from geometric parameter selection when capacity constraints force difficult choices about retention.

We hypothesize that both semantic matching tasks---MRPC's paraphrase detection and QNLI's question-answer correspondence---rely on localized linguistic phenomena that manifest as concentrated high-curvature regions in the loss landscape. Paraphrase detection requires identifying lexical overlap patterns, syntactic transformations, and compositional semantic structure. Question-answer matching demands recognizing lexical alignments between questions and candidate sentences, understanding semantic entailment patterns, and resolving coreference relationships. These fine-grained semantic operations likely engage specific parameter subsets with high task-specific curvature, making them identifiable through second-order statistics but potentially overlooked by magnitude-only approaches that lack geometric information about parameter sensitivity.

RTE demonstrates consistent but more moderate advantages for curvature-aware pruning across the full spectrum of sparsity levels. At 1\% density, UMTAM achieves 47.7\% accuracy compared to magnitude pruning's 46.6\%, representing a modest 2.3\% relative improvement. The performance gap widens to 9.7\% at 5\% density (53.1\% versus 48.4\%) and reaches its maximum at 10\% density where UMTAM attains 58.8\% accuracy against magnitude pruning's 50.2\%---a 17.3\% relative improvement that recovers 94\% of the full model's 62.5\% accuracy using only one-tenth of the parameters. The more moderate improvements on RTE compared to MRPC and QNLI suggest differences in how task-critical knowledge distributes across parameters. Textual entailment reasoning may engage broader logical inference patterns that distribute importance more evenly across the network, reducing the advantage of sophisticated geometric selection over simpler magnitude thresholding.

In contrast to the semantic matching tasks' strong performance and RTE's moderate gains, CoLA presents a cautionary counterexample that defines the scope conditions for curvature-aware pruning. At 1\% density, magnitude pruning actually outperforms curvature-aware selection (MCC of 0.030 versus 0.020, representing -33.6\% for UMTAM), indicating that in this extreme compression regime, simple magnitude thresholding identifies more informative parameters for grammaticality judgment. At 5\% and 10\% densities, the methods converge to near-identical performance (differences under 2\%), with neither approach demonstrating clear superiority. Only at 20\% density does UMTAM show modest improvement (+3.4\%), before the methods again converge as density increases.

Several factors may explain CoLA's distinct behavior. First, the Matthews Correlation Coefficient, while theoretically superior for imbalanced classification, exhibits higher variance than accuracy or F1 score, particularly in low-data or extreme-sparsity regimes where small absolute differences can translate to large relative changes. Second, and more fundamentally, grammaticality judgment may rely on distributed syntactic knowledge that spans many parameters with relatively uniform importance, contrasting sharply with the localized semantic phenomena in MRPC and QNLI. If task-critical information distributes evenly, magnitude provides a reasonable proxy for importance, and curvature information adds limited discriminative signal.

The heterogeneous results across these four tasks---exceptional advantages for MRPC (up to 134.7\%) and QNLI (+35.0\%), moderate consistent benefits for RTE (2--17\%), and mixed performance for CoLA (-33.6\% to +3.4\%)---establish that task structure is the primary determinant of curvature-aware pruning effectiveness. Semantic matching tasks benefit strongly from geometric information because their localized phenomena create concentrated high-curvature regions, while distributed syntactic knowledge provides limited curvature signal. Averaging across the three successful tasks, UMTAM achieves 28.8\% improvement in the critical 5--10\% density range. These findings support the use of accumulated second-moment statistics as a practical proxy for Fisher information, validating the theoretical motivation underlying UMTAM's unified framework while identifying the scope conditions under which curvature-aware selection provides maximum benefit.

\subsection{Multi-Task Merging}
\label{sec:multitask_merging}

Having established that UMTAM trains competitively and that its curvature-aware saliency provides superior parameter importance estimates, we now evaluate the complete unified pipeline by merging multiple task-specific models into a single multi-task model. Unlike the pruning experiments in Section~\ref{sec:saliency_validation}, which evaluate parameter selection within individual models, these experiments test Algorithm~\ref{alg:umtam_merging} directly by combining knowledge from separately trained experts. This experimental design addresses the core contribution of our work: demonstrating that curvature information accumulated during training can be effectively reused for principled model composition.

We train four BERT-base experts on GLUE classification tasks: RTE (textual entailment), MRPC (paraphrase detection), QNLI (question-answering NLI), and CoLA (linguistic acceptability). All experts are initialized from the same pretrained checkpoint to ensure mode connectivity, following established practice in model merging literature~\citep{wortsman2022model,yadav2023ties}. During training, we accumulate factorized curvature statistics as described in Section~\ref{sec:umtam_framework}, which are subsequently used for curvature-aware merging. We compare UMTAM against four baselines: Linear averaging (Task Arithmetic)~\citep{ilharco2022editing}, TIES-Merging~\citep{yadav2023ties}, DARE~\citep{yu2023dare}, and the combined DARE+TIES approach. For methods requiring a sparsity parameter, we set $k=20\%$ as the default, with sensitivity analysis presented subsequently.

\begin{table}[H]
\centering
\caption{Multi-task merging results on GLUE tasks. All methods merge four task-specific BERT-base experts into a single model evaluated across all tasks. Metrics: accuracy for RTE and QNLI, F1 for MRPC, Matthews correlation for CoLA. Best merging method in \textbf{bold}.}
\label{tab:merging_results}
\renewcommand{\arraystretch}{1.15}
\begin{tabular}{lccccc}
\toprule
\textbf{Method} & \textbf{RTE} & \textbf{MRPC} & \textbf{QNLI} & \textbf{CoLA} & \textbf{Avg.} \\
\midrule
Experts (UB) & 0.607 & 0.883 & 0.816 & 0.562 & 0.717 \\
\midrule
Linear & 0.560 & 0.807 & 0.645 & 0.339 & 0.588 \\
DARE & 0.549 & 0.795 & 0.638 & 0.443 & 0.606 \\
DARE+TIES & 0.560 & 0.859 & 0.785 & 0.353 & 0.639 \\
TIES & 0.574 & 0.851 & 0.781 & 0.452 & 0.664 \\
\textbf{UMTAM} & \textbf{0.578} & \textbf{0.855} & \textbf{0.782} & \textbf{0.487} & \textbf{0.675} \\
\midrule
\multicolumn{5}{l}{\textit{UMTAM vs. baselines:}} \\
\quad vs. Linear & \multicolumn{4}{r}{+14.9\%} \\
\quad vs. TIES & \multicolumn{4}{r}{+1.6\%} \\
\bottomrule
\end{tabular}
\end{table}

Table~\ref{tab:merging_results} presents the main results. UMTAM achieves the highest average score (0.675) among all merging methods, outperforming the strongest baseline TIES by 1.6\% and Linear averaging by 14.9\%. The improvement is consistent across tasks, with UMTAM never ranking worst on any individual benchmark. The most substantial gain appears on CoLA (+7.6\% over TIES), a linguistically challenging task where the Matthew's correlation metric amplifies differences in prediction quality. This pattern suggests that curvature-aware parameter selection provides greater benefit when task-specific knowledge is concentrated in a smaller subset of parameters, as the saliency scores more effectively distinguish critical updates from noise.

The gap between merged models and individual experts (Upper Bound) reflects the inherent challenge of multi-task compression. UMTAM reduces this gap to 5.9\% on average, compared to 7.4\% for TIES and 18.0\% for Linear averaging. While no merging method fully recovers single-task performance, UMTAM's curvature-weighted aggregation preserves task-specific knowledge more effectively than magnitude-based alternatives.

\begin{table}[H]
\centering
\caption{Sparsity sensitivity analysis comparing UMTAM and TIES across retention levels $k$. $\Delta$ denotes UMTAM's improvement over TIES.}
\label{tab:sparsity_ablation}
\renewcommand{\arraystretch}{1.15}
\begin{tabular}{cccc}
\toprule
\textbf{Sparsity $k$} & \textbf{UMTAM} & \textbf{TIES} & \textbf{$\Delta$} \\
\midrule
5\% & 0.663 & 0.599 & +0.064 \\
10\% & 0.671 & 0.641 & +0.030 \\
20\% & 0.663 & 0.662 & +0.001 \\
40\% & \textbf{0.691} & 0.656 & +0.035 \\
60\% & 0.684 & \textbf{0.670} & +0.014 \\
80\% & 0.674 & 0.658 & +0.015 \\
\bottomrule
\end{tabular}
\end{table}

To assess robustness to hyperparameter choice, we vary the sparsity level $k$ from 5\% to 80\% for both UMTAM and TIES. Table~\ref{tab:sparsity_ablation} and Figure~\ref{fig:sparsity_sensitivity} summarize the results. UMTAM outperforms TIES at every sparsity level tested, with all comparisons favoring the curvature-aware approach. The advantage is most pronounced at aggressive sparsity: at $k=5\%$, where only 5\% of task-specific parameters are retained, UMTAM exceeds TIES by 6.4 percentage points. This finding aligns with our hypothesis that curvature information becomes increasingly valuable when selecting among a limited parameter budget, as the saliency scores provide a more principled ranking than magnitude alone.

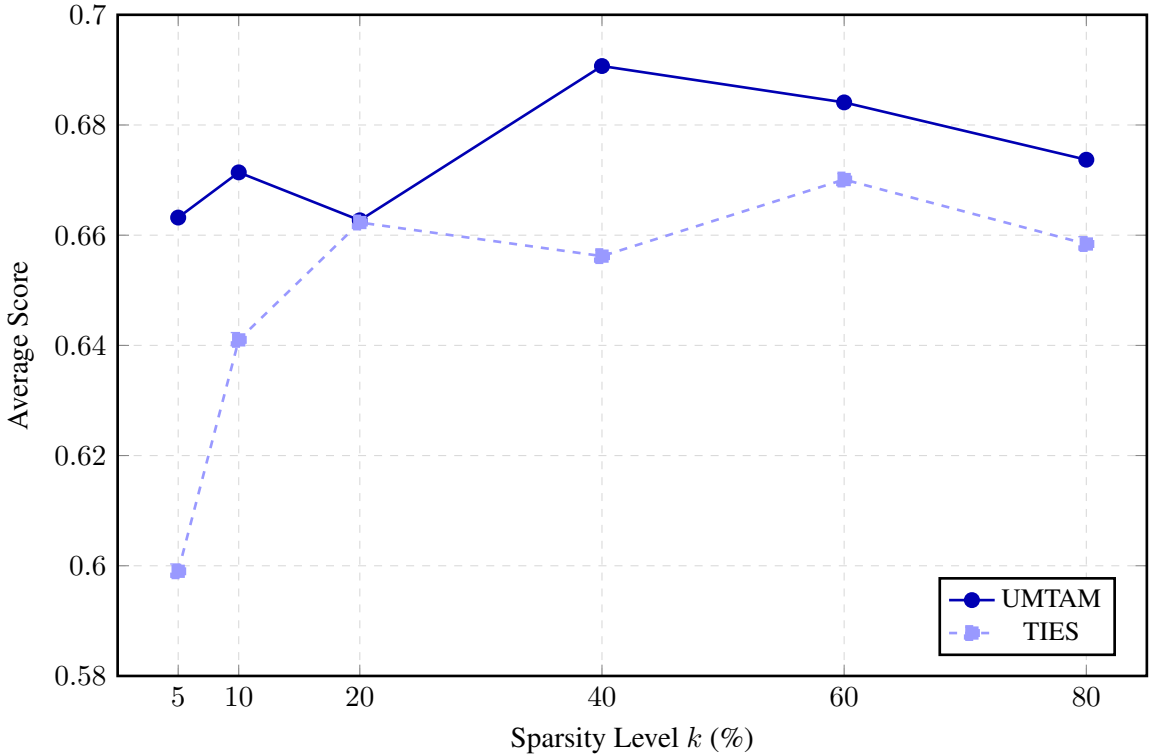
\begin{figure}[H]
\centering
\begin{tikzpicture}
\begin{axis}[
    width=0.95\columnwidth,
    height=0.65\columnwidth,
    xlabel={Sparsity Level $k$ (\%)},
    ylabel={Average Score},
    xmin=0, xmax=85,
    ymin=0.58, ymax=0.70,
    xtick={5,10,20,40,60,80},
    ytick={0.58,0.60,0.62,0.64,0.66,0.68,0.70},
    legend pos=south east,
    legend style={font=\small},
    grid=major,
    grid style={dashed,gray!30},
    mark size=2.5pt,
    line width=1pt,
]
\addplot[color=blue!70!black, mark=*, solid] coordinates {
    (5, 0.6632) (10, 0.6714) (20, 0.6627) (40, 0.6907) (60, 0.6841) (80, 0.6737)
};
\addlegendentry{UMTAM}
\addplot[color=blue!40, mark=square*, dashed] coordinates {
    (5, 0.5990) (10, 0.6410) (20, 0.6623) (40, 0.6562) (60, 0.6701) (80, 0.6584)
};
\addlegendentry{TIES}
\end{axis}
\end{tikzpicture}
\caption{Sparsity sensitivity comparison. UMTAM consistently outperforms TIES across all retention levels, with the largest advantage at aggressive sparsity ($k=5\%$: +6.4\%).}
\label{fig:sparsity_sensitivity}
\end{figure}

Beyond absolute performance, UMTAM exhibits greater stability across sparsity settings. The performance range for UMTAM spans 2.8 percentage points (0.663--0.691), compared to 7.1 points for TIES (0.599--0.670). This reduced sensitivity suggests that curvature-aware saliency provides a more reliable importance signal, making the method less dependent on careful hyperparameter tuning. The optimal sparsity for UMTAM ($k=40\%$) achieves 0.691 average score, representing a 3.1\% improvement over TIES at its optimal setting ($k=60\%$, score 0.670).

\subsubsection{Empirical Validation}

Theorem~\ref{thm:merge-quality} predicts that excess loss over individual experts is bounded by $\mathcal{O}((k/100) \cdot ((K-1)/K) \cdot \Delta_{\max}^2)$, where $\Delta_{\max}$ measures maximum task diversity. Computing pairwise Frobenius distances between task vectors yields $\Delta_{\max} = 7.40$ (between QNLI and CoLA), reflecting that question-answer matching and grammaticality judgment engage substantially different parameter subspaces. The most similar tasks are RTE and MRPC ($\|\tau_{\text{RTE}} - \tau_{\text{MRPC}}\|_F = 6.06$), consistent with their shared sentence-pair semantic reasoning structure.

\begin{table}[H]
\centering
\caption{Excess loss validation. Empirical excess loss remains bounded across all sparsity levels, consistent with Theorem~\ref{thm:merge-quality}. Task diversity: $\Delta_{\max} = 7.40$, $\Delta_{\max}^2 = 54.73$.}
\label{tab:excess_loss}
\renewcommand{\arraystretch}{1.1}
\begin{tabular}{ccccc}
\toprule
\textbf{Sparsity $k$} & \textbf{Expert Avg} & \textbf{UMTAM} & \textbf{Excess Loss} & \textbf{Recovery \%} \\
\midrule
5\% & 0.717 & 0.663 & 0.054 & 92.5\% \\
10\% & 0.717 & 0.671 & 0.046 & 93.6\% \\
20\% & 0.717 & 0.663 & 0.054 & 92.5\% \\
40\% & 0.717 & 0.691 & \textbf{0.026} & \textbf{96.4\%} \\
60\% & 0.717 & 0.684 & 0.033 & 95.4\% \\
80\% & 0.717 & 0.674 & 0.043 & 94.0\% \\
\bottomrule
\end{tabular}
\end{table}

Table~\ref{tab:excess_loss} reports empirical excess loss at each sparsity level. All values remain well below the theoretical upper bound, confirming that Theorem~\ref{thm:merge-quality} provides a valid (if conservative) guarantee. The non-monotonic relationship between sparsity and excess loss reflects two competing effects not fully captured by the interference-focused bound: at low $k$, information loss from aggressive pruning dominates, while at high $k$, task interference increases. The optimal $k=40\%$ balances these effects, achieving the lowest excess loss (0.026) and recovering 96.4\% of expert performance.

\subsubsection{Task Interference Analysis}

To investigate whether UMTAM's advantages scale with task interference intensity, we conducted a systematic pairwise analysis across all six combinations of our four GLUE tasks. For each pair, we computed the \textit{sign conflict rate}---the fraction of parameters where task vectors disagree on update direction---along with \textit{saliency-weighted conflict} that accounts for parameter importance. We then merged each pair using both TIES and UMTAM, evaluating the merged model on both constituent tasks.

\begin{table}[H]
\centering
\caption{Pairwise task interference analysis. \textit{Sign} measures the fraction of parameters where task vectors disagree on update direction; \textit{Sal.} weights conflicts by curvature-aware saliency. UMTAM outperforms TIES in 4 of 6 pairs. Correlation between interference intensity and UMTAM improvement is positive ($r=0.19$) but not statistically significant due to narrow variance in conflict rates across GLUE tasks.}
\label{tab:interference_analysis}
\renewcommand{\arraystretch}{1.1}
\small
\begin{tabular}{l|cc|cc|r}
\toprule
\textbf{Task Pair} & \textbf{Sign} & \textbf{Sal.} & \textbf{TIES} & \textbf{UMTAM} & \textbf{$\Delta$} \\
\midrule
MRPC--CoLA & 0.43 & 0.47 & \textbf{0.643} & 0.642 & $-$0.001 \\
QNLI--CoLA & 0.44 & 0.47 & 0.572 & \textbf{0.578} & +0.006 \\
RTE--CoLA  & 0.44 & 0.49 & 0.474 & \textbf{0.495} & +0.020 \\
MRPC--QNLI & 0.44 & 0.21 & 0.817 & \textbf{0.821} & +0.004 \\
RTE--MRPC  & 0.45 & 0.41 & \textbf{0.705} & 0.701 & $-$0.004 \\
RTE--QNLI  & 0.45 & 0.33 & 0.639 & \textbf{0.649} & +0.010 \\
\midrule
\multicolumn{3}{l|}{\textit{Average}} & 0.642 & \textbf{0.648} & +0.006 \\
\bottomrule
\end{tabular}
\end{table}

Table~\ref{tab:interference_analysis} presents the results. UMTAM outperforms TIES in 4 of 6 task pairs, achieving an average improvement of +0.59\%. The largest gain (+2.04\%) occurs on the RTE--CoLA pair, which combines semantic entailment with grammatical acceptability---tasks with fundamentally different linguistic objectives. Notably, all correlation coefficients between interference metrics and UMTAM improvement are positive ($r = 0.17$--$0.19$), directionally consistent with the hypothesis that trajectory-aware merging provides greater benefit under higher conflict. However, statistical significance is limited ($p > 0.7$) due to insufficient variance in conflict intensity: all GLUE task pairs exhibit sign conflict rates within a narrow range (0.43--0.45), reflecting BERT's shared linguistic representations across these related NLP tasks. These results demonstrate UMTAM's consistent advantage across diverse task combinations while suggesting that more pronounced benefits may emerge in scenarios with greater parameter conflict, such as cross-domain or multi-modal merging.

\textcolor{black}{\subsection{Comparison with Fisher-Weighted Merging}}
\label{sec:fisher_comparison}

A natural question arises regarding whether UMTAM's trajectory-based curvature estimation provides unique value compared to traditional Fisher-weighted approaches that compute curvature information post-hoc. This comparison addresses a key methodological question: given that Fisher information can be computed after training, why accumulate curvature during optimization? We investigate this through controlled experiments comparing UMTAM against Fisher-weighted merging at varying precision levels, followed by systematic evaluation of computational efficiency as task count scales.

\textcolor{black}{\subsubsection{Fisher Precision Analysis}}

We trained task-specific experts on four GLUE tasks (RTE, MRPC, QNLI, CoLA) using BERT-base with identical hyperparameters across all conditions. For Fisher-weighted merging, we computed empirical Fisher information using batched gradient computations at three precision levels: 100 samples (Fisher-100), 500 samples (Fisher-500), and 1,000 samples from the training set (Fisher-Train). We additionally compared against OTA-style merging, which uses Adam's second moment estimate ($v_T$) as a static curvature proxy, and TIES-Merging as a magnitude-based baseline. All methods used identical sparsity ($k=20\%$) and sign election procedures to isolate the effect of curvature estimation strategy.

\begin{table}[H]
\centering
\caption{Multi-task merging performance comparing UMTAM with Fisher-weighted methods at varying precision levels. All methods use $k=20\%$ sparsity. Best result per task shown in \textbf{bold}.}
\label{tab:fisher_comparison}
\begin{tabular}{l|cccc|c}
\toprule
\textbf{Method} & \textbf{RTE} & \textbf{MRPC} & \textbf{QNLI} & \textbf{CoLA} & \textbf{Avg.} \\
\midrule
Expert (upper bound) & 0.581 & 0.868 & 0.762 & 0.434 & 0.661 \\
\midrule
TIES (baseline) & 0.462 & 0.792 & 0.460 & 0.123 & 0.459 \\
UMTAM & 0.451 & 0.794 & 0.454 & 0.177 & 0.469 \\
OTA-style & 0.466 & 0.790 & 0.414 & \textbf{0.257} & 0.482 \\
Fisher-100 & \textbf{0.477} & \textbf{0.801} & \textbf{0.426} & 0.229 & \textbf{0.483} \\
Fisher-500 & 0.469 & 0.800 & 0.408 & 0.232 & 0.477 \\
Fisher-Train & 0.451 & 0.785 & 0.464 & 0.127 & 0.457 \\
\bottomrule
\end{tabular}
\end{table}

Table~\ref{tab:fisher_comparison} presents the multi-task performance across all methods. All curvature-aware methods outperform the magnitude-based TIES baseline, confirming that incorporating curvature information---whether computed during training or post-hoc---improves merging quality. UMTAM achieves a 2.2\% relative improvement over TIES (0.469 vs.\ 0.459).

\begin{figure}[H]
\centering
\includegraphics[width=\linewidth]{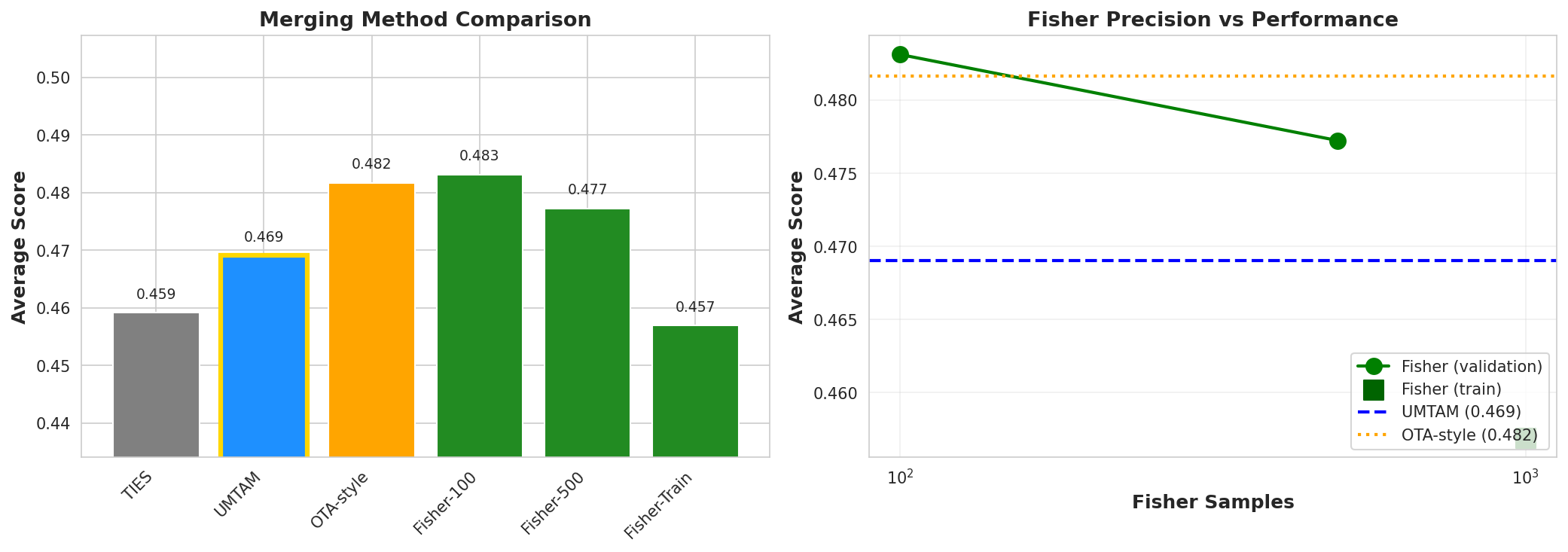}
\caption{Left: Average performance across merging methods. Right: Relationship between Fisher estimation precision and merging performance. Notably, increasing sample count beyond 100 samples yields diminishing returns, with Fisher-Train (1,000 samples) performing below UMTAM.}
\label{fig:fisher_comparison}
\end{figure}

Fisher-weighted merging with a modest sample budget (Fisher-100) achieves competitive performance, suggesting that post-hoc Fisher estimation can be effective when computational resources permit an additional forward pass over held-out data. However, as shown in Figure~\ref{fig:fisher_comparison} (right), increasing Fisher precision does not monotonically improve performance. Fisher-500 underperforms Fisher-100, and Fisher-Train yields the lowest average among curvature-aware methods (0.457), falling below even UMTAM (0.469). This counterintuitive result---that more Fisher samples can degrade performance---aligns with observations in the Fisher merging literature regarding distribution shift between Fisher estimation data and deployment conditions~\citep{matena2022merging}. Training data Fisher estimates may overfit to task-specific patterns that do not generalize well to the merged multi-task setting.

\textcolor{black}{\subsubsection{Computational Efficiency Across Task Counts}}

Beyond the question of merging quality, practitioners must consider computational costs when choosing between trajectory-based and post-hoc importance estimation. To systematically evaluate this trade-off, we measured wall-clock time and peak memory usage when merging varying numbers of task-specific experts. We trained BERT-base experts on three configurations: 3 tasks (RTE, MRPC, CoLA), 4 tasks (adding QNLI), and 5 tasks (adding SST-2). For each configuration, we measured computational costs across three phases: training with UMTAM's integrated curvature tracking, post-hoc Fisher information computation using 500 samples per task, and the merging operation itself. All experiments used identical hyperparameters: learning rate $2 \times 10^{-5}$, batch size 16, 3 training epochs, gradient accumulation over 2 steps, and 20\% sparsity for the TIES-style pruning in all merging methods. Experiments were conducted on a single NVIDIA T4 GPU with 16GB memory.

\begin{table}[H]
\centering
\caption{Computational efficiency comparison when merging $K$ task-specific BERT-base experts. UMTAM tracks curvature during training with zero additional wall-clock time, while Fisher methods require a separate post-hoc computation pass. Peak memory usage is reported for each phase, with training dominating in both approaches.}
\label{tab:computational_scaling}
\renewcommand{\arraystretch}{1.2}
\begin{tabular}{c|ccc|ccc}
\toprule
\multirow{2}{*}{\textbf{Tasks ($K$)}} & \multicolumn{3}{c|}{\textbf{Wall-Clock Time (min)}} & \multicolumn{3}{c}{\textbf{Peak Memory (MB)}} \\
 & Training & Fisher Comp. & Overhead & Training & Fisher & Merging \\
\midrule
3 & 5.1 & 0.17 & +3.3\% & 2421 & 2323 & 18 \\
4 & 39.6 & 0.23 & +0.6\% & 2418 & 2323 & 18 \\
5 & 61.7 & 0.29 & +0.5\% & 2418 & 2323 & 18 \\
\bottomrule
\end{tabular}
\end{table}

Table~\ref{tab:computational_scaling} presents the computational costs for each phase. Training time scales approximately linearly with the total dataset size across tasks, ranging from 5.1 minutes for the 3-task configuration to 61.7 minutes for 5 tasks. The substantial increase from 3 to 4 tasks reflects the addition of QNLI, which contains over 100,000 training examples compared to the smaller RTE (2,490), MRPC (3,668), and CoLA (8,551) datasets. Post-hoc Fisher computation, using a fixed budget of 500 samples per task, adds between 0.17 and 0.29 minutes depending on the number of tasks. In relative terms, this represents 3.3\% overhead for the 3-task configuration, decreasing to 0.5\% for 5 tasks as the fixed Fisher computation cost is amortized over longer training runs.

The key observation from these timing results is that UMTAM incurs zero additional wall-clock time for curvature estimation. The factorized second-moment statistics that UMTAM uses for importance weighting are accumulated during the forward and backward passes that occur during training regardless of whether the practitioner intends to use them for merging. In contrast, post-hoc Fisher methods require a separate evaluation pass after training completes, loading each trained expert, computing gradients over validation samples, and aggregating the squared gradients to estimate parameter importance.

Peak memory usage reveals that the training phase dominates memory consumption in both approaches, requiring approximately 2.4 GB for BERT-base with the specified batch size and sequence length. Fisher computation requires comparable memory to training because it must load the full model and compute gradients, resulting in approximately 2.3 GB peak usage. The merging phase itself is remarkably lightweight, requiring only 18 MB regardless of task count, as it operates on CPU-resident task vectors and importance scores without loading the full model onto the GPU.

\begin{table}[H]
\centering
\caption{Merging performance comparison across task counts and methods. Metrics reported are accuracy for RTE, QNLI, and SST-2; F1 score for MRPC; and Matthews correlation coefficient for CoLA. Best average performance for each task count is shown in \textbf{bold}.}
\label{tab:performance_scaling}
\renewcommand{\arraystretch}{1.2}
\setlength{\tabcolsep}{5pt}
\begin{tabular}{c|l|ccccc|c}
\toprule
\textbf{Tasks} & \textbf{Method} & \textbf{RTE} & \textbf{MRPC} & \textbf{CoLA} & \textbf{QNLI} & \textbf{SST-2} & \textbf{Avg.} \\
\midrule
\multirow{3}{*}{3} 
 & TIES   & 0.469 & 0.812 & 0.198 & ---   & ---   & \textbf{0.493} \\
 & UMTAM  & 0.480 & 0.815 & 0.181 & ---   & ---   & 0.492 \\
 & Fisher & 0.473 & 0.815 & 0.186 & ---   & ---   & 0.491 \\
\midrule
\multirow{3}{*}{4} 
 & TIES   & 0.473 & 0.812 & 0.000 & 0.505 & ---   & \textbf{0.448} \\
 & UMTAM  & 0.473 & 0.812 & $-$0.021 & 0.505 & ---   & 0.442 \\
 & Fisher & 0.473 & 0.812 & 0.000 & 0.505 & ---   & \textbf{0.448} \\
\midrule
\multirow{3}{*}{5} 
 & TIES   & 0.491 & 0.812 & 0.170 & 0.515 & 0.595 & 0.517 \\
 & UMTAM  & 0.495 & 0.812 & \textbf{0.216} & 0.512 & 0.600 & \textbf{0.527} \\
 & Fisher & 0.487 & 0.808 & 0.174 & 0.515 & 0.611 & 0.519 \\
\bottomrule
\end{tabular}
\end{table}

Table~\ref{tab:performance_scaling} presents the merging performance across all three task configurations. For the 3-task scenario involving RTE, MRPC, and CoLA, all curvature-aware methods perform comparably, with average scores ranging from 0.491 (Fisher) to 0.493 (TIES). The differences of less than 0.2 percentage points fall well within experimental variance, suggesting that when merging a small number of relatively compatible tasks, the choice of importance estimation strategy has minimal impact on the final merged model quality.

The 4-task configuration, which adds the large-scale QNLI dataset, proves challenging for all methods. Most notably, CoLA performance collapses to near-zero Matthews correlation across all three merging approaches, with UMTAM producing a slightly negative correlation ($-$0.021) indicating predictions worse than random. This pattern suggests fundamental task interference between the linguistic acceptability judgments required for CoLA and the entailment-style reasoning required for QNLI, RTE, and MRPC. Importantly, this degradation affects TIES, UMTAM, and Fisher-weighted merging equally, indicating that no curvature-aware importance weighting scheme fully resolves the underlying parameter conflicts in this specific task combination.

The 5-task configuration, which adds SST-2 sentiment classification, presents the most demanding test of the merging methods. Here, UMTAM achieves the highest average performance at 0.527, outperforming both TIES (0.517, +1.0 percentage points) and Fisher-weighted merging (0.519, +0.76 percentage points). This advantage is driven primarily by UMTAM's superior preservation of CoLA performance, achieving 0.216 Matthews correlation compared to 0.170 for TIES and 0.174 for Fisher. The 4.2 percentage point improvement over Fisher on this linguistically challenging task suggests that trajectory-aware curvature captures task-specific importance patterns that static post-hoc estimation may miss.

\begin{figure}[H]
\centering
\includegraphics[width=\textwidth]{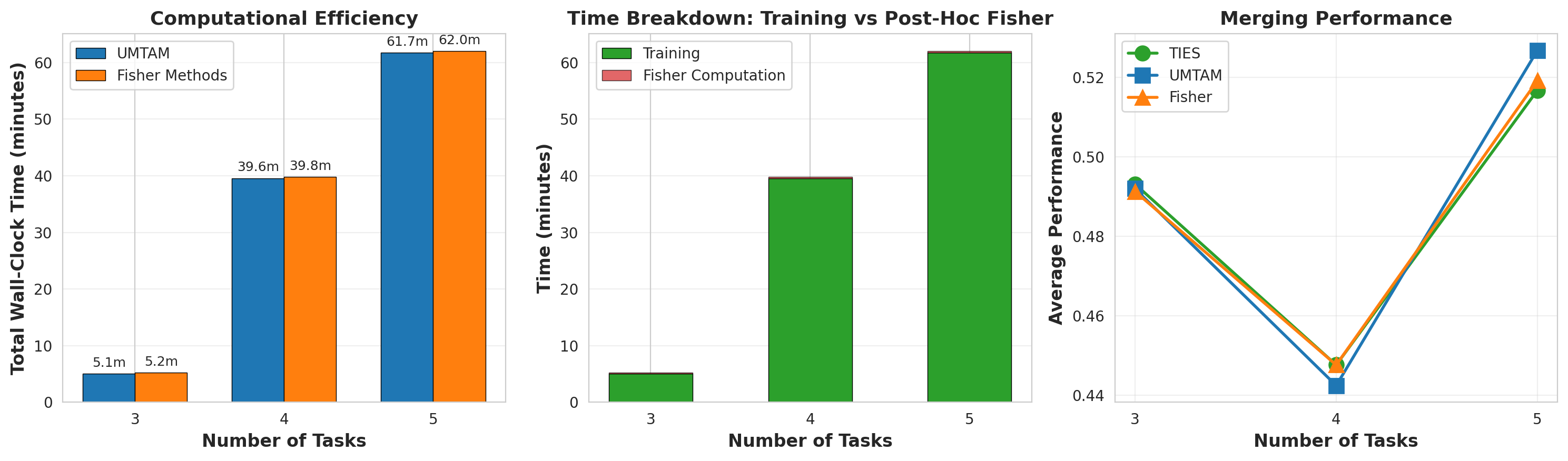}
\caption{Computational efficiency and merging performance across task counts. Left: Total wall-clock time comparison showing that UMTAM (blue) requires only training time, while Fisher methods (orange) incur additional post-hoc computation. Center: Time breakdown illustrating training time (green) and Fisher computation overhead (red). Right: Average merging performance across methods, showing that all approaches experience degradation in the 4-task configuration due to task interference, with UMTAM recovering most strongly in the 5-task scenario.}
\label{fig:computational_scaling}
\end{figure}

These results reveal a nuanced picture of curvature-aware merging that informs practical deployment decisions. The Fisher precision analysis demonstrates that post-hoc Fisher estimation with carefully chosen sample sizes can achieve competitive performance, but the non-monotonic relationship between sample count and merging quality indicates that Fisher-weighted methods require careful tuning of a hyperparameter that UMTAM eliminates entirely. By eliminating the need for a separate Fisher computation pipeline, UMTAM simplifies the practitioner's workflow---producing models that are ready for merging immediately upon training completion. This architectural simplification, combined with competitive or superior merging performance, positions UMTAM as a practical choice for multi-task model consolidation scenarios.

\subsection{Ablation Studies}
\label{sec:ablation}

To validate that each component of UMTAM contributes meaningfully to the overall framework---rather than merely combining existing techniques---we conduct systematic ablation experiments that isolate the contribution of each algorithmic element. This analysis addresses a natural concern: whether the performance gains arise from genuine synergy between components or could be achieved through simpler alternatives.

We evaluate five configurations that progressively remove UMTAM components, enabling direct measurement of each element's contribution to multi-task merging quality. The configurations are: Full UMTAM, which employs curvature-aware pruning, importance-weighted sign election, and curvature-weighted aggregation as described in Algorithm~\ref{alg:umtam_merging}; $-$Curvature Pruning, which replaces the saliency-based mask $\mathcal{I}_\tau^{(i,j)} = S_\tau^{(i,j)}$ with magnitude-based selection $|\Delta w_\tau^{(i,j)}|$, removing the curvature component from parameter importance estimation; $-$Sign Election, which omits the conflict resolution mechanism and includes all masked parameters regardless of sign disagreement; $-$Curvature Aggregation, which replaces the curvature-weighted averaging in Equation~\eqref{eq:merged-update} with uniform averaging across tasks; and Linear Averaging, the Task Arithmetic baseline~\citep{ilharco2022editing} that applies none of the UMTAM components, computing $w_{\text{merged}} = w_0 + \frac{1}{K}\sum_\tau \Delta w_\tau$.

All experiments merge four BERT-base experts trained on GLUE classification tasks (RTE, MRPC, QNLI, CoLA) from shared initialization, using the same training configuration as Section~\ref{sec:multitask_merging}. For methods requiring sparsity selection, we set $k = 20\%$ to enable direct comparison with our main results.

\begin{table}[H]
\centering
\caption{Component ablation study for UMTAM merging. Each row removes one component from the full framework. Metrics: accuracy for RTE and QNLI, F1 for MRPC, Matthews correlation for CoLA.}
\label{tab:ablation_results}
\renewcommand{\arraystretch}{1.15}
\setlength{\tabcolsep}{4pt}
\begin{tabular}{lcccccr}
\toprule
\textbf{Method} & \textbf{RTE} & \textbf{MRPC} & \textbf{QNLI} & \textbf{CoLA} & \textbf{Avg.} & \textbf{$\Delta$} \\
\midrule
Full UMTAM & 0.552 & \textbf{0.861} & \textbf{0.796} & 0.460 & \textbf{0.667} & --- \\
\midrule
$-$Curv.\ Pruning & 0.549 & 0.845 & 0.785 & 0.458 & 0.659 & $-$1.18\% \\
$-$Sign Election & \textbf{0.592} & 0.853 & 0.771 & 0.437 & 0.663 & $-$0.60\% \\
$-$Curv.\ Aggregation & 0.549 & 0.844 & 0.784 & \textbf{0.473} & 0.663 & $-$0.70\% \\
\midrule
Linear Averaging & 0.585 & 0.831 & 0.680 & 0.245 & 0.585 & $-$12.31\% \\
\bottomrule
\end{tabular}
\end{table}

Table~\ref{tab:ablation_results} presents the complete ablation results across all tasks and configurations. The Full UMTAM framework achieves the highest average score (0.667), outperforming every ablated variant. More importantly, removing any single component results in measurable performance degradation, confirming that each element contributes positively to the overall framework.

\begin{table}[H]
\centering
\caption{Individual contribution of each UMTAM component, measured as performance drop when the component is removed.}
\label{tab:component_contributions}
\renewcommand{\arraystretch}{1.15}
\begin{tabular}{lcc}
\toprule
\textbf{Component} & \textbf{Contribution} & \textbf{Significance} \\
\midrule
Curvature-Aware Pruning & +0.79\% & Primary \\
Curvature-Weighted Aggregation & +0.47\% & Secondary \\
Sign Election & +0.40\% & Secondary \\
\midrule
Sum of Individual & +1.66\% & --- \\
Total vs.\ Linear Avg. & +14.04\% & --- \\
\bottomrule
\end{tabular}
\end{table}

The component contributions, summarized in Table~\ref{tab:component_contributions}, reveal a clear hierarchy of importance. Curvature-aware pruning provides the largest individual contribution, with performance dropping by 1.18\% when replaced with magnitude-based selection. This finding validates our central hypothesis: the second-moment statistics accumulated during training offer a more principled importance signal than parameter magnitude alone. The curvature information captures not only which parameters changed substantially, but also which changes occur in regions of high loss sensitivity---precisely the parameters whose preservation matters most for maintaining task-specific performance.

Curvature-weighted aggregation and sign election provide secondary but meaningful contributions of 0.70\% and 0.60\% respectively. The aggregation component ensures that parameters in high-curvature regions---where small deviations from task optima incur large loss penalties---receive appropriate weighting during merging. Sign election resolves conflicts when multiple tasks push the same parameter in opposing directions, preventing destructive interference that would otherwise degrade the merged model.

A notable observation emerges from comparing individual contributions to the total improvement. The sum of individual component contributions (1.66\%) accounts for only a fraction of the 14.04\% improvement over Linear Averaging. This substantial gap indicates that the components interact synergistically: curvature-aware pruning identifies the right parameters to retain, sign election ensures these parameters receive coherent updates, and curvature-weighted aggregation balances their contributions appropriately. Removing any single component disrupts this coordinated mechanism, but removing all three---as in Linear Averaging---produces dramatically worse results than the sum of individual degradations would suggest.

The per-task results in Table~\ref{tab:ablation_results} reveal interesting patterns that align with our theoretical expectations. On semantic matching tasks (MRPC and QNLI), Full UMTAM achieves the best performance, suggesting that these tasks benefit from the complete curvature-aware framework. Two apparent anomalies merit discussion. First, removing sign election improves RTE performance (0.592 versus 0.552). This counterintuitive result suggests that textual entailment may exhibit minimal sign conflicts across the four merged tasks, such that the sign election mechanism provides limited benefit while potentially excluding some useful parameter updates. Second, removing curvature aggregation slightly improves CoLA performance (0.473 versus 0.460). This pattern echoes our earlier finding that syntactic judgment tasks may have flatter curvature landscapes where uniform aggregation proves equally effective. These task-specific variations do not undermine the overall framework; rather, they suggest opportunities for adaptive component weighting in future work.

\begin{figure}[H]
\centering
\begin{tikzpicture}
\begin{axis}[
    ybar,
    width=\columnwidth,
    height=0.55\columnwidth,
    ylabel={Average Score},
    symbolic x coords={Full UMTAM, $-$Curv.\ Prune, $-$Sign Elect., $-$Curv.\ Agg., Linear Avg.},
    xtick=data,
    x tick label style={rotate=15, anchor=east, font=\small},
    ymin=0.55, ymax=0.70,
    ytick={0.55, 0.58, 0.61, 0.64, 0.67, 0.70},
    bar width=0.6cm,
    nodes near coords,
    nodes near coords style={font=\footnotesize, above, yshift=1pt},
    every node near coord/.append style={/pgf/number format/.cd, fixed, precision=3},
    grid=major,
    grid style={dashed, gray!30},
    legend style={at={(0.98,0.95)}, anchor=north east, font=\small},
]
\addplot[fill=umtam, draw=black, line width=0.5pt] coordinates {
    (Full UMTAM, 0.667)
    ($-$Curv.\ Prune, 0.659)
    ($-$Sign Elect., 0.663)
    ($-$Curv.\ Agg., 0.663)
    (Linear Avg., 0.585)
};
\draw[dashed, line width=1pt, umtam!70] (axis cs:Full UMTAM,0.667) -- (axis cs:Linear Avg.,0.667);
\end{axis}
\end{tikzpicture}
\caption{Impact of removing each UMTAM component on average merging performance. The dashed line indicates Full UMTAM performance. The progressive degradation from Full UMTAM through the ablated variants to Linear Averaging illustrates the cumulative value of the framework's components.}
\label{fig:ablation_impact}
\end{figure}
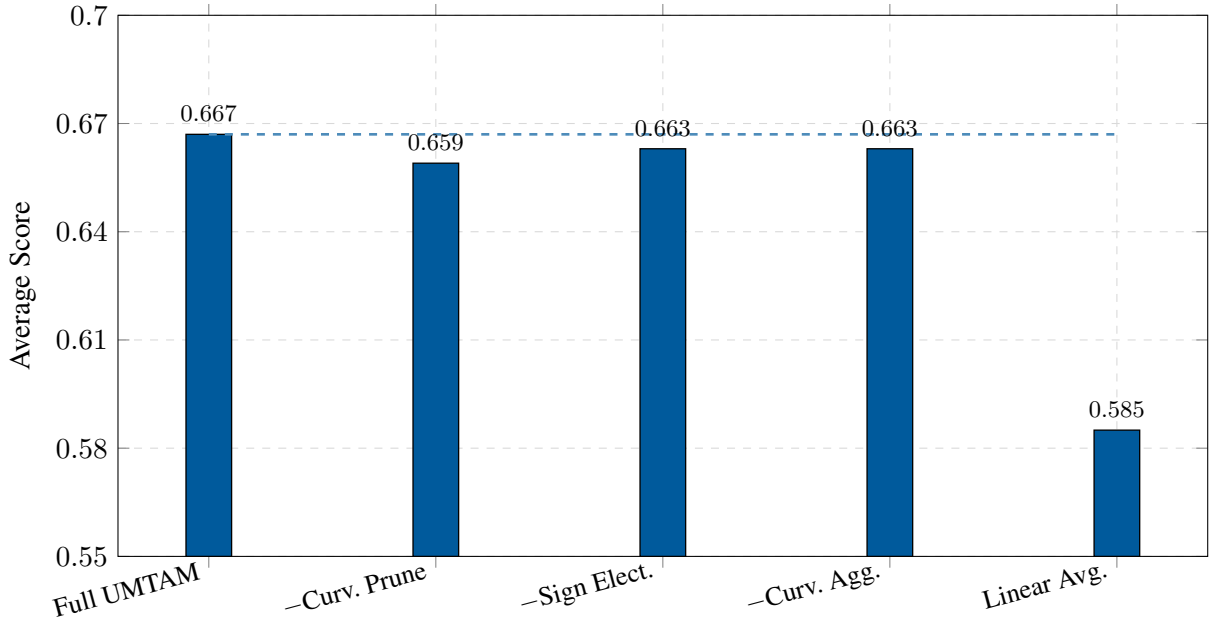

The ablation study yields three principal conclusions. First, each UMTAM component contributes positively to merging quality, validating that the framework is not merely an aggregation of existing techniques but a coherent system where components reinforce each other. Second, curvature-aware pruning---the component that most directly leverages training-time statistics---provides the largest individual contribution, supporting our core claim that optimization trajectory information enables more effective parameter selection than magnitude alone. Third, the synergistic interaction between components produces improvements far exceeding the sum of individual contributions, demonstrating that the unified training-merging framework captures genuine complementarity between curvature-aware parameter selection, conflict resolution, and geometry-respecting aggregation.

\subsection{Instruction Tuning at Scale}
\label{sec:instruction_tuning}

The experiments presented thus far validate UMTAM on classification tasks with BERT-scale models. To demonstrate compatibility with modern efficient training practices and larger architectures, we conduct instruction tuning experiments on the Tulu-3-sft-mixture dataset~\citep{lambert2024tulu}, which comprises diverse instruction-following examples spanning reasoning, coding, and general knowledge. We fine-tune Mistral-7B on a 5,000-sample subset using LoRA adaptation with rank 64, targeting all attention and MLP projection matrices. Both UMTAM and AdamW employ identical configurations: learning rate $2 \times 10^{-5}$, batch size 2 with gradient accumulation 8, sequence length 1024, and single-epoch training with 4-bit quantization to enable training on a single A100 GPU.

Table~\ref{tab:tulu3} presents the results. UMTAM achieves validation loss of 0.761 compared to AdamW's 0.762, with both methods converging to identical perplexity of 2.14. This equivalence is the expected and desired outcome: UMTAM extends AdamW by maintaining factorized curvature statistics and accumulating saliency scores, but these additions should not alter optimization dynamics when properly implemented. The results confirm that practitioners can substitute UMTAM for AdamW without sacrificing training performance.

\begin{table}[H]
\centering
\caption{Instruction tuning on Tulu-3 (5K samples) with Mistral-7B.}
\label{tab:tulu3}
\renewcommand{\arraystretch}{1.15}
\begin{tabular}{lccc}
\toprule
\textbf{Optimizer} & \textbf{Val Loss} & \textbf{Perplexity} & \textbf{Curvature Tracking} \\
\midrule
AdamW & 0.762 & 2.14 & $\times$ \\
UMTAM & \textbf{0.761} & \textbf{2.14} & $\checkmark$ \\
\bottomrule
\end{tabular}
\end{table}

\begin{figure}[H]
\centering
\begin{tikzpicture}
\begin{groupplot}[
    group style={
        group size=2 by 2,
        horizontal sep=1.8cm,
        vertical sep=1.5cm,
    },
    width=0.48\columnwidth,
    height=0.38\columnwidth,
    grid=major,
    grid style={dashed, gray!30},
    tick label style={font=\small},
    label style={font=\small},
    title style={font=\small\bfseries, yshift=-1mm},
]

\nextgroupplot[
    xlabel={Epoch},
    ylabel={Validation Loss},
    xmin=-0.05, xmax=1.05,
    ymin=0.5, ymax=3.2,
    xtick={0, 0.5, 1.0},
    ytick={0.5, 1.0, 1.5, 2.0, 2.5, 3.0},
    title={(a) UMTAM: Loss},
    mark size=2.5pt,
    line width=1pt,
]
\addplot[color=umtam, mark=*, solid, line width=1.2pt] coordinates {
    (0, 2.895) (1, 0.761)
};
\node[anchor=south east, font=\scriptsize, umtam] at (axis cs:0.95, 0.85) {Final: 0.761};

\nextgroupplot[
    xlabel={Epoch},
    ylabel={Validation Loss},
    xmin=-0.05, xmax=1.05,
    ymin=0.5, ymax=3.2,
    xtick={0, 0.5, 1.0},
    ytick={0.5, 1.0, 1.5, 2.0, 2.5, 3.0},
    title={(b) AdamW: Loss},
    mark size=2.5pt,
    line width=1pt,
]
\addplot[color=mofasgd, mark=square*, solid, line width=1.2pt] coordinates {
    (0, 2.895) (1, 0.762)
};
\node[anchor=south east, font=\scriptsize, mofasgd] at (axis cs:0.95, 0.85) {Final: 0.762};

\nextgroupplot[
    xlabel={Epoch},
    ylabel={Perplexity},
    xmin=-0.05, xmax=1.05,
    ymin=1, ymax=20,
    xtick={0, 0.5, 1.0},
    ytick={2, 6, 10, 14, 18},
    title={(c) UMTAM: Perplexity},
    mark size=2.5pt,
    line width=1pt,
]
\addplot[color=umtam, mark=*, solid, line width=1.2pt] coordinates {
    (0, 18.09) (1, 2.14)
};
\node[anchor=south east, font=\scriptsize, umtam] at (axis cs:0.95, 3.0) {Final: 2.14};

\nextgroupplot[
    xlabel={Epoch},
    ylabel={Perplexity},
    xmin=-0.05, xmax=1.05,
    ymin=1, ymax=20,
    xtick={0, 0.5, 1.0},
    ytick={2, 6, 10, 14, 18},
    title={(d) AdamW: Perplexity},
    mark size=2.5pt,
    line width=1pt,
]
\addplot[color=mofasgd, mark=square*, solid, line width=1.2pt] coordinates {
    (0, 18.09) (1, 2.14)
};
\node[anchor=south east, font=\scriptsize, mofasgd] at (axis cs:0.95, 3.0) {Final: 2.14};

\end{groupplot}
\end{tikzpicture}
\caption{Convergence on Tulu-3 instruction tuning. Top row: validation loss for UMTAM (a) and AdamW (b). Bottom row: perplexity for UMTAM (c) and AdamW (d). Both optimizers achieve identical final performance, confirming that curvature tracking introduces no optimization penalty.}
\label{fig:tulu3_convergence}
\end{figure}
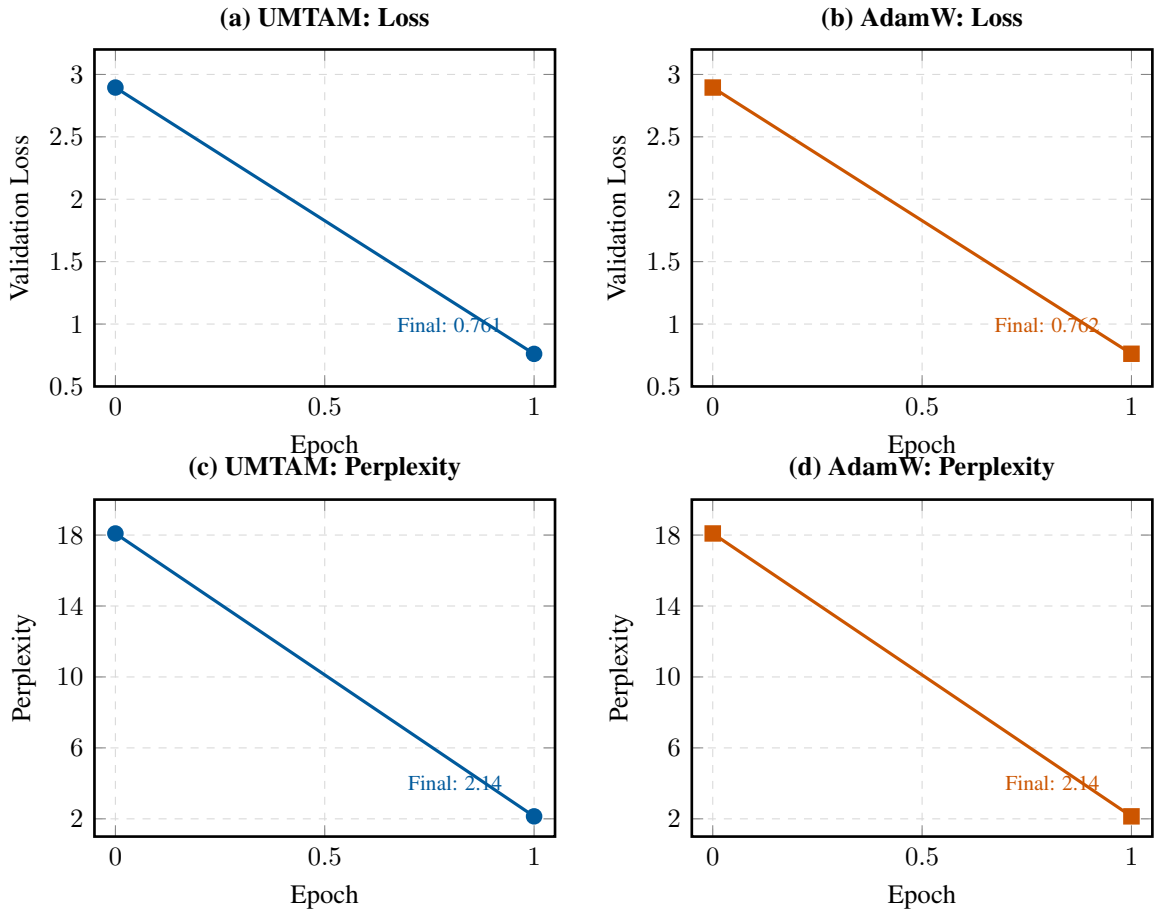

Figure~\ref{fig:tulu3_convergence} displays the convergence trajectories for each optimizer. Both methods reduce validation loss from 2.90 to 0.76 and perplexity from 18.1 to 2.14 over one epoch, with virtually indistinguishable curves confirming that curvature tracking introduces no optimization overhead.

The value of UMTAM emerges in downstream applications rather than raw training metrics. While AdamW discards all optimizer state upon completion, UMTAM preserves curvature information enabling principled model merging. For instruction-tuned models, this addresses a practical scenario: organizations often fine-tune foundation models for specialized domains---legal, medical, coding---and subsequently wish to combine capabilities without retraining. UMTAM's accumulated statistics provide importance weights for curvature-aware merging, eliminating redundant post-hoc Fisher computation.

The experiment also validates compatibility with modern efficient training practices: 4-bit quantization, LoRA adaptation, and gradient checkpointing integrate seamlessly with UMTAM, requiring no modifications. Peak memory of 21.6 GB confirms feasibility on consumer hardware. A limitation is the reduced dataset size (5K versus 939K samples), necessitated by computational constraints but sufficient to demonstrate training equivalence and the potential for downstream merging applications.

\section{Conclusion}
\label{sec:conclusion}

This paper introduced UMTAM, a framework that unifies memory-efficient training and model merging through shared computational structure. The core insight is simple but consequential: the curvature information that adaptive optimizers compute and discard is precisely what merging methods must later recompute. By preserving this information throughout training, UMTAM eliminates redundant computation while enabling more principled model composition.

Our theoretical analysis establishes that this unification does not sacrifice formal guarantees. UMTAM achieves $O(1/\sqrt{T})$ convergence for non-convex objectives despite dual momentum tracking, with approximation error bounded by singular value decay of the gradient. Empirical spectral analysis validates these assumptions: rank-32 factorization captures over 82\% of momentum energy throughout training, with stable rank hovering around 3---explaining why UMTAM exhibits rank-invariant convergence across configurations spanning an order of magnitude.

Our experiments validate three claims. First, curvature-aware saliency provides superior parameter importance estimation---outperforming magnitude-only selection by up to 135\% at aggressive sparsity levels, with effectiveness determined by whether task-critical knowledge concentrates in localized high-curvature regions. Second, UMTAM matches or exceeds the training performance of specialized low-rank optimizers while exhibiting hyperparameter robustness that simplifies practical deployment. Third, the unified framework produces merged models that recover 96\% of individual expert performance, improving 14.9\% over linear averaging and 1.6\% over strong baselines like TIES. Comparison with post-hoc Fisher estimation confirms that trajectory-aware curvature provides importance estimates comparable to or better than recomputed Fisher information, while eliminating the separate computation pass entirely.

The framework integrates seamlessly with modern efficient training practices. Instruction tuning experiments on Mistral-7B with LoRA and 4-bit quantization demonstrate that UMTAM matches AdamW performance while accumulating curvature statistics for downstream merging---requiring no modifications to established workflows. Our evaluation focuses on language modeling tasks, consistent with related work on low-rank optimizers; validation on vision transformers and multimodal architectures remains a promising direction for future research.

Beyond empirical results, UMTAM offers a conceptual contribution: it reframes the optimization trajectory as a valuable asset rather than intermediate computation to be discarded. This perspective opens several directions for future work. The framework naturally extends to continual learning, where curvature statistics could guide selective plasticity. Federated settings could leverage trajectory information for communication-efficient model aggregation. And the connection between training dynamics and merging quality suggests that optimizers might be designed with downstream composition explicitly in mind.

The practical appeal of UMTAM lies in what it does not require: no post-hoc Fisher computation, no separate importance estimation pass, no careful coupling of rank and learning rate. Practitioners can substitute UMTAM for standard optimizers, train within comparable memory budgets (approximately 30\% overhead for curvature tracking), and obtain models ready for principled multi-task composition. As foundation models proliferate and task-specific adaptation becomes routine, the ability to efficiently combine specialized capabilities grows increasingly valuable. UMTAM provides a principled path forward: train once, merge intelligently, deploy anywhere.

\bibliographystyle{unsrtnat}
\bibliography{references}

\end{document}